\newcommand{\R}{\mathbb{R}}
\newcommand{\e}{\begin{equation}}
\newcommand{\ee}{\end{equation}}
\newcommand{\en}{\begin{equation*}}
\newcommand{\een}{\end{equation*}}
\newcommand{\eqn}{\begin{eqnarray}}
\newcommand{\eeqn}{\end{eqnarray}}
\newcommand{\bmat}{\begin{bmatrix}}
\newcommand{\emat}{\end{bmatrix}}
\DeclareMathAlphabet\mathbfcal{OMS}{cmsy}{b}{n}
\newcommand{\mb}{\boldsymbol}
\newcommand{\mc}{\mathcal}
\newcommand{\bb}{\mathbb}
\newcommand{\vct}[1]{\boldsymbol{#1}}
\newcommand{\mtx}[1]{\boldsymbol{#1}}
\newcommand{\trace}{\operatorname{trace}}
\newcommand{\rank}{\operatorname{rank}}
\newcommand{\wh}{\widehat}
\newcommand{\wt}{\widetilde}
\newcommand{\ol}{\overline}
\newcommand{\NC}{$\mc {NC}$}
\newcommand{\norm}[2]{\left\| #1 \right\|_{#2}}
\newcommand{\abs}[1]{\left| #1 \right|}
\newcommand{\parans}[1]{\left(#1\right)}
\newcommand{\sets}[1]{\left\{#1\right\}}
\newcommand{\innerprod}[2]{\left\langle #1,  #2 \right\rangle}
\newcommand*\samethanks[1][\value{footnote}]{\footnotemark[#1]}
\newcommand{\calP}{\mathcal{P}}
\newcommand{\va}{\vct{a}}
\newcommand{\vb}{\vct{b}}
\newcommand{\vh}{\vct{h}}
\newcommand{\vu}{\vct{u}}
\newcommand{\vv}{\vct{v}}
\newcommand{\vw}{\vct{w}}
\newcommand{\vx}{\vct{x}}
\newcommand{\vy}{\vct{y}}
\newcommand{\valpha}{\vct{\alpha}}
\newcommand{\vtheta}{\vct{\theta}}
\newcommand{\vzero}{\vct{0}}
\newcommand{\vone}{\vct{1}}
\newcommand{\mA}{\mtx{A}}
\newcommand{\mB}{\mtx{B}}
\newcommand{\mG}{\mtx{G}}
\newcommand{\mH}{\mtx{H}}
\newcommand{\mP}{\mtx{P}}
\newcommand{\mR}{\mtx{R}}
\newcommand{\mU}{\mtx{U}}
\newcommand{\mV}{\mtx{V}}
\newcommand{\mW}{\mtx{W}}
\newcommand{\mY}{\mtx{Y}}
\newcommand{\mZ}{\mtx{Z}}
\newcommand{\mDelta}{\mtx{\Delta}}
\newcommand{\mSigma}{\mtx{\Sigma}}
\newcommand{\mId}{\mtx{I}}
\newlength{\imgwidth}
\newcommand{\revise}[1]{\textcolor{black}{#1}}
\newcommand{\twoCol}[2]{\ifthenelse{\boolean{twoColVersion}} {#1} {#2} }
\pgfplotsset{compat=1.15}
 \newtheorem{theorem}{Theorem}[section]
\newtheorem{lemma}[theorem]{Lemma}
\newtheorem{proposition}[theorem]{Proposition}
\newtheorem{definition}[theorem]{Definition}
\newcommand{ \Brac }[1]{\left\lbrace #1 \right\rbrace}
\newcommand{ \brac }[1]{\left[ #1 \right]}
\newcommand{ \paren }[1]{ \left( #1 \right) }
\newcommand{\lambdaW}{\lambda_{\mW}}
\newcommand{\lambdaH}{\lambda_{\mH}}
\newcommand{\lambdab}{\lambda_{\vb}}
 \def \endprf{\hfill {\vrule height6pt width6pt depth0pt}\medskip}
\newenvironment{proof}{\noindent {\bf Proof} }{\endprf\par}
\title{On the Optimization Landscape of Neural Collapse under MSE Loss: Global Optimality with Unconstrained Features}
\begin{document}

% make the title area
\author[$\sharp$]{Jinxin Zhou\thanks{The first two authors contributed to this work equally.}$^,$}
\author[$\Box$]{Xiao Li\samethanks$^,$}
\author[$\Diamond$]{Tianyu Ding}
\author[$\clubsuit$]{Chong You}
\author[$\Box$]{Qing Qu\thanks{The last two authors share the corresponding authorship of this work.}$^,$}
\author[$\sharp$]{Zhihui Zhu\samethanks$^,$}

\affil[$\Diamond$]{Microsoft Research, Redmond}
%\affil[$\ddagger$]{Center for Data Science, New York University}
\affil[$\clubsuit$]{Google Research, New York City}
\affil[$\sharp$]{Department of Electrical \& Computer Engineering, University of Denver}
%\affil[$\spadesuit$]{Department of Biomedical Engineering \& MINDS, Johns Hopkins University}
\affil[$\Box$]{Department of Electrical Engineering \& Computer Science, University of Michigan}
%\date{\today}
\maketitle

%\twocolumn[
%\icmltitle{On the Optimization Landscape of Neural Collapse under MSE Loss:\\ Global Optimality with Unconstrained Features}

% It is OKAY to include author information, even for blind
% submissions: the style file will automatically remove it for you
% unless you've provided the [accepted] option to the icml2022
% package.

% List of affiliations: The first argument should be a (short)
% identifier you will use later to specify author affiliations
% Academic affiliations should list Department, University, City, Region, Country
% Industry affiliations should list Company, City, Region, Country

% You can specify symbols, otherwise they are numbered in order.
% Ideally, you should not use this facility. Affiliations will be numbered
% in order of appearance and this is the preferred way.

% You may provide any keywords that you
% find helpful for describing your paper; these are used to populate
% the "keywords" metadata in the PDF but will not be shown in the document

% this must go after the closing bracket ] following \twocolumn[ ...

% This command actually creates the footnote in the first column
% listing the affiliations and the copyright notice.
% The command takes one argument, which is text to display at the start of the footnote.
% The \icmlEqualContribution command is standard text for equal contribution.
% Remove it (just {}) if you do not need this facility.

%\printAffiliationsAndNotice{}  % leave blank if no need to mention equal contribution
%\printAffiliationsAndNotice{\icmlEqualContribution} % otherwise use the standard text.

\begin{abstract}
When training deep neural networks for classification tasks, an intriguing empirical phenomenon has been widely observed in the last-layer classifiers and features, where (i) the class means and the last-layer classifiers all collapse to the vertices of a Simplex Equiangular Tight Frame (ETF) up to scaling, and (ii) cross-example within-class variability of last-layer activations collapses to zero. This phenomenon is 
called Neural Collapse (NC), which seems to take place regardless of the choice of loss functions. In this work, we justify NC under the mean squared error (MSE) loss, where recent empirical evidence shows that it performs comparably or even better than the \emph{de-facto} cross-entropy loss. Under a simplified unconstrained feature model, % which isolates the topmost layers from the classifier of the neural network, 
we provide the first global landscape analysis for vanilla nonconvex MSE loss and show that the (only!) global minimizers are neural collapse solutions, while all other critical points are strict saddles whose Hessian exhibit negative curvature directions. Furthermore, we justify the usage of rescaled MSE loss by probing the optimization landscape around the NC solutions, showing that the landscape can be improved by tuning the rescaling hyperparameters. Finally, our theoretical findings are experimentally verified on practical network architectures.%, and we observe better NC on MSE compared to cross-entropy loss. These findings could have profound implications for optimization, generalization, and robustness of broad interests. 
\end{abstract}

\section{Introduction}\label{sec:intro}
% \section{Introduction}\label{sec:intro}
%first paragraph: motivations for studying Neural Collapse in deep neural network
%In the past decade, the revival of deep neural networks has led to dramatic success in numerous applications ranging from computer vision, to natural language processing, to scientific discovery and beyond . 
Despite the dramatic success of modern deep neural networks (DNNs) across engineering and sciences \cite{krizhevsky2012imagenet,lecun2015deep,goodfellow2016deep,senior2020improved} that we have witnessed in the past decade, the practice of deep learning has yet been shrouded with mysteries, 
ranging from the design of appropriate network architectures \cite{qi2020deep,martens2021rapid} to the generalization and robustness properties \cite{nakkiran2019deep,yang2020rethinking,madry2018towards} of the learned networks. 
% ranging from the design of appropriate network architectures to the property (e.g. generalization and robustness \citep{nakkiran2019deep,yang2020rethinking,madry2018towards}) of the learned networks. 
For instance, even the right choice of training loss function has not been thoroughly justified. For classification problems, although the cross entropy (CE) loss is the standard choice for network training, recent work \cite{hui2021evaluation} demonstrated with extensive experiments  that DNNs trained with mean-squared error (MSE) loss achieve on par or even better 
performance compared to those of the CE loss.  

Towards demystifying DNN, a recent interesting line of work \cite{papyan2020prevalence,han2021neural,papyan2020traces,fang2021layer,mixon2020neural,graf2021dissecting,ergen2021revealing,zhu2021geometric} studied and characterized the learned deep representations during the terminal phase of training, where several intriguing phenomena have been discovered. In particular, recent seminal work of \cite{papyan2020prevalence,han2021neural} empirically demonstrated that last-layer features and classifiers of a trained DNN exhibit the following \emph{Neural collapse} (\NC) property:
\begin{itemize}
      \item[(\NC1)] \textbf{Variability collapse:} the individual features of each class concentrate to their class-means.
    \item[(\NC2)] \textbf{Convergence to simplex ETF:} the class-means have the same length and are 
    maximally distant; they form a Simplex Equiangular Tight Frame (ETF). 
    \item[(\NC3)] \textbf{Convergence to self-duality:} the last-layer linear classifiers perfectly match their class-means.
    \item[(\NC4)] \textbf{Simple decision rule:} the last-layer classifier is %behaviorally 
    equivalent to a Nearest Class-Center decision rule.
\end{itemize}
It has been empirically demonstrated that the \NC\; persists across the range of canonical classification problems with the CE loss. These results imply that deep networks are essentially learning maximally separable features between classes, and a max-margin classifier in the last layer upon these learned features, touching the ceiling in terms of the training performance. %during training \cy{Maybe ``training performance'' instead of ``performance during training''}. 
Later work theoretically investigated the \NC\;  based on a simplified assumption of the so-called {\it unconstrained feature model} \cite{mixon2020neural} or {\it layer-peeled model} \cite{fang2021layer}, where the features are viewed as free optimization variables. The underlying reasoning is that modern deep networks are often highly overparameterized with the capacity of learning any representations \cite{cybenko1989approximation, hornik1991approximation,lu2017expressive,shaham2018provable}, so that the last-layer features can approximate, or interpolate, any point in the feature space. Under the unconstrained feature model, the work \cite{lu2020neural,weinan2020emergence,mixon2020neural,graf2021dissecting,fang2021layer,ji2021unconstrained,tirer2022extended} showed that the \NC\;solutions are the only global optimal solution for nonconvex training losses under different settings. However, given the nonconvexity of the problem, even under the unconstrained feature model these global optimality results do guarantee that the \NC\;solutions can be efficiently achieved. This has been further resolved by the recent work \cite{zhu2021geometric}, showing that the CE loss function enjoys a benign global optimization landscape under the unconstrained feature model. It shows that every saddle point is a strict saddle with negative curvature, so that the CE loss can be efficiently optimized to the \NC\;solution regardless of the nonconvexity.

%, explaining why the global \NC\;solution can be %efficiently \cy{removing it as the benign landscape does not imply efficiency}
%achieved.

It should be noted that the \NC\;phenomenon is not \emph{solely} pertinent to the particular choice of the CE loss. It has been recently reported \cite{han2021neural}, that DNNs trained with the MSE loss also exhibit very similar \NC\;phenomena but with even \emph{faster} collapse in terms of training epochs and with better (adversarial) robustness. In the meanwhile, the MSE loss is not only appealing for its algebraic simplicity, but it also demonstrates on-par or even better generalization performances compared to the CE loss, as reported by recent line of work \cite{hui2021evaluation}. However, the theoretical study of MSE loss for \NC\;is still limited \cite{mixon2020neural,han2021neural,tirer2022extended}. Under the unconstrained feature model, their work  proved that the continuous gradient flow of the MSE loss converges to \NC\;solutions. In particular, the work \cite{mixon2020neural} relies on linearizations of the ordinary differential equation by assuming very small initializations, which is not well aligned with the practice of deep learning where the weights are usually initialized with non-negligible magnitudes such as by the Kaiming initialization \cite{he2015delving}. Because the choice of the loss function without balanced weight decay, the analysis in \cite{han2021neural} only focuses on the renormalized features and studies the continually renormalized gradient flow.\footnote{The model used in \cite{han2021neural} imposes a weight decay on the classifier, but not on the features. Thus, without renormalization, the weights of the classifier will converge to zero while the features will blow up.} Moreover, in practice deep networks are usually trained using iterative algorithms such as stochastic gradient descent (SGD) with nontrivial stepsizes, rather than using the continuous gradient flows. The work \cite{poggio2020explicit,poggio2020implicit,rangamani2021dynamics,rangamani2022deep} study deep homogeneous classification networks (without bias terms but beyond the unconstrained features model) trained with MSE loss, stochastic gradient descent, and weight decay. In particular, the solutions satisfying the so-called symmetric quasi-interpolation assumption are proved to obey~\NC\; properties, but the properties of other solutions are not investigated~\cite{rangamani2021dynamics,rangamani2022deep}

As far as we know, the work closest to ours is the concurrent work \cite{tirer2022extended}. Under similar unconstrained feature models, the work studies the global optimality condition of \NC\;for the MSE loss for both two-layer and three layer networks, but \emph{not} the global optimization landscape. Additionally, it studies special cases of the MSE loss with either no bias term, or no weight decay on the bias term. In comparison, our work not only study the MSE loss under more general setting with bias term included, but also shows the strict saddle property of the benign nonconvex landscape.

\paragraph{Contributions.}In this work, we provide a thorough analysis of nerual network by examining its last-layer features. % during the terminal phase of training. 
In particular, we work under the unconstrained feature model to characterize the \emph{global} optimization landscape of over-parameterized neural networks trained with the MSE loss.
% In this work, we provide a thorough analysis of the \NC\; phenomena a rescaled version\footnote{It has been widely observed that rescaling can lead to better generalization performance when the number of classes are large (e.g., $K>100$). } of the MSE loss. 
% In particular, we characterize the \emph{global} optimization landscape of the MSE loss for network training based on the unconstrained feature model. 
Our contributions can be highlighted as follows.
\begin{itemize}[leftmargin=*,topsep=0.25em]
\item \textbf{Characterization of global solutions. } We provide a mathematical characterization of {\it all} the global solutions for the last layer features and classifier, showing that they satisfy the \NC~properties with certain choices of regularization parameters. 
    This is in contrast to previous work \cite{mixon2020neural,han2021neural} which only characterize the solutions that are produced by a particular optimization algorithm (i.e., gradient flow).
    Moreover, these work only consider cases that the feature dimension is larger than the number of classes, while our analysis covers all choices of feature dimension. 
\item \textbf{Benign global landscape.} We prove that the loss function is a \emph{strict saddle function} \cite{ge2015escaping,sun2015nonconvex,zhang2020symmetry}, where every critical point is either a global solution or a \emph{strict saddle point}  with negative curvature. 
This implies that there is \emph{no} spurious local minimizer on the optimization landscape. 
Hence, our work is distinguished from %is in contrast to 
previous work \cite{lu2020neural,weinan2020emergence,mixon2020neural,graf2021dissecting,ergen2021revealing,fang2021layer,tirer2022extended} that only characterizes global minimizers.  
% \qq{should we still need to mention this in the last sentence?}
The benign global landscape implies that any method that can escape strict saddle points (e.g. stochastic gradient descent) converges to a global solution that exhibits \NC~(see \Cref{sec:experiment}). 
% This result supports our empirical observation, as shown in \Cref{sec:exp-NC}, that practical overparameterized networks always converge to ETF solutions with a diverse choice of optimization algorithms.
% \item \textbf{Efficient, Algorithmic Independent, Global Optimization to \NC\;Solutions.} \qq{this is the same with the previous work, maybe remove} \cy{Or may summarize in one sentence and combine with the previous point.} \zz{Agree. Given our previous work already provides a geometric analysis, we can replace the last sentence in the last point by one sentence from this point, and remove this point.}
%The benign global landscape implies that any method that can escape strict saddle points (e.g. stochastic gradient descent) converges to a global solution \cite{lee2016gradient} that exhibits \NC. This result supports our empirical observation, as shown in \Cref{sec:exp-NC}, that practical overparameterized networks always converge to ETF solutions with a diverse choice of optimization algorithms.
% \item \qq{something else experimentally? for example, sharper landscape leads to faster/better collapse}
% \item \qq{designed new metric for measuring neural collapse}
\item \textbf{Understanding the \emph{rescaled} MSE.} In practice, rescaling the MSE loss (see \Cref{subsec:rescaled-MSE}) is empirically demonstrated to be critical for obtaining competitive performance compared to the CE loss particularly when the number of classes is large \cite{demirkaya2020exploring,hui2021evaluation}. 
    % We show that our theoretical results for the MSE loss on the global solutions and landscape hold -- hence \NC~exhibits and persists -- for the rescaled MSE as well.
    We show empirically that the \NC~exhibits for rescaled MSE as well. 
    To understand the benefit of the rescaling, we provide a visualization of the optimization landscape w.r.t. unconstrained features, showing that rescaling aligns the gradient direction to be perpendicular to the decision boundary between classes hence may facilitate the convergence of gradient based algorithms to more discriminative features. 
\end{itemize}

Compared to the recent global landscape analysis for the CE loss \cite{zhu2021geometric}, our result implies that both losses learn similar \NC\;features and classifiers when $d\ge K$. Hence, from the \NC\;perspective, this work provides a theoretical explanation for the observations in \cite{hui2021evaluation} that the DNN trained by the MSE loss achieves on par performance compared to that trained with the CE loss. %\xiao{Are we overclaiming here? we don't have results indicating that the on-par performances is a result of NC.} 
Additionally, it should be noted that there are several major differences between our result and \cite{zhu2021geometric}. First, the work of \cite{zhu2021geometric} only studied the setting where the feature dimension $d$ is larger than the number of classes $K$, while we characterized the global optimality for both the cases of $d<K$ and $d\geq K$. We observe dramatically different performance for DNN learned by CE and MSE when $d<K$. Second, for the MSE loss, we showed that the bias term plays an important role\footnote{For the MSE loss, when there is no bias term, the features (and classifier) that minimize the loss function form orthonormal matrices instead of Simplex ETFs when $d\ge K$.} for the solution to be \NC, while for CE loss the \NC\;solution can be achieved without bias terms.

\section{The Problem Setup}\label{sec:problem}
%A deep neural network is essentially a nonlinear mapping $\psi(\cdot): \R^{D} \mapsto \R^K$, which can be modeled by a composition of simple maps: $\psi(\mb x) = \psi^{L} \circ \cdots \circ \psi^2\circ \psi^1(\mb x)$ for $\vx\in\R^D$, where $\psi^\ell(\cdot)$ ($1\leq \ell \leq L)$ are called ``layers''.More precisely, a vanilla $L$-layer neural network can be written as

The goal of deep learning is to learn a multi-layer nonlinear mapping $\psi(\cdot): \R^{m} \mapsto \R^K$, that is able to fit the training data and generalize. More precisely, a deep neural network classifier can be generally written as
\begin{align}\label{eq:func-NN}
    \psi_{\mb \Theta}(\mb x) \;=\;  \mb W \phi_{\vtheta}(\vx)  + \mb b,
\end{align}
where $\phi_{\vtheta}(\cdot):\R^m \mapsto \R^d$ is the feature mapping, on top of which is the linear classifier  $(\mW,\vb)$. $\phi_{\vtheta}(\vx)$ is usually referred to as the \emph{representation} or \emph{feature} of the input $\vx$ learned from the network.
For convenience, we use $\vtheta$ to denote the network parameters in the feature mapping, and $\mb \Theta = \Brac{\vtheta, \mb W,\mb b }$ to denote \emph{all} the network parameters. In this way, the function implemented by a neural network classifier can also be expressed as a linear classifier acting upon $\phi_{\vtheta}(\vx)$.

In this work, we focus on learning deep networks for multi-class classification tasks (say, with $K$ classes), where the class label of a sample $\mb x_{k,i}$ in the $k$-th class is given by a one-hot vector $\vy_k \in \mathbb{R}^K$ with only the $k$th entry equal to unity ($1\leq k \leq K$). Throughout the paper, we study the setting where the number of training samples in each class is balanced, i.e., each class has $n$ training samples. Let $N=Kn$. During the training phase, the task is then to learn the parameters $\mb \Theta$ so that the output of the model on an input sample $\mb x_{k,i}$ approximates the corresponding output $\vy$ (i.e. $\psi_{\mb \Theta}(\mb x_{k,i})\approx \mb y_k$). To quantify this approximation, it can be done by optimizing a simple MSE loss as follows
 \setlength{\belowdisplayskip}{3pt} \setlength{\belowdisplayshortskip}{3pt}
 \setlength{\abovedisplayskip}{3pt} \setlength{\abovedisplayshortskip}{3pt}
\begin{align}\label{eqn:dl-ce-loss}
    \min_{\mb \Theta} \; 
    \frac{1}{2N}\sum_{k=1}^K \sum_{i=1}^{n} \norm{ \psi_{\mb \Theta}(\mb x_{k,i}) - \vy_k }{2}^2 \;+\; \frac{\lambda}{2} \norm{\mb \Theta}{F}^2,
\end{align}
where $\lambda>0$ is the regularization parameter (a.k.a., the weight decay parameter).

%As introduced in \Cref{sec:intro}, recent work \cite{papyan2020prevalence} showed that the features learned by minimizing the above objective (i.e. $\phi_{\mb \theta}(\mb x)$) showcase the \NC\;phenomenon: their within-class variability vanishes, and the features converge to a Simplex ETF.

%The goal of deep learning is to fit the parameters $\mb \Theta$ so that the output of the model on an input samples $\mb x$ approximates the corresponding output $\vy$, i.e. so that $\psi_{\mb \Theta}(\mb x)\approx \mb y$, in expectation over a distribution of input-output pairs, $\mc D$. This can be achieved .  Naturally, the distribution $\mc D$ is unknown, but we have access to training samples that are drawn i.i.d. from $\mc D$. In this way, one can minimize the empirical risk over these samples by optimizing the following problem

\subsection{Basic Problem Formulation Based on Unconstrained Feature Models}\label{sec:layer-peeled-model}

%\qq{rewrite and condense, refer to our previous work on this}

%Despite of recent efforts \cite{xx}, analyzing deep networks is still a tremendously difficult task, mainly due to the nonlinear interactions between a large number of layers. 
Analyzing deep networks is a tremendously difficult task mainly due to the nonlinear interactions between a large number of layers. 
Nonetheless, as argued by a line of recent work \cite{cybenko1989approximation, hornik1991approximation,lu2017expressive,shaham2018provable} that modern deep networks are often highly overparameterized to approximate any continuous function, it motivates us to simplify the analysis by treating the last-layer features as \emph{free} optimization variables $\mb h_{k,i} = \phi_{\vtheta}(\vx_{k,i})\in\mathbb R^d$. Such a simplification is called {\it unconstrained feature model} 
\cite{mixon2020neural} (or {\it layer-peeled model} in \cite{fang2021layer}), which simplifies the study of the last-layer representations of the network.
% which simplifies the study of the last-layer representations of the network as 
% \begin{align*}
%     \psi_{\mb \Theta}(\mb x) \;=\; \mb W \mb h(\mb x) + \mb b,\quad \text{with}\quad \mb W \;:=\; \mb W_L,\quad \mb b\;:=\; \mb b_L. 
% \end{align*}
To simplify the notation, let us denote
\begin{align*}
    \quad \mb W &\;:=\; \begin{bmatrix}
    \mb w^{1 } & \ \mb w^{2 } & \cdots & \mb w^{K }  
    \end{bmatrix}^\top \in \bb R^{K \times d}, \\
    \quad \mb H &\;:=\; \begin{bmatrix}
    \mb H_1 & \mb H_2 & \cdots & \mb H_K
    \end{bmatrix}\in \bb R^{d \times N}  , ~~\text{and}\\
    \quad \mb Y &\::=\; \begin{bmatrix}
     \mb Y_1 & \mb Y_2 &\cdots & \mb Y_K
    \end{bmatrix} \in \bb R^{K \times N},
\end{align*}
%so that $\mb W \in \bb R^{K \times d}$, $\mb H \in \bb R^{d \times N} $, and $\mb Y \in \bb R^{K \times N}$.
where $\mb w^k$ is a row vector of $\mb W$, $\mb H_k:= \begin{bmatrix}\mb h_{k,1} & \cdots & \mb h_{k,n} \end{bmatrix} \in \bb R^{ d \times n}$ contains all the $k$-th class features, and $\mb Y_k := \begin{bmatrix}  \mb y_k  &\cdots & \mb y_k \end{bmatrix} \in \bb R^{K \times n}$ for all $k=1,2,\cdots,K$. Based on the unconstrained feature model, we consider a slight variant of \eqref{eqn:dl-ce-loss}, given by %\qq{discuss rescaling here?}
\begin{align}\label{eq:obj}
     \min_{\mb W , \mb H,\mb b  }  f(\mb W,\mb H,\mb b):= \Big\{\frac{1}{2N} \norm{ \mb W \mb H + \mb b \mb 1_{N}^\top  - \mb Y }{F}^2+ \frac{\lambda_{\mb W} }{2} \norm{\mb W}{F}^2 
     + \frac{\lambda_{\mb H} }{2} \norm{\mb H}{F}^2 + \frac{\lambda_{\mb b} }{2} \norm{\mb b}{2}^2\Big\},
\end{align}
where $\lambda_{\mb W}$, $\lambda_{\mb H},\;\lambda_{\mb b}>0$ are the penalties for $\mb W$, $\mb H$, and $\mb b$, respectively. 

Here, because we treat the last-layer feature $\mb H$ as a free optimization variable, we put the weight decay on $\mb W$ and $\mb H$, which is different from the practice that the weight decay is enforced on all the network parameters $\mb \Theta $ as shown in \eqref{eqn:dl-ce-loss}. Nonetheless, as discussed in \cite{zhu2021geometric}, this idealization is reasonable since the energy of the features (i.e., $\|\mb H\|_F$) can indeed be upper bounded by the energy of the weights at every layer if the inputs are bounded (which holds in practice), implying that the norm of $\mb H$ is \emph{implicitly} penalized by penalizing the norm of $\mb \Theta$. Additionally, for the CE loss, the experiments in \cite{zhu2021geometric} show on-par performance for the two types of weight decay. Thus, we expect similar performances for the MSE loss.

On the other hand, the experiments in \cite{zhu2021geometric,graf2021dissecting} conducted on random labels imply that the strong assumption of unconstrained feature model is reasonable for explaining \NC\;during the training phase: when the network \eqref{eq:func-NN} is highly overparameterized, the learned network in practice will fit to the random labels and neural collapse, regardless of the input. Moreover, as we shall see in the following sections, both theory and experiments demonstrate that such simplification preserves the core properties of last-layer classifiers and features---the \NC\;phenomenon.

\subsection{Rescaled MSE Loss under Unconstrained Features}
\label{subsec:rescaled-MSE}

On the other hand, it should be noted that, when training with the vanilla formulation of the MSE loss \eqref{eqn:dl-ce-loss}, empirically good performances are reported \emph{only} when the number of classes is small (e.g., CIFAR10 \cite{krizhevsky2009learning} with $K<100$). When training for a large number of classes such as ImageNet \cite{deng2009imagenet}, to achieve better performance \emph{rescaling} is often needed \cite{demirkaya2020exploring,hui2021evaluation}.  Intuitively, the basic idea is to rescale the MSE loss \eqref{eq:obj} by a pair of positive scalars $(\alpha,M)$, 
\begin{align}\label{eq:obj-rescaled}
         \min_{\mb W , \mb H,\mb b  } \;   \frac{1}{2N} \norm{ \mb \Omega_{\alpha}^{\odot 1/2} \odot \paren{ \mb W \mb H + \mb b \mb 1^\top  - M \mb Y} }{F}^2 \;+\; \frac{\lambda_{\mb W} }{2} \norm{\mb W}{F}^2 + \frac{\lambda_{\mb H} }{2} \norm{\mb H}{F}^2 + \frac{\lambda_{\mb b} }{2} \norm{\mb b}{2}^2,
\end{align}
so that we can put more emphasize on training the correct class. Here, $\odot$ denotes the entry-wise Hadamard product, $\mb \Omega^{\odot 1/2}$ means taking square root for each element, and 
\begin{align*}
    \mb \Omega_{\alpha} \;=\; \begin{bmatrix} \mb \omega_1 \mb 1_n^\top & \cdots & \mb \omega_K \mb 1_n^\top \end{bmatrix}, \quad \text{with}\;\;\mb \omega_k(\alpha) \in \bb R^K \;\;\text{and}\;\; \omega_{ki}(\alpha) \;=\; \begin{cases} 
    \alpha, & i = k, \\
    1, & \text{otherwise}.
    \end{cases}
\end{align*}
In comparison to \cite{mixon2020neural,han2021neural,tirer2022extended}, our work not only studies \NC\; under the vanilla setting \eqref{eq:obj} but also investigates the more practical rescaled version of the MSE loss \eqref{eq:obj-rescaled}. In particular, in \Cref{subsec:landscape-rescaling}, we provide geometric intuitions on why rescaling would be a better choice for loss design. We will corroborate our reasoning via experiments on practical network training in \Cref{sec:experiment}.

\section{Main Theoretical Results}\label{sec:main}

%In next section, we will analyze the optimization landscape for the objective function in \eqref{eq:obj}.

In this section, we present our study on global optimality conditions as well as geometric properties of the nonconvex (rescaled) MSE loss under the unconstrained feature model.

%\qq{mention about the rescaled version here}

\subsection{Global Optimality Conditions}
\label{subsec:global_optim}

First, we study the nonconvex MSE loss \eqref{eq:obj} by characterizing its global solutions under different settings of the feature and class dimensions. We show that the only global solutions of \eqref{eq:obj} are neural collapsing, satisfying the \NC\;properties introduced at the beginning of \Cref{sec:intro}.
%We begin by characterizing the global solutions of \eqref{eq:obj}, showing that the global minimizers exhibit \NC1, \NC3, and a variant of \NC2 under different settings. 

% in the following result.
%First of all, we can show that the Simplex ETF are the only global solutions. 
\begin{theorem}[Global Optimality Conditions]\label{thm:global-minima}
%Assume the balanced training samples and the unconstrained feature model of the network introduced in \Cref{sec:layer-peeled-model}.
Assume that the number of training samples in each class is balanced, $n = n_1 = \cdots = n_K$, and let $(\mW^\star, \mH^\star,\vb^\star)$ be a global minimizer of the vanilla MSE loss in \eqref{eq:obj}. Let $\ol\mH^\star = \begin{bmatrix}\ol\vh^\star_1 & \cdots \ol \vh^\star_K \end{bmatrix}$, with $\ol \vh^\star_k$ being the mean of the $k$-th class features.
	Then, $(\mW^\star,\mH^\star,\mb b^\star)$ satisfies the following properties:% for different $d$, $K$, $\lambda_{\mb W}$, $\lambda_{\mb H}$, and $\lambdab$:
\begin{itemize}[leftmargin=*]
    \item %[(\NC1,3)] 
    If $\lambdaW\lambdaH < \frac{1}{NK}$, then $(\mW^\star,\mH^\star)$
satisfies \NC1 and \NC3 as
\begin{gather*}
      \vh_{k,i}^\star \;=\;  \ol\vh_k^\star, \ \sqrt{ \frac{ \lambda_{\mb W}  }{ \lambda_{\mb H} n } } \vw^{\star k} \;=\;  \ol\vh_k^\star ,\quad \forall \; k\in[K],\; i\in[n].
\end{gather*}
Otherwise, if $\lambdaW\lambdaH \ge \frac{1}{NK}$, then $\mW^\star = \vzero, \mH^\star = \vzero$.
 \item %[(\NC2)] 
 If $\lambdaW\lambdaH < \frac{1}{NK}$, then $\ol\mH^\star$ further obeys the following properties (\NC2) for different $d$:
\begin{enumerate}[leftmargin=*]
\item If $d< K-1$:  we have $\ol\mH^{\star\top}\ol\mH^\star = C_1 \calP_d(\mId - \frac{1}{K} \mb 1_K \mb 1_K^\top)$, where $\calP_d(\mb M)$ denotes the best rank-$d$ approximating of $\mb M$;
\item  If $d = K-1$: we have $\ol\mH^{\star\top}\ol\mH^\star= C_2 (\mb I - \frac{1}{K} \mb 1_K \mb 1_K^\top)$;
 \item  If $d\ge K$: we have $\ol\mH^{\star\top}\ol\mH^\star =$ \e
  \begin{cases} C_3 \paren{ \mb I - \frac{1}{K} \mb 1_K \mb 1_K^\top},   ~~\textup{if} \ \lambdab \le \frac{\sqrt{KN \lambdaW\lambdaH}}{1 - \sqrt{KN \lambdaW\lambdaH}} \\ C_4 \big(\mId - \frac{\sqrt{n\lambdaW\lambdaH}}{\lambdab(1 - \sqrt{KN\lambdaW\lambdaH} )} \vone_K \vone_K^\top \big),  ~~\textup{otherwise}
        \end{cases}
    \label{eq:thm-eq-approx-ETF}    \ee
  where  $\frac{\sqrt{n\lambdaW\lambdaH}}{\lambdab(1 - \sqrt{KN\lambdaW\lambdaH})}\le \frac{1}{K}$ in the second case since $\lambdab \ge \frac{\sqrt{KN \lambdaW\lambdaH}}{1 - \sqrt{KN \lambdaW\lambdaH}}$. 
 %when $\lambdab \le \frac{\sqrt{KN \lambdaW\lambdaH}}{1 - \sqrt{KN \lambdaW\lambdaH}}$,
 %$b^\star \ge \frac{1}{K} - \sqrt{n\lambdaW\lambdaH}$ we have $\ol\mH^{\star\top}\ol\mH^\star = C_3 \paren{ \mb I - \frac{1}{K} \mb 1_K \mb 1_K^\top}$; otherwise, $\ol\mH^{\star\top}\ol\mH^\star = C_4 \paren{ \mId - \frac{\sqrt{n\lambdaW\lambdaH}}{\lambdab(1 - \sqrt{KN\lambdaW\lambdaH} )} \vone_K \vone_K^\top }$, where $\frac{\sqrt{n\lambdaW\lambdaH}}{\lambdab(1 - \sqrt{KN\lambdaW\lambdaH})}\le \frac{1}{K}$.
 %\qq{Here, I think the statement would be better changed to $\lambda_{\mb b}$ instead of $\mb b$, given the relationship of $\lambda_b$ and $\mb b$}
\end{enumerate}
Here, $C_1$, $C_2$, $C_3$, and $C_4$ are some positive numerical constants that depend on $\lambdaW,\lambdaH,\lambdab$. % \cy{They depend on parameters $\lambdaH, \lambdaW$, right? Can we say they are constant in this case?} \zz{Yes, they depend on the weight decay parameters. Added!}
  \item %[(Bias)] 
  The bias satisfies $\mb b^\star = b^\star \mb 1_K$ with $b^\star \le \frac{1}{K}$ given by:
    \begin{enumerate}[leftmargin=*]
        \item If $d<K$: we have $b^\star  = \frac{1}{K(\lambdab + 1)}$;
        \item Otherwise, $b^\star = \begin{cases} \frac{1}{K(\lambdab + 1)}, &   \lambdab \le \frac{\sqrt{KN \lambdaW\lambdaH}}{1 - \sqrt{KN \lambdaW\lambdaH}}, \\ \frac{\sqrt{n\lambdaW\lambdaH}}{\lambdab}, &  \text{otherwise}. 
        \end{cases}$
    \end{enumerate}
    In particular, when $\lambdab \rightarrow 0$, we have $b^\star \rightarrow \frac{1}{K}$; when $\lambdab \rightarrow \infty$, we have $b^\star \rightarrow 0$.
\end{itemize}
\end{theorem}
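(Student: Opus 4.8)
The plan is to lower bound $f$ by a \emph{convex} function of the product $\mb Z:=\mW\mH$ and the bias, to solve that convex surrogate exactly by exploiting the block/permutation symmetry of the one-hot targets, and then to trace the equality conditions back so as to pin down the structure of \emph{every} global minimizer. The workhorse inequality is the variational (matrix AM--GM) bound $\frac{\lambdaW}{2}\norm{\mW}{F}^2+\frac{\lambdaH}{2}\norm{\mH}{F}^2\ge \sqrt{\lambdaW\lambdaH}\,\norm{\mW\mH}{*}$, whose equality holds iff the two factors are balanced, $\lambdaW\norm{\mW}{F}^2=\lambdaH\norm{\mH}{F}^2$, and aligned with the singular subspaces of $\mb Z$. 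Two reductions precede it. First, each $\vh_{k,i}$ enters $f$ only through a term that is $\lambdaH$-strongly convex in it, so every minimizer satisfies within-class collapse $\vh_{k,i}=\ol\vh_k^\star$ (this is \NC1); writing $\ol\mH=[\ol\vh_1\ \cdots\ \ol\vh_K]$ and $\ol{\mb Z}=\mW\ol\mH$, the objective becomes $\frac{1}{2K}\norm{\ol{\mb Z}+\mb b\vone_K^\top-\mId_K}{F}^2+\frac{\lambdaW}{2}\norm{\mW}{F}^2+\frac{n\lambdaH}{2}\norm{\ol\mH}{F}^2+\frac{\lambdab}{2}\norm{\mb b}{2}^2$, and the bound replaces the two quadratic penalties by $\gamma\norm{\ol{\mb Z}}{*}$ with $\gamma:=\sqrt{n\lambdaW\lambdaH}$. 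Second, the surrogate $\frac{1}{2K}\norm{\ol{\mb Z}+\mb b\vone_K^\top-\mId_K}{F}^2+\gamma\norm{\ol{\mb Z}}{*}+\frac{\lambdab}{2}\norm{\mb b}{2}^2$ is invariant under simultaneous class permutations $\ol{\mb Z}\mapsto\mb P\ol{\mb Z}\mb P^\top,\ \mb b\mapsto\mb P\mb b$, and (since the target $\mId_K-\mb b\vone_K^\top$ is symmetric once $\mb b\propto\vone_K$) under $\ol{\mb Z}\mapsto\ol{\mb Z}^\top$. Strict convexity in $\mb b$ and in $\ol{\mb Z}$ together with averaging over the symmetric group pin the minimizer to $\mb b=b\vone_K$ and $\ol{\mb Z}=\mu\,\tfrac1K\vone_K\vone_K^\top+\nu(\mId_K-\tfrac1K\vone_K\vone_K^\top)$.

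With this ansatz the surrogate decouples into a $\nu$-part, $\tfrac{K-1}{2K}(\nu-1)^2+(K-1)\gamma|\nu|$, and a $(\mu,b)$-part, $\tfrac{1}{2K}(\mu-1+bK)^2+\gamma|\mu|+\tfrac{\lambdab K}{2}b^2$. Soft-thresholding the first gives $\nu^\star=(1-K\gamma)_+$, so $\nu^\star>0$ exactly when $K\gamma<1$, i.e. $\lambdaW\lambdaH<\tfrac{1}{NK}$; otherwise $\nu^\star=0$, and one checks $\mu^\star=0$ as well, which through balancedness forces $\mW^\star=\mH^\star=\vzero$. The strictly convex $(\mu,b)$-problem is solved by its KKT/subgradient conditions: the test at $\mu=0$ gives $b^\star=\tfrac{1}{K(\lambdab+1)}$ and $\mu^\star=0$ whenever $\lambdab\le\tfrac{K\gamma}{1-K\gamma}=\tfrac{\sqrt{KN\lambdaW\lambdaH}}{1-\sqrt{KN\lambdaW\lambdaH}}$, and otherwise $b^\star=\tfrac{\gamma}{\lambdab}=\tfrac{\sqrt{n\lambdaW\lambdaH}}{\lambdab}$ with $\mu^\star>0$; this reproduces the two bias formulas, the bound $b^\star\le\tfrac1K$, and the limits as $\lambdab\to0,\infty$. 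For $d\ge K$ no rank constraint is active, so $\ol\mH^{\star\top}\ol\mH^\star\propto\ol{\mb Z}^\star$ equals $C_3(\mId-\tfrac1K\vone_K\vone_K^\top)$ or the $C_4$-matrix of \eqref{eq:thm-eq-approx-ETF}, according to the $\lambdab$-threshold.

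For $d<K$ the inner dimension of $\mW\ol\mH$ imposes $\rank\ol{\mb Z}\le d$, a nonconvex constraint under which the permutation-averaging step breaks (averages of rank-$d$ matrices need not be rank-$d$). Here I would instead fix $\mb b$ and use an Eckart--Young / Ky Fan argument: the rank-$\le d$ minimizer of $\tfrac{1}{2K}\norm{\ol{\mb Z}-\mb T}{F}^2+\gamma\norm{\ol{\mb Z}}{*}$ with $\mb T=\mId_K-\mb b\vone_K^\top$ keeps the top $d$ singular triples of $\mb T$, soft-thresholded by $K\gamma$. Comparing the marginal gain of spending a rank-one slot on the $\vone_K$-direction (whose residual $\mb b$ can absorb, since $\mb b\vone_K^\top$ only moves the $\vone_K$-component) against spending it on one of the $K-1$ equal directions of $\mId_K-\tfrac1K\vone_K\vone_K^\top$ (which $\mb b$ cannot touch) shows the latter gain $\tfrac{(1-K\gamma)^2}{2K}$ always dominates; hence the $\vone_K$-direction is never used, $\mu^\star=0$, $b^\star=\tfrac1{K(\lambdab+1)}$, and $\ol{\mb Z}^\star$ is a scaled best rank-$d$ approximation of $\mId_K-\tfrac1K\vone_K\vone_K^\top$, i.e. $C_1\calP_d(\mId-\tfrac1K\vone_K\vone_K^\top)$, reducing to the genuine simplex ETF $C_2(\mId-\tfrac1K\vone_K\vone_K^\top)$ when $d=K-1$.

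Finally, to promote these statements about $\ol{\mb Z}^\star$ into statements about every global minimizer, I would invoke the equality case of the AM--GM bound: tightness forces balancedness $\lambdaW\norm{\mW^\star}{F}^2=n\lambdaH\norm{\ol\mH^\star}{F}^2$ and alignment $\mW^\star\propto\ol\mH^{\star\top}$, which combine into $\sqrt{\lambdaW/(\lambdaH n)}\,\vw^{\star k}=\ol\vh_k^\star$ (this is \NC3) and fix the scaling linking $\ol\mH^{\star\top}\ol\mH^\star$ to $\ol{\mb Z}^\star$, giving the stated constants $C_1,\dots,C_4$. The step I expect to be the main obstacle is the $d<K$ regime: since the rank constraint is not preserved by permutation averaging, one cannot simply symmetrize, and must argue directly that an arbitrary (possibly asymmetric) global minimizer realizes a best rank-$d$ approximation and that the marginal-gain inequality rules out the $\vone_K$-direction for all $\lambdab$; by contrast the $d\ge K$ convex solve, the bias/threshold bookkeeping, and the equality-case analysis yielding \NC3 are comparatively routine.
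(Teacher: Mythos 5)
Your route for $d\ge K$ is correct and genuinely different from the paper's. The paper never forms a convex surrogate in $(\ol{\mb Z},\vb)$: it fixes $\vb$ first, solves the factorized problem over $(\mW,\mH)$ exactly for \emph{every} $d$ via a critical-point/balance analysis (\Cref{lem:global-fact-nuclear}), and only afterwards optimizes over $\vb$. Your alternative --- within-class collapse by strong convexity, the weighted AM--GM bound $\frac{\lambdaW}{2}\norm{\mW}{F}^2+\frac{n\lambdaH}{2}\norm{\ol\mH}{F}^2\ge\sqrt{n\lambdaW\lambdaH}\,\norm{\mW\ol\mH}{*}$, strict joint convexity plus permutation equivariance to force $\vb=b\vone_K$ and $\ol{\mb Z}^\star=\mu\tfrac1K\vone_K\vone_K^\top+\nu(\mId-\tfrac1K\vone_K\vone_K^\top)$, then two decoupled scalar problems --- reproduces the correct thresholds ($K\sqrt{n\lambdaW\lambdaH}<1$ iff $\lambdaW\lambdaH<\tfrac1{NK}$, and the stated $\lambdab$ threshold) and the constants in \eqref{eq:thm-eq-approx-ETF}, and the AM--GM equality case does yield \NC3 because $\ol{\mb Z}^\star$ is symmetric positive semidefinite. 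What the paper's ordering buys is exactly what you lose at $d<K$.

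That is where the genuine gap sits, and it is the one you flagged, but it is worth pinning down why your proposed patch cannot work as stated: the Eckart--Young/marginal-gain comparison presupposes that the singular directions of $\mb T=\mId_K-\vb\vone_K^\top$ split into $\vone_K$ and its orthocomplement, with singular values $\abs{1-bK}$ and $1$. That splitting holds only when $\vb\propto\vone_K$ --- precisely the fact that permutation averaging can no longer deliver under the rank constraint --- so the argument is circular: it assumes the structure of $\vb^\star$ that it is supposed to establish. The paper closes this hole without any symmetrization, in two steps. First, \Cref{lem:global-fact-nuclear} gives, for every fixed $\vb$ and every $d$, the exact optimal value over $(\mW,\mH)$ as a function of the singular values $\sigma_i$ of $\mY-\vb\vone^\top$: soft-thresholded terms for $i\le\min(d,K)$ plus the untouched tail $\sum_{i>d}\tfrac12\sigma_i^2$ when $d<K$; this value is coordinatewise increasing in the $\sigma_i$. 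Second, \Cref{lem:spectral-Ytilde} shows that $K-2$ of these singular values equal $\sqrt n$ identically, and that on each sphere $\norm{\vb}{}=c\le 1/\sqrt{K}$ the remaining two are \emph{simultaneously} minimized if and only if $\vb=\tfrac{c}{\sqrt K}\vone_K$ (combined with a separate observation showing $\norm{\vb^\star}{}\le 1/\sqrt{K}$). Monotonicity then forces $\vb^\star\propto\vone_K$ for every $d$, including $d<K-1$, after which everything reduces to a scalar problem in $c$ and your truncation and threshold computations go through verbatim. If you replace the marginal-gain heuristic by this monotonicity-plus-spectral argument, your proof becomes complete.
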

%At a high level, our proof technique finds lower bounds for the loss in \eqref{eq:obj} and studies the conditions for the lower bounds to be achieved, similar to \cite{lu2020neural,fang2021layer,zhu2021geometric,tirer2022extended,graf2021dissecting}. 
We postpone the detailed proof to Appendix \ref{app:thm-global}. In the following, we discuss the implications of \Cref{thm:global-minima} in detail.
\begin{itemize}[leftmargin=*,topsep=0.25em,itemsep=0.1em]
    \item \textbf{Implications on the choice of the feature dimension $d$.} As we observe from \Cref{thm:global-minima}, for the MSE loss \eqref{eq:obj}, any global solution always exhibits variability collapse (\NC1) and self-duality (\NC3). However, the convergence of  class means to simplex ETF (\NC2) critically depends on the feature dimension $d$. When $d\geq K-1$, for proper choices of $\lambdaW$, $\lambdaH$, and $\lambdab$, the global configuration of the class mean $\ol\mH^\star$ is always a simplex ETF. In particular, when $d=K-1$, the simplex ETF configuration even does not depend on $\lambdab$. On the other hand, if $d< K-1$, our theory implies that the global solution for $\ol\mH^\star$ is only the best rank-$d$ approximation of the simplex ETF, where the class-means of the each class are neither having equal length nor being maximally pairwise-distanced.  This result is consistent with the fact that $K$ vectors in $\bb R^d$ cannot form a $K$-Simplex ETF if $K>d-1$, and supports the practice of learning overparameterized network for choosing $d\ge K$.\footnote{For example, the dimension of the features of a ResNet \cite{he2016deep} is typically set to $d=512$ for CIFAR10 \cite{krizhevsky2009learning}, a dataset with $K = 10$ classes. This dimension grows to $d=2048$ for ImageNet \cite{deng2009imagenet}, a dataset with $K = 1000$ classes.}
    \item \textbf{Comparison to the CE loss.} For the CE loss under the unconstrained feature model, when $d \geq K$ recent work \cite{zhu2021geometric} showed that any global solution satisfies all three \NC\ properties regardless of choices of the weight decay parameters (i.e., $\lambda_{\mb W}$, $\lambda_{\mb H}$, and $\lambdab$). Moreover, the bias term there becomes zero.
    In contrast, \Cref{thm:global-minima} shows that the solution with the MSE loss is dependent upon choice of regularization parameters $\lambdaW, \lambdaH, \lambdab$ and that the class mean $\ol\mH^\star$ may not be a simplex ETF. Moreover, the bias term is essential to achieve simplex ETF solutions for MSE loss. Without the bias term (i.e., $\lambdab\rightarrow \infty$), \eqref{eq:thm-eq-approx-ETF} implies that the class mean $\ol\mH^\star$ becomes an orthonormal matrix even when $d\ge K$. Thus, the analysis of global optimality conditions for the MSE loss is more complicated than for the CE loss\footnote{\revise{The proof of \Cref{thm:global-minima} is also dramatically different to the one for CE loss in \cite{zhu2021geometric}:  the latter mainly shows that \NC~solutions have small objective value than others since \NC~solutions are the only global minimizers, while the proof of \Cref{thm:global-minima} directly analyzes the global minimizers for different scenarios.
    }
    }.
    %However, if $K$ is large enough then $\ol\mH^\star$ is close enough to a simplex ETF, as $\frac{1}{K}\approx \frac{\sqrt{n\lambdaW\lambdaH}}{\lambdab(1 - \sqrt{KN\lambdaW\lambdaH})} \approx0$ hence $\mb I - \frac{1}{K} \mb 1_K \mb 1_K^\top \approx  \mId - \frac{\sqrt{n\lambdaW\lambdaH}}{\lambdab(1 - \sqrt{KN\lambdaW\lambdaH} )} \vone_K \vone_K^\top$.  

    \item \textbf{Comparison to previous work \cite{mixon2020neural,han2021neural}.}  \revise{As discussed in \Cref{sec:intro}, the previous work \cite{mixon2020neural,han2021neural} only characterize the solutions to \eqref{eq:obj} that are produced by a particular optimization algorithm (i.e., gradient flow) and under specific cases such as $\lambdab\rightarrow 0$ and the feature dimension is larger than the number of classes. In contrast, we characterize the global optimality conditions for the MSE loss \eqref{eq:obj} and  our analysis covers all choices of feature dimension and weight decay parameters. }
    
    \item \textbf{Extension to the rescaled MSE.} Although our current analysis is only for the vanilla MSE loss \eqref{eq:obj}, we expect that similar global optimality results should also hold for the rescaled version \eqref{eq:obj-rescaled}. This has been corroborated by our experimental results in \Cref{sec:experiment}. Notice that if we fix $\alpha=1$ in \eqref{eq:obj-rescaled}, the analysis only with large $M$ is simple and remain the same as \Cref{thm:global-minima}. However, dealing with both $\alpha$ and $M$ requires extra technicalities, that we leave for future work.
   % In contrast, the global minimizers of the cross entropy loss under the same unconstrained features model \eqref{eq:obj} always exhibit the three \NC\ properties \cite{zhu2021geometric} as long as $d\ge K$, regardless of choices of the weight decay parameters. In particular, $\ol\mH^\star$ forms a simplex ETF and the cosine of the angle between each pair of different class features is equal to $-\frac{1}{K}$. However, we note that even $\ol\mH^\star$ of the MSE loss does not form a simplex ETF, in which the cosine of the angle becomes $-\frac{1}{K\lambdab(1 - \sqrt{KN\lambdaW\lambdaH})}$ as discussed above,   the difference is negligible when $K$ is large (e.g., $K = 100$) since in this case $\frac{1}{K}\approx \frac{1}{K\lambdab(1 - \sqrt{KN\lambdaW\lambdaH})} \approx0$.  
 %   \item \textbf{Relationship to the existing work on the MSE loss~\cite{mixon2020neural,han2021neural}.}
\end{itemize}

\subsection{Characterizations of The Benign Global Landscape}\label{subsec:main-geometry}

\Cref{thm:global-minima} implies that the (only!) global minimizers to \eqref{eq:obj} are those satisfying \NC\;properties. However, the MSE loss function is \emph{nonconvex}, hence it is not obvious whether the benign global solutions can be \emph{efficiently} achieved even under the unconstrained feature model. To deal with this challenge, in the following we further investigate the global optimization landscape of \eqref{eq:obj}. By leveraging recent advances on nonconvex optimization \cite{sun2015nonconvex,ge2015escaping,sun2016complete,sun2018geometric,zhang2020symmetry,qu2020geometric,qu2020finding}, we first show that our nonconvex MSE loss \eqref{eq:obj} without bias term is a \emph{strict saddle} function that every non-global critical point is a saddle point with negative curvature (i.e., its Hessian has at least one negative eigenvalue). 
\begin{theorem}\label{thm:global-geometry-no-bias} \emph{\bf (Benign landscape for MSE without bias term)} 
The following MSE loss without bias term
\[\frac{1}{2N} \norm{ \mb W \mb H - \mb Y }{F}^2\nonumber + \frac{\lambda_{\mb W} }{2} \norm{\mb W}{F}^2 
     + \frac{\lambda_{\mb H} }{2} \norm{\mb H}{F}^2 \]
is a strict saddle function with no spurious local minimum. That is, any of its critical point is either a global minimizer, or it is a strict saddle point whose Hessian has a strictly negative eigenvalue.
\end{theorem}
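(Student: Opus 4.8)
The plan is to work directly with the smooth objective $g(\mW,\mH) = \frac{1}{2N}\norm{\mW\mH-\mY}{F}^2 + \frac{\lambdaW}{2}\norm{\mW}{F}^2 + \frac{\lambdaH}{2}\norm{\mH}{F}^2$ and push every statement through to the product $\mZ := \mW\mH$. First I would record the stationarity equations $\frac{1}{N}(\mZ-\mY)\mH^\top = -\lambdaW\mW$ and $\frac{1}{N}\mW^\top(\mZ-\mY) = -\lambdaH\mH$; multiplying the first on the left by $\mW^\top$ and the second on the right by $\mH^\top$ and equating yields the balancedness identity $\lambdaW\mW^\top\mW = \lambdaH\mH\mH^\top$. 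This forces $\Range(\mW^\top)=\Range(\mH)=:\calS$ and lets me align reduced SVDs $\mW = \mU\mSigma_1\mR^\top$, $\mH = \mR\mSigma_2\mV^\top$ with a shared orthogonal $\mR$ and $\sqrt{\lambdaW}\mSigma_1 = \sqrt{\lambdaH}\mSigma_2$. Writing $\mZ = \mU\mLambda\mV^\top$ (reduced SVD, $\mLambda\succ 0$, $r=\rank(\mZ)$), the stationarity equations collapse to the clean pair $\tfrac1N\mY\mV = \mU\mD$ and $\tfrac1N\mY^\top\mU = \mV\mD$, where $\mD = \sqrt{\lambdaW\lambdaH}\,\mId + \tfrac1N\mLambda\succ 0$ is diagonal.

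The crux is a rigidity statement driven by the target structure $\mY\mY^\top = n\mId_K$, i.e. $\frac1N\mY = \gamma\mQ$ with $\mQ\mQ^\top=\mId_K$ and $\gamma = 1/\sqrt{NK}$ (all nonzero singular values of $\tfrac1N\mY$ equal $\gamma$). Substituting gives $\gamma\mQ\mV = \mU\mD$ and $\gamma\mQ^\top\mU = \mV\mD$; eliminating and using $\mQ\mQ^\top=\mId_K$ forces $\gamma^2\mU = \mU\mD^2$, hence $\mD = \gamma\mId$. Thus every mode obeys $\sqrt{\lambdaW\lambdaH} + \tfrac{\sigma_i}{N} = \gamma$, so all singular values share the common value $\sigma = N\gamma - N\sqrt{\lambdaW\lambdaH} = \sqrt n - N\sqrt{\lambdaW\lambdaH}$, and $\mQ\mV = \mU$, $\mV = \mQ^\top\mU$ means the factorization directions are genuine singular modes of $\mY$. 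Consequently, every critical point is nothing but a choice of $r$ orthonormal singular modes of $\mY$, each at the fixed magnitude $\sigma$. If $\lambdaW\lambdaH \ge \tfrac1{NK}$ then $\sigma\le 0$, forcing $r=0$ and $\mW=\mH=\vzero$ (the global minimizer). If $\lambdaW\lambdaH < \tfrac1{NK}$, the objective value depends only on $r$ and strictly decreases as $r$ grows (each added mode lowers the cost by a fixed positive amount precisely when $\gamma>\sqrt{\lambdaW\lambdaH}$), so the global minimizers are exactly those with $r=\min(d,K)$, and every critical point with $r<\min(d,K)$ is non-global.

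It then remains to show each non-global critical point ($r<\min(d,K)$, hence both $r<d$ and $r<K$) is a strict saddle. Here there is a unit $\vc\in\Null(\mW)=\calS^\perp$ and an unused singular pair $(\vp,\vq)$ of $\mY$ with $\frac1N\vp^\top\mY\vq=\gamma$, $\vp\perp\Range(\mU)$, $\vq\perp\Range(\mV)$. I would probe the Hessian quadratic form
\[ \nabla^2 g[\mDelta,\mDelta] = \tfrac{1}{N}\norm{\mDelta_{\mW}\mH + \mW\mDelta_{\mH}}{F}^2 + \tfrac{2}{N}\langle \mZ-\mY,\, \mDelta_{\mW}\mDelta_{\mH}\rangle + \lambdaW\norm{\mDelta_{\mW}}{F}^2 + \lambdaH\norm{\mDelta_{\mH}}{F}^2 \]
along $\mDelta_{\mW} = a\,\vp\vc^\top$, $\mDelta_{\mH} = b\,\vc\vq^\top$. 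Since $\vc^\top\mH = 0$ and $\mW\vc = 0$, the first term vanishes, $\mDelta_{\mW}\mDelta_{\mH} = ab\,\vp\vq^\top$, and $\langle\mZ-\mY,\vp\vq^\top\rangle = -N\gamma$, leaving $\nabla^2 g[\mDelta,\mDelta] = \lambdaW a^2 + \lambdaH b^2 - 2\gamma ab$. This binary quadratic form has determinant $\lambdaW\lambdaH - \gamma^2 < 0$ exactly because $\lambdaW\lambdaH < \tfrac1{NK}=\gamma^2$, so a suitable $(a,b)$ makes it strictly negative; the Hessian has a negative eigenvalue and the point is a strict saddle.

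The main obstacle I anticipate is the rigidity step: proving that the two stationarity relations together with $\mQ\mQ^\top = \mId_K$ force $\mD = \gamma\mId$, locking every critical point onto equal-magnitude genuine singular modes of $\mY$. This is exactly where the isotropy $\mY\mY^\top = n\mId_K$ is indispensable — for a general target with unequal singular values, full-rank critical points that select the wrong modes would arise and would need a separate, more delicate mode-rotation negative-curvature argument (which degenerates to zero curvature in the isotropic case and is precisely what the rigidity lets me avoid). A secondary care point is that, when $d<K$, one cannot characterize global optimality by the naive complement condition $\norm{\mE}{\mathrm{op}}\le\sqrt{\lambdaW\lambdaH}$, since the rank constraint is active there; instead one compares objective values across the admissible ranks $r\le\min(d,K)$. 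I would finally package the dichotomy — maximal-rank critical point $\Rightarrow$ global minimizer, under-rank critical point $\Rightarrow$ strict saddle — into the stated strict-saddle / no-spurious-local-minimum conclusion, observing that the transition threshold $\lambdaW\lambdaH \lessgtr \tfrac1{NK}$ coincides with the one in \Cref{thm:global-minima}.
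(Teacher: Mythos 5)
Your proposal is correct, and it reaches the conclusion by a route that differs meaningfully from the paper's. The paper proves \Cref{thm:global-geometry-no-bias} by invoking \Cref{lem:global-fact-nuclear}, a lemma stated for an \emph{arbitrary} target matrix $\wt\mY$: after using the balancedness identity to orthogonalize $(\mW,\mH)$ by a rotation, it analyzes critical points column-by-column, showing each nonzero column pair is a scaled singular pair of $\wt\mY$ with $\sigma_{i_j}=\sqrt{\lambdaW/\lambdaH}\,\|\vw_i\|^2+\sqrt{\lambdaW\lambdaH}$, and then needs \emph{two} negative-curvature constructions (a null-vector argument for $d\ge K$, and a smallest-eigenvalue argument for $d<K$), precisely because for a general target a critical point can have full column rank yet capture the ``wrong'' singular modes. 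You instead exploit the isotropy $\mY\mY^\top=n\mId_K$ from the outset: your rigidity step ($\gamma^2\mU=\mU\mD^2$ forcing $\mD=\gamma\mId$) shows every critical point consists of $r$ exactly-aligned modes all at the common magnitude $\sigma=\sqrt n - N\sqrt{\lambdaW\lambdaH}$, so critical points are classified by rank alone, wrong-mode saddles cannot exist, and the single null-vector construction (which is essentially the paper's Case I construction, with the same balanced $(\lambdaH/\lambdaW)^{1/4}$ scaling appearing as your optimal $(a,b)$) suffices even when $d<K$. The trade-off: the paper's generality is not gratuitous — the same lemma is reused with $\wt\mY=\mY-\vb\vone^\top$ (whose singular values are \emph{not} all equal) inside the proof of \Cref{thm:global-minima}, so your argument could not replace it there; conversely, for the bias-free statement at hand your proof is self-contained, avoids the two-case analysis, and makes transparent exactly where the one-hot/balanced structure of $\mY$ enters. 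One small point worth making explicit if you write this up: coercivity of the objective guarantees the global minimum is attained at a critical point, which is needed to conclude from your rank-monotone value formula that the $r=\min(d,K)$ critical points are indeed the global minimizers (and, in the regime $\lambdaW\lambdaH\ge \tfrac{1}{NK}$, that the origin is one).
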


We postpone the proof to Appendix \ref{app:thm-global} (see \Cref{lem:global-fact-nuclear}). By viewing $\mW$ and $\mH$ as two factors of a matrix $\mb Z = \mb W \mb H $, the formulation in \eqref{eq:obj} is closely related to nonconvex low-rank matrix problems \cite{haeffele2015global,ge2016matrix,bhojanapalli2016global,ge2017no,li2019non,li2019symmetry,chi2019nonconvex} with the Burer-Moneirto factorization approach \cite{burer2003nonlinear}. In particular, the work \cite{ciliberto2017reexamining,li2019non} studied a similar problem with $\lambdaW = \lambdaH$, but only for particular choices of $d$: $d$ is either required to be exactly the rank of the solution of the corresponding convex problem \cite{li2019non}, or relatively large in \cite{ciliberto2017reexamining}. In contrast, our \Cref{thm:global-geometry-no-bias} characterizes the benign landscape for all choices of feature dimension.

The following result establishes global optimization landscape of the MSE loss \eqref{eq:obj}. 

\begin{theorem}\label{thm:global-geometry} \emph{\bf (Benign landscape for MSE loss \eqref{eq:obj})}
    Assume that the feature dimension $d$ is larger than the number of classes $K$. %(i.e., $d> K$). 
    The nonconvex MSE loss function $f(\mb W,\mb H,\mb b)$ in \eqref{eq:obj} is a strict saddle function.
\end{theorem}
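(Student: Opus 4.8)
The plan is to eliminate the bias $\vb$ by partial minimization and thereby reduce \Cref{thm:global-geometry} to a bias-free matrix-factorization landscape analysis in the spirit of \Cref{thm:global-geometry-no-bias}. For fixed $(\mW,\mH)$ the objective $f$ is a strongly convex quadratic in $\vb$: its $\vb$-Hessian is $(1+\lambdab)\mId_K\succ\mzero$, so the minimizer is unique and smooth, and a short computation gives $\vb^\star(\mW,\mH)=\tfrac{1}{N(1+\lambdab)}(\mY-\mW\mH)\vone_N$. Defining the reduced function $g(\mW,\mH):=\min_{\vb} f(\mW,\mH,\vb)=f(\mW,\mH,\vb^\star(\mW,\mH))$ and substituting $\vb^\star$ yields the closed form
\begin{align*}
  g(\mW,\mH) = \frac{1}{2N}\norm{(\mW\mH-\mY)\mP^{1/2}}{F}^2 + \frac{\lambdaW}{2}\norm{\mW}{F}^2 + \frac{\lambdaH}{2}\norm{\mH}{F}^2, \quad \mP := \mId_N - \tfrac{1}{N(1+\lambdab)}\vone_N\vone_N^\top,
\end{align*}
where $\mP\succ\mzero$ since its eigenvalues are $1$ and $\tfrac{\lambdab}{1+\lambdab}>0$. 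Thus eliminating the bias turns $f$ into a column-weighted MSE with (generally unequal) penalties on $\mW$ and $\mH$.

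Next I would transfer the geometry from $g$ back to $f$. Because $\nabla_{\vb} f=\vzero$ pins $\vb=\vb^\star(\mW,\mH)$, the critical points of $f$ are exactly the triples $(\mW,\mH,\vb^\star(\mW,\mH))$ with $(\mW,\mH)$ critical for $g$, and global minimizers correspond likewise since $\min f=\min_{\mW,\mH} g$. At any such point the full Hessian $\nabla^2 f$ is block structured with positive-definite block $\nabla^2_{\vb\vb}f=(1+\lambdab)\mId_K$, and $\nabla^2 g$ is precisely its Schur complement; by Haynsworth inertia additivity, $\nabla^2 f$ and $\nabla^2 g$ have the same number of negative eigenvalues. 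Hence it suffices to prove that $g$ is a strict saddle function, i.e., every non-global critical point of $g$ admits a direction of strictly negative curvature.

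The third step is to establish the benign landscape of $g$ by adapting the nuclear-norm/Burer--Monteiro argument behind \Cref{thm:global-geometry-no-bias} (\Cref{lem:global-fact-nuclear}) to the weighting $\mP$ and the unequal penalties. Viewing $\mZ=\mW\mH$ and using $\tfrac{\lambdaW}{2}\norm{\mW}{F}^2+\tfrac{\lambdaH}{2}\norm{\mH}{F}^2\ge\sqrt{\lambdaW\lambdaH}\,\norm{\mW}{F}\norm{\mH}{F}\ge\sqrt{\lambdaW\lambdaH}\,\norm{\mZ}{*}$ (equality iff the factors are balanced), $g$ is lower bounded by a convex program in $\mZ$ with nuclear-norm regularization, whose unique minimizer $\mZ^\star$ one verifies has $\rank(\mZ^\star)\le K$ because $\mY$ has rank at most $K$ and $\mP^{1/2}$ is invertible. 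For a critical point $(\mW,\mH)$ of $g$ that is not global, I would show that either the first-order conditions force balancedness together with $\mW\mH=\mZ^\star$ (contradicting non-globality), or $\rank(\mW)\le\rank(\mZ^\star)\le K<d$ leaves a spare feature coordinate; this unused direction supplies an explicit rank-one perturbation of $(\mW,\mH)$ along which the Hessian of $g$ is strictly negative, completing the strict saddle claim for $g$ and hence, by Step two, for $f$.

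The hard part will be this third step: the weighting $\mP$ introduced by eliminating the bias destroys the exact structure of the bias-free objective, so one cannot simply invoke \Cref{thm:global-geometry-no-bias}. One must recheck that the convex relaxation is tight, that its minimizer retains rank at most $K$, and---most delicately---that the negative-curvature direction constructed from the spare coordinate survives both the column weighting $\mP$ and the unequal regularizers $\lambdaW\ne\lambdaH$. This is exactly where the strict hypothesis $d>K$ is used: it guarantees the extra feature dimension needed to build the escape direction at every rank-deficient, non-global critical point.
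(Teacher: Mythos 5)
Your route is correct in substance but genuinely different from the paper's. The paper gives no self-contained argument for \Cref{thm:global-geometry}: it keeps all three blocks $(\mW,\mH,\vb)$ and appeals to the technique of \cite{zhu2021geometric}, which applies verbatim because the MSE data-fit term is a smooth convex function of $\mW\mH+\vb\vone^\top$ with weight decay on every block; the negative-curvature direction there is $(c_1\vu\valpha^\top,\,c_2\valpha\vv^\top,\,\vzero)$ with $\valpha$ a unit vector in the null space of $\mW$ (this is exactly where $d>K$ enters), mirroring Case I of the paper's own \Cref{lem:global-fact-nuclear}. You instead eliminate $\vb$ by partial minimization and transfer inertia through the Schur complement. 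Your reduction steps all check out: the formula $\vb^\star(\mW,\mH)=\frac{1}{N(1+\lambdab)}(\mY-\mW\mH)\vone_N$, the weighted reduced objective with $\mP=\mId_N-\frac{1}{N(1+\lambdab)}\vone_N\vone_N^\top\succ\mzero$, the bijection between critical points of $f$ and of $g$, and the Haynsworth inertia argument (valid because $\nabla_{\vb}f(\mW,\mH,\vb^\star(\mW,\mH))\equiv\vzero$ identically, so $\nabla^2 g$ is exactly the Schur complement of the positive block $(1+\lambdab)\mId_K$). What your approach buys is a clean reduction of the bias case to a bias-free factorization problem; what it costs is the column weighting $\mP$, which the paper's route never has to confront.

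Two repairs are needed in your Step 3, which is where the remaining work lies. First, your worry that the weighting $\mP$ might break the escape direction is unfounded: with $\mDelta_{\mW}=c_1\vu\valpha^\top$, $\mDelta_{\mH}=c_2\valpha\vv^\top$, $\mW\valpha=\vzero$, and $\valpha^\top\mH=\vzero$ (the latter follows from balancedness $\lambdaW\mW^\top\mW=\lambdaH\mH\mH^\top$, which holds at \emph{every} critical point), one gets $\mDelta_{\mW}\mH+\mW\mDelta_{\mH}=\vzero$, so the second-order contribution of the data-fit term vanishes identically regardless of the weighting, and the Hessian quadratic form collapses to
\begin{align*}
2c_1c_2\innerprod{\nabla\ell(\mW\mH)}{\vu\vv^\top}+\lambdaW c_1^2+\lambdaH c_2^2,
\qquad \ell(\mZ):=\tfrac{1}{2N}\norm{(\mZ-\mY)\mP^{1/2}}{F}^2,
\end{align*}
which, for balanced $c_1,c_2$ and $(\vu,\vv)$ the top singular pair of $\nabla\ell(\mW\mH)$ with the appropriate sign, is negative precisely when $\norm{\nabla\ell(\mW\mH)}{2}>\sqrt{\lambdaW\lambdaH}$. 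Second, your dichotomy is keyed to the wrong quantity: the spare coordinate $\valpha$ exists at every critical point once $d>K$, and rank is a red herring since any $\mZ\in\R^{K\times N}$ trivially has rank at most $K$. The correct case split is on $\norm{\nabla\ell(\mW\mH)}{2}$ versus $\sqrt{\lambdaW\lambdaH}$: if the norm is at most $\sqrt{\lambdaW\lambdaH}$, the critical-point equations supply a nuclear-norm subgradient certifying that $\mW\mH$ solves the convex relaxation $\min_{\mZ}\ell(\mZ)+\sqrt{\lambdaW\lambdaH}\norm{\mZ}{*}$, which is tight because $d\ge K$ permits a balanced factorization of any $\mZ$, so the point is global; otherwise the display above gives strict negative curvature. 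With these repairs your proof closes, and it is arguably more self-contained than the paper's citation-based argument.
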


This result is similar to that of \cite[Theorem 3.2]{zhu2021geometric}, % which showed that the nonconvex CE loss is a strict saddle function under the unconstrained feature model. 
which showed that the unconstrained feature model with CE loss is a strict saddle function. 
The high level proof idea for \cite{zhu2021geometric} is to construct the negative curvature direction for saddle points in the null space of $\mW\in\R^{K\times d}$. Because the proof in \cite{zhu2021geometric} actually holds more generally for any smooth convex loss function with weight decay, 
the same technique also offers a proof for \Cref{thm:global-geometry} (and potentially can extend \Cref{thm:global-geometry} for the rescaled MSE in \eqref{eq:obj-rescaled}). 
Here, it should be noted that we make the assumption $d>K$ so that the null space of $\mW\in\R^{K\times d}$ always exists. However, we believe the strict saddle property holds for any $d$ and leave it as future work. %Next, we discuss the implication of \Cref{thm:global-geometry} and its relationship to prior arts.

As a consequence, if $\mb H$ is a free optimization variable, this implies that the global solutions of the training problem in \eqref{eq:obj} can be efficiently found by many first-order and second-order optimization methods \cite{bottou2018optimization}. In particular, (stochastic) gradient descent with random initialization is guaranteed \cite{ge2015escaping,lee2016gradient} to almost surely find a global minimizer for strict saddle functions with no spurious local minima, which is the case for our problem \eqref{eq:obj}. In comparison, existing results on MSE loss \cite{mixon2020neural,han2021neural} only studied the trajectory of gradient flows \eqref{eq:obj} on either the linear terms \cite{mixon2020neural} or the central path component \cite{han2021neural}, which is insufficient to explain/guarantee efficient, global convergence of iterative optimization algorithms. %\qq{figure 2 needs to be modified}

%We now investigate the global nonconvex optimization landscape of \eqref{eq:obj}. Following \cite{sun2016complete1,ge2015escaping}, we call $f$ a strict saddle function if any critical point that is not a local minimizer is a strict saddle with negative curvature, i.e., the Hessian at this critical point has at least one negative eigenvalue. Our next result implies that the training loss \eqref{eq:obj} is a strict saddle function, and every local minimizer is global. 

% \subsection{Benefit of rescaled MSE loss}

\subsection{Delving Deeper into Optimization Landscapes: Why Rescaling Helps?}
\label{subsec:landscape-rescaling}

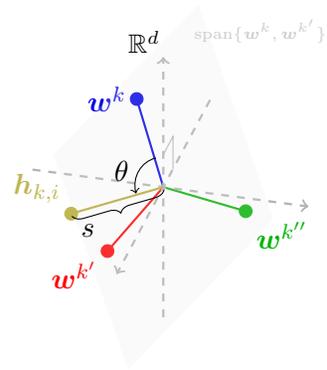
\begin{wrapfigure}{r}{0.33\textwidth}
\vspace{-0.3in}
    \begin{center}
    \tdplotsetmaincoords{60}{110}
    \begin{tikzpicture}[scale=2.0]
     
      \tdplotsetrotatedcoords{-25}{0}{30}
      \begin{scope}[tdplot_rotated_coords]
        \coordinate (origin) at (0,0,0);
      
        % one-hots
        \draw [thick,black!10!blue] (0,0,0) -- (-1/4,-1/4,2/4) node (wkvec) [anchor=east]{$\bm w^k$};
        \draw plot [mark=*, mark size=1.2, mark options={draw=black!10!blue, fill=black!10!blue}] coordinates{(-1/4,-1/4,2/4)};
        \draw [thick,red] (0,0,0) -- (2/4,-1/4,-1/4) node[anchor=north east]{$\bm w^{k'}$};
        \draw plot [mark=*, mark size=1.2, mark options={draw=red, fill=red}] coordinates{(2/4,-1/4,-1/4)};
        \draw [thick,black!30!green] (0,0,0) -- (-1/4,2/4,-1/4) node[anchor=north west]{$\bm w^{k''}$};
        \draw plot [mark=*, mark size=1.2, mark options={draw=black!30!green, fill=black!30!green}] coordinates{(-1/4,2/4,-1/4)};
        
        % coordinate axis
        \draw [black!30!white,line width=0.3mm,dashed,->] (-1.2,0,0) -- (1.2,0,0);
        \draw [black!30!white,line width=0.3mm,dashed,->] (0,-0.9,0) -- (0,1,0);
        \draw [black!30!white,line width=0.3mm,dashed,->] (0,0,-1) -- (0,0,1);
        \draw [black!30!white] (0,0,0.25) -- (-0.25,0,0.25) -- (-0.25,0,0);
        
        % plane
        \filldraw[draw=none,fill=gray!20, opacity=0.2] 
			(1/3,-2/3,1/3) -- (-0.9,0,0.9) -- (-1/3,2/3,-1/3) -- (0.9,0,-0.9) -- cycle;
		\node[gray!40] at (-1,0.4,0.7) {\tiny span$\{\bm w^k, \bm w^{k'}\}$};
		
		% h
		\draw [thick,black!30!yellow] (0,0,0) -- (2/4,-2/4,0) node (hvec) [anchor=south east]{$\bm h_{k, i}$};
        \draw plot [mark=*, mark size=1.2, mark options={draw=black!30!yellow, fill=black!30!yellow}] coordinates{(2/4,-2/4,0)};
        
        % theta
        % \draw pic [draw, ->, angle eccentricity=1, transform shape] {angle = wkvec--origin--hvec};
        \tdplotdefinepoints(0,0,0)(-1/4,-1/4,2/4)(2/4,-2/4,0)
        \tdplotdrawpolytopearc[->]{0.2}{anchor=east}{$\theta$}
        
        % s
        \draw[decoration={brace,raise=1pt,amplitude=4pt},decorate]
			(0,0,0) -- (2/4,-2/4,0);
		\draw (2/4,-2/4,0) node [anchor=north west]{$s$};
		
		% R^d
		\node at (-1,-0.4,0.5) {\small $\bb R^d$};
      \end{scope}
    \end{tikzpicture}
    \end{center}
	\caption{\textbf{An illustration of the visualization method.}}
	\label{fig:visualization-setup}
\end{wrapfigure}
While our global landscape analysis for the vanilla MSE loss \eqref{eq:obj} in \Cref{subsec:main-geometry} implies that a gradient based algorithm converges to global \NC\;solutions \emph{asymptotically} \cite{lee2016gradient}, it did not characterize the rate of convergence -- in other words, how fast an optimization method converges. Often around the global solutions (i.e., the simplex ETF), we expect that the landscape has certain regularity condition which measures how well-aligned between the negative gradient direction and the direction towards the global solution. Thus, the regularity conditions in turn will characterize how fast a gradient based method converges. For better understanding the regularity properties and algorithmic convergences, we use visualization techniques to visualize the optimization landscape of MSE losses around the global ETFs solutions. In particular, our visualization sheds light on (\emph{i}) why training with vanilla MSE loss performs worse than that of the CE loss, and (\emph{ii}) how the rescaling techniques in \Cref{subsec:rescaled-MSE} improves the performance of the MSE loss.

\begin{figure*}[t]
    \centering
    \subfloat[\scriptsize Vanilla MSE ($\alpha=1, M = 1$)\label{fig:visualization-mse}]{
        \includegraphics[width=0.23\textwidth,trim={0cm 0cm 2cm 1cm},clip]{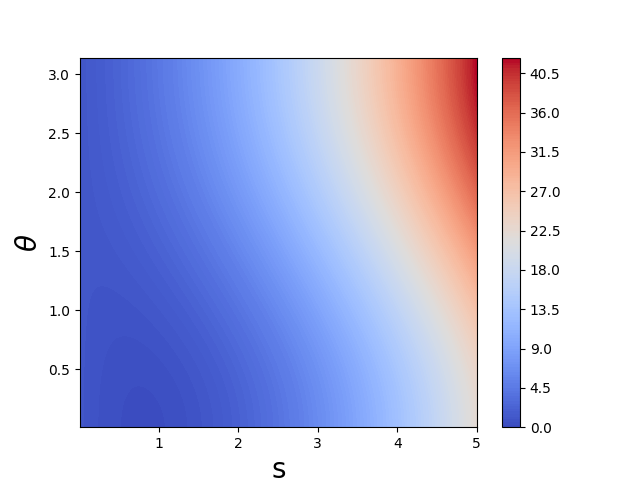}
        \includegraphics[width=0.23\textwidth,trim={3cm 1cm 2cm 2cm},clip]{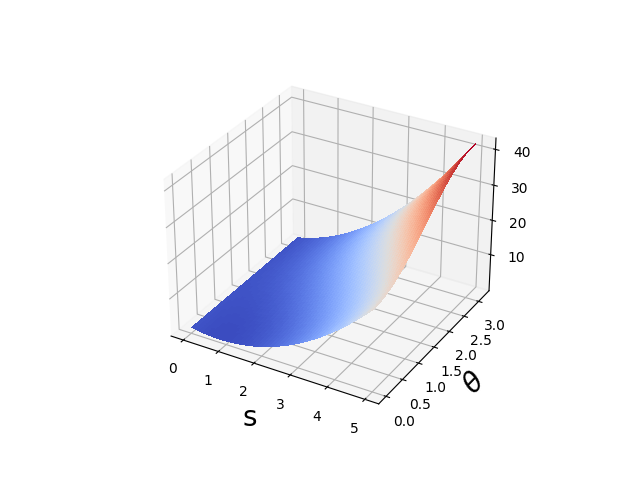}
    } 
    ~
    \subfloat[\scriptsize Cross Entropy \label{fig:visualization-ce}]{
        \includegraphics[width=0.23\textwidth,trim={0cm 0cm 2cm 1cm},clip]{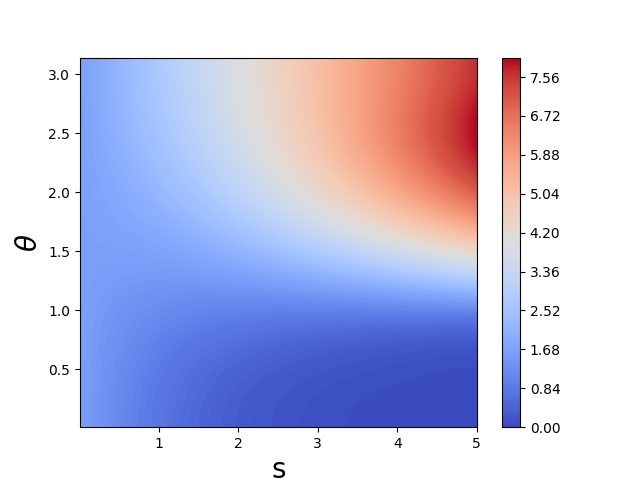}
        
        \includegraphics[width=0.23\textwidth,trim={3cm 1cm 2cm 2cm},clip]{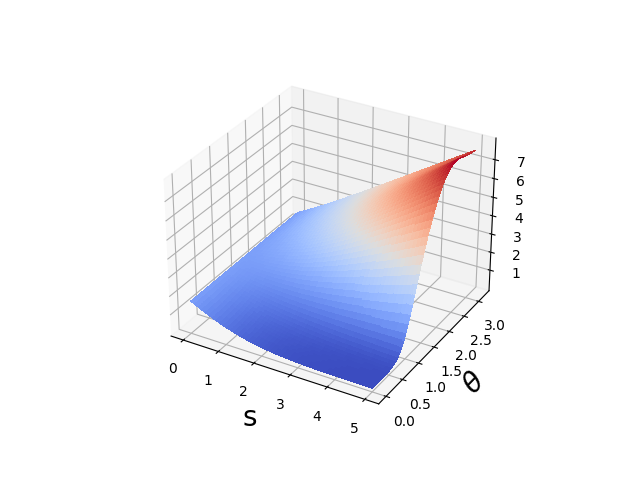}
    }  \\
    ~
    \subfloat[\scriptsize Rescaled MSE ($\alpha=5, M = 1$) \label{fig:visualization-rmse-alpha}]{
        \includegraphics[width=0.23\textwidth,trim={0cm 0cm 2cm 1cm},clip]{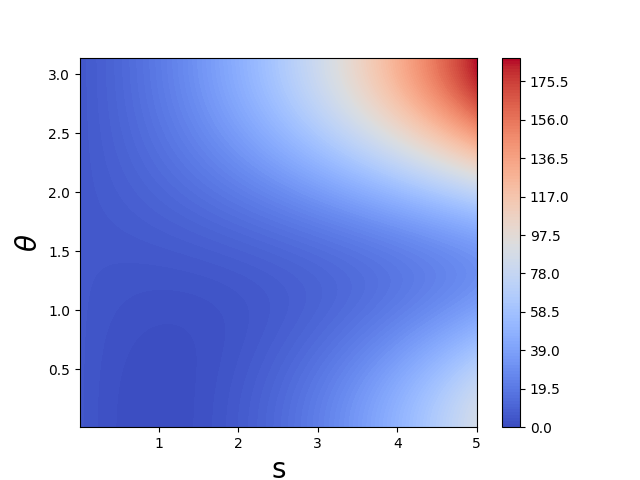}
        
        \includegraphics[width=0.23\textwidth,trim={3cm 1cm 2cm 2cm},clip]{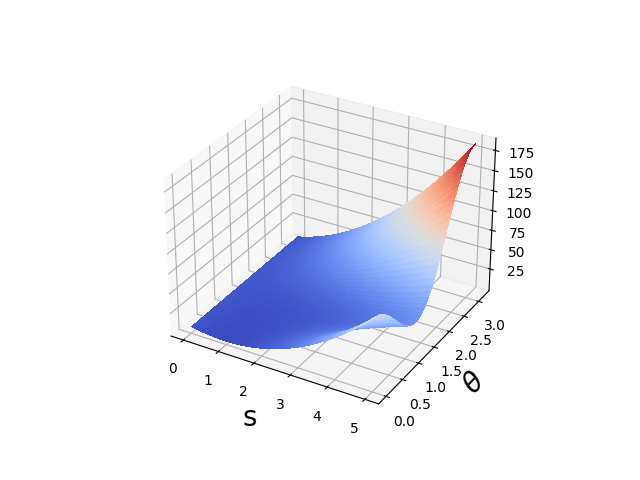}
    } 
    ~
    \subfloat[\scriptsize Rescaled MSE ($\alpha=1, M = 5$)  \label{fig:visualization-rmse-M}]{
        \includegraphics[width=0.23\textwidth,trim={0cm 0cm 2cm 1cm},clip]{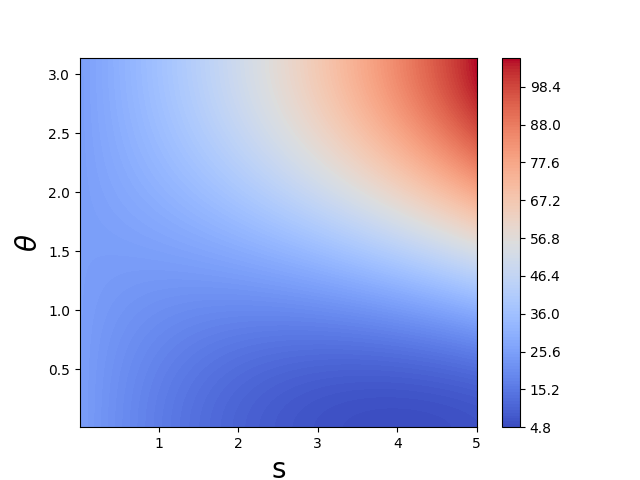}
        
        \includegraphics[width=0.23\textwidth,trim={3cm 1cm 2cm 2cm},clip]{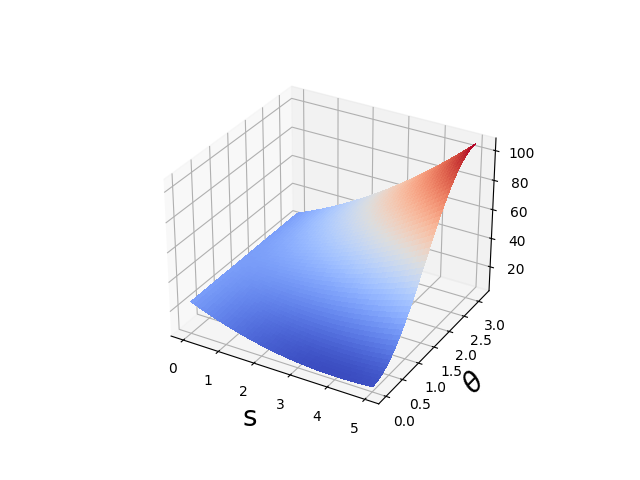}
    } 
    
    \caption{\textbf{Visualization of optimization landscape with different losses.} We fix $\mb W$ as a simplex ETF and illustrate the landscape only w.r.t. a feature $\mb h_{k, i}$. For each plot, the $s$-axis denotes $\norm{\mb h_{k, i}}{2}$, and the $\theta$-axis denotes the angle $\arccos\paren{ \innerprod{\mb h_{k, i}}{\mb w^k} }$.
    }
    \label{fig:visualization}
\end{figure*}

Even under the unconstrained feature model, visualization of the MSE loss landscape could still be difficult, which is due to the fact that the variables $\mb H, \mb W$, and $\mb b$ are all high-dimensional. Here, we further simplify the problem by assuming $\mb b = \mb 0$ and that $\mb W$ is at the global optimum and forms a simplex ETF. Thus, we can examine the landscape only with respect to (w.r.t.) the feature vectors $\mb h_{k,i}$ for the $k$th class. Although $\mb h_{k,i}\in \bb R^d$ is still high-dimensional for large $d$, we plot the optimization landscape by restricting $\mb h_{k,i}$ to a 2D plane spanned by $\{\mb w^k, \mb w^{k'}\}$, where $\mb w^k$ is the classifier for the $k$th class and $k'\not = k$ can be chosen arbitrarily because the simplex ETF is invariant to rotations. Finally, we visualize the landscape using the polar coordinates, where the $s$-axis denotes the $\ell_2$ norm of $\mb h_{k, i}$ and the $\theta$-axis denotes the angle between $\mb h_{k, i}$ and $\mb w^k$ (see \Cref{fig:visualization-setup} for an illustration). The predicted membership for $\mb h_{k, i}$ is determined by $\theta$ and is invariant to $s$. Hence, larger gradient along the $\theta$ direction may help with learning more discriminative features. See Appendix \ref{sec:appendix-visualization} for a formal explanation. This design choice allows us to examine the gradient in directions co-linear to (i.e., with varying $s$) and perpendicular to (i.e., with varying $\theta$) the decision boundary separately. %\qq{figures needs to be enlarged}

In \Cref{fig:visualization}, the visualizations of landscapes of different loss functions are provided. As we observe from  \Cref{fig:visualization-mse}, the landscape of vanilla MSE loss is steep w.r.t. $s$ while it is flat w.r.t. $\theta$. Because the size of $\theta$ determines the closeness to the right class, this implies that optimizing the vanilla MSE loss will take a longer time to converge to a desired solution with $\theta \approx 0$. In contrast, the landscape of CE loss in \Cref{fig:visualization-ce} is steeper w.r.t. $\theta$ than w.r.t. $s$ in a large region where $s > 1$ and $\theta < 1.5$. This difference of the landscapes around the global solutions potentially explains why CE is a preferred choice than the vanilla MSE, given that the features $\mb h_{k,i}$ would converge faster to the simplex ETF solutions via optimizing the CE loss. Nonetheless, the issue with the vanilla MSE can be mitigated via the rescaling approach that we discussed in \Cref{subsec:rescaled-MSE}. As shown in \Cref{fig:visualization-rmse-alpha,fig:visualization-rmse-M}, the rescaled MSE loss \eqref{eq:obj-rescaled} (with large $M$, in particular), leads to a ``better'' optimization landscape similar to that of the CE loss. Therefore, through studying the \NC\;and corresponding optimization landscapes, our work provides intuitive explanations on (\emph{i}) the incompetence of the vanilla MSE loss \eqref{eq:obj}, and (\emph{ii}) the effectiveness of rescaling \eqref{eq:obj-rescaled} for classification tasks.

\section{Experiments}\label{sec:experiment}

\begin{figure*}[t]
    \centering
    \subfloat[$\mc {NC}_1$]{\includegraphics[width=0.2\textwidth]{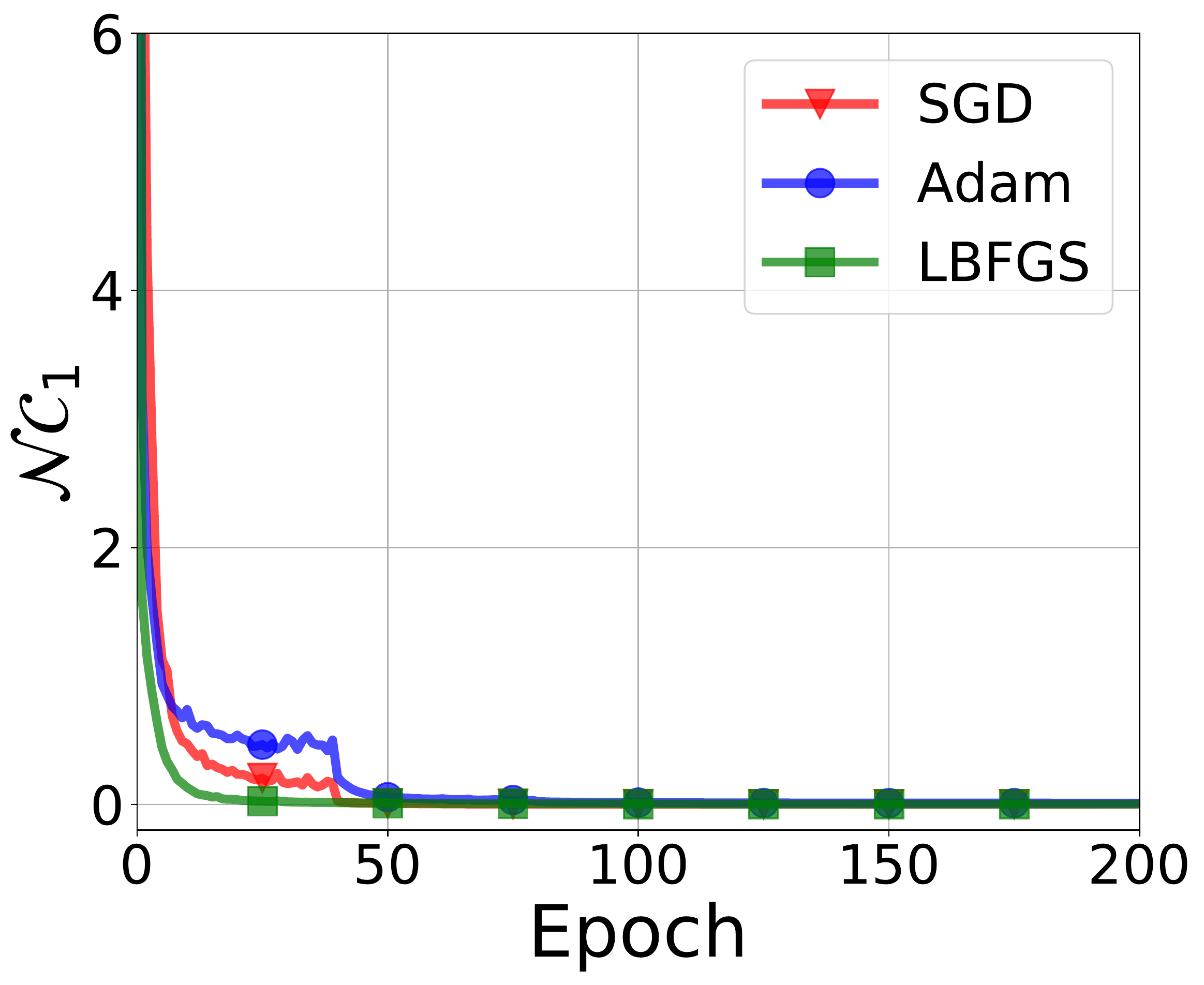}} \
    \hspace{0.08in}
    \subfloat[$\mc {NC}_2$]{\includegraphics[width=0.2\textwidth]{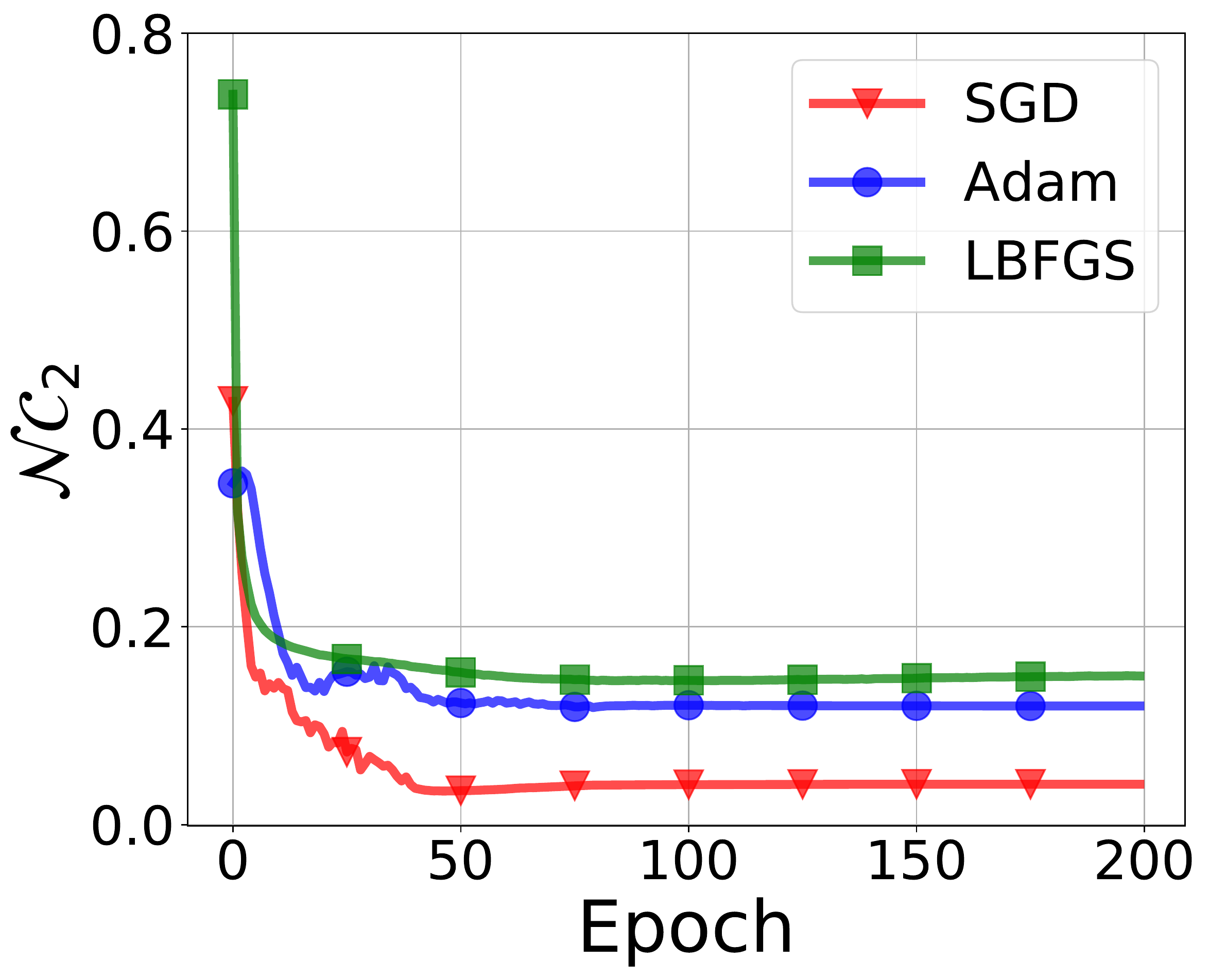}} \
    \hspace{0.08in}
    \subfloat[$\mc {NC}_3$]{\includegraphics[width=0.21\textwidth]{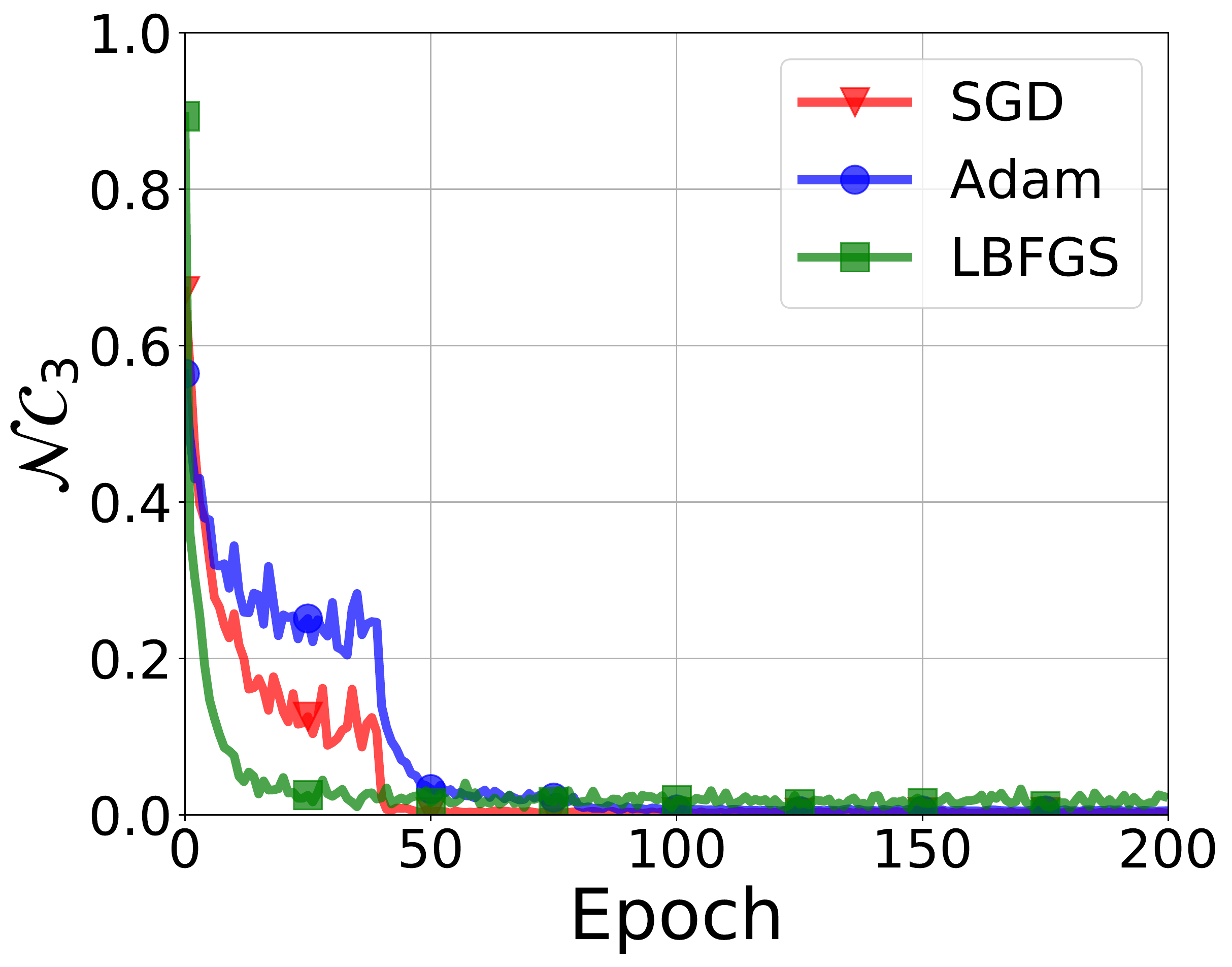}} \\
    \subfloat[Training Acc.]{\includegraphics[width=0.2\textwidth]{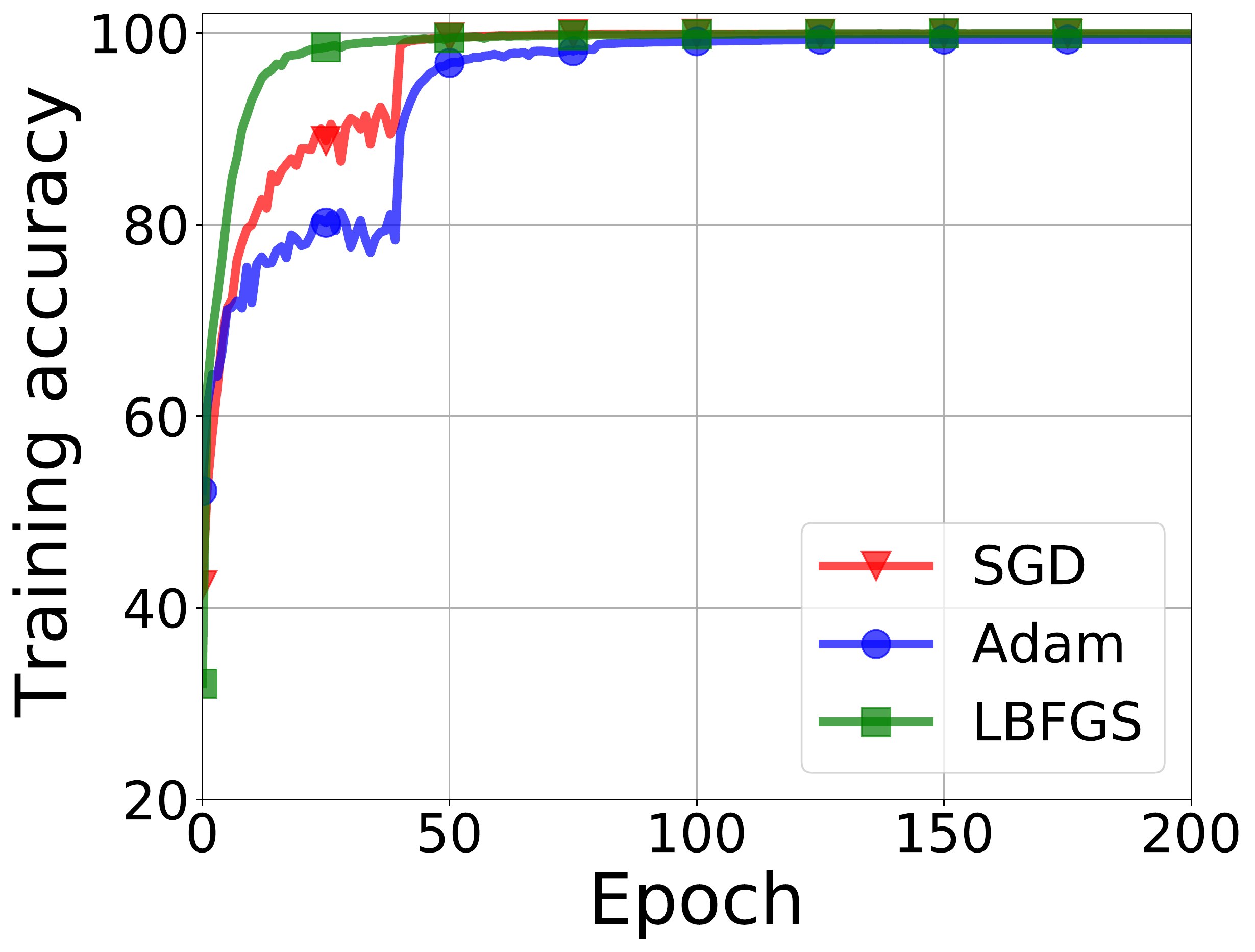}}\
    \hspace{0.08in}
    \subfloat[Test Acc.]{\includegraphics[width=0.2\textwidth]{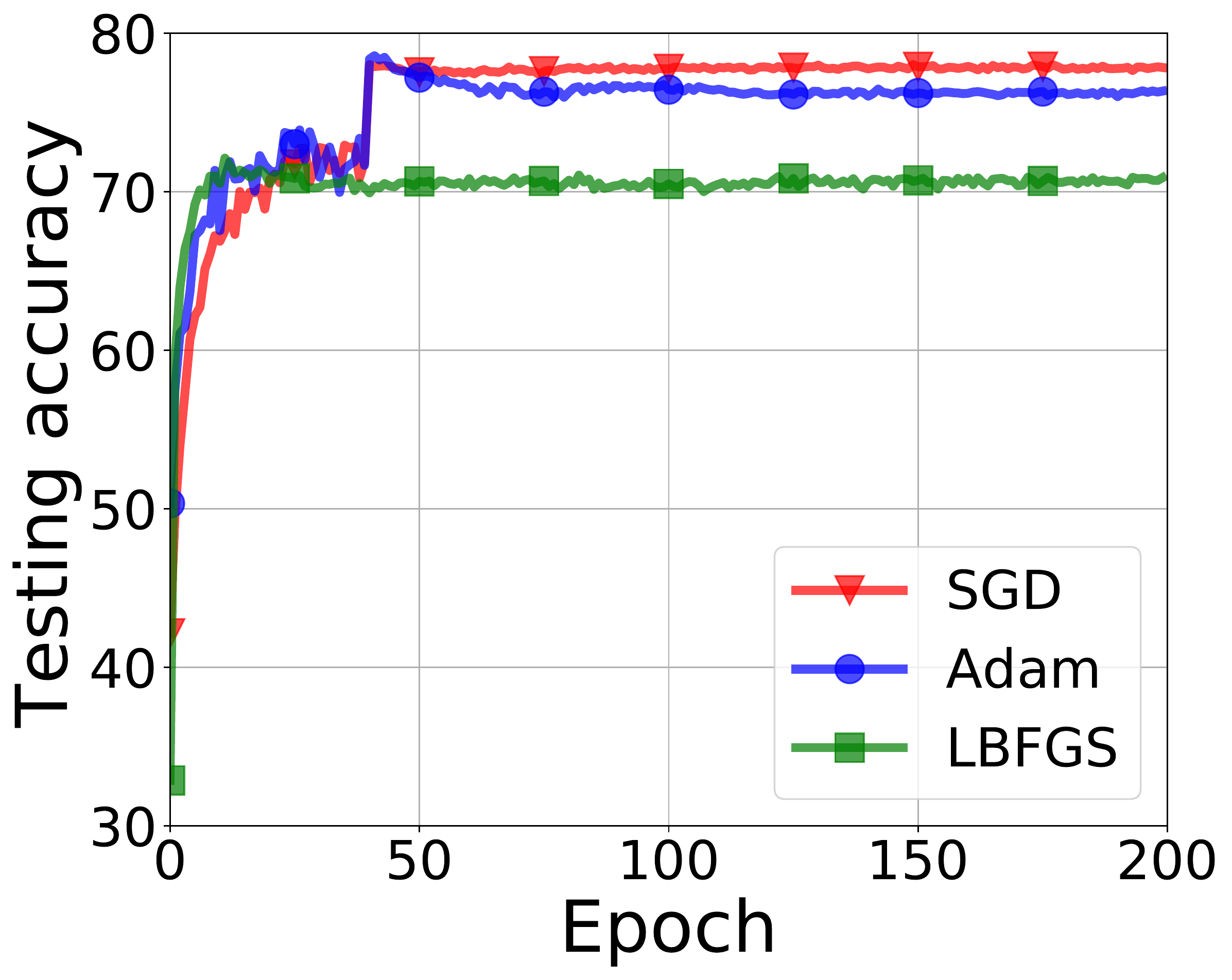}} \
    \hspace{0.08in}
    \subfloat[$\mc {P}_{CM}$]{\includegraphics[width=0.21\textwidth]{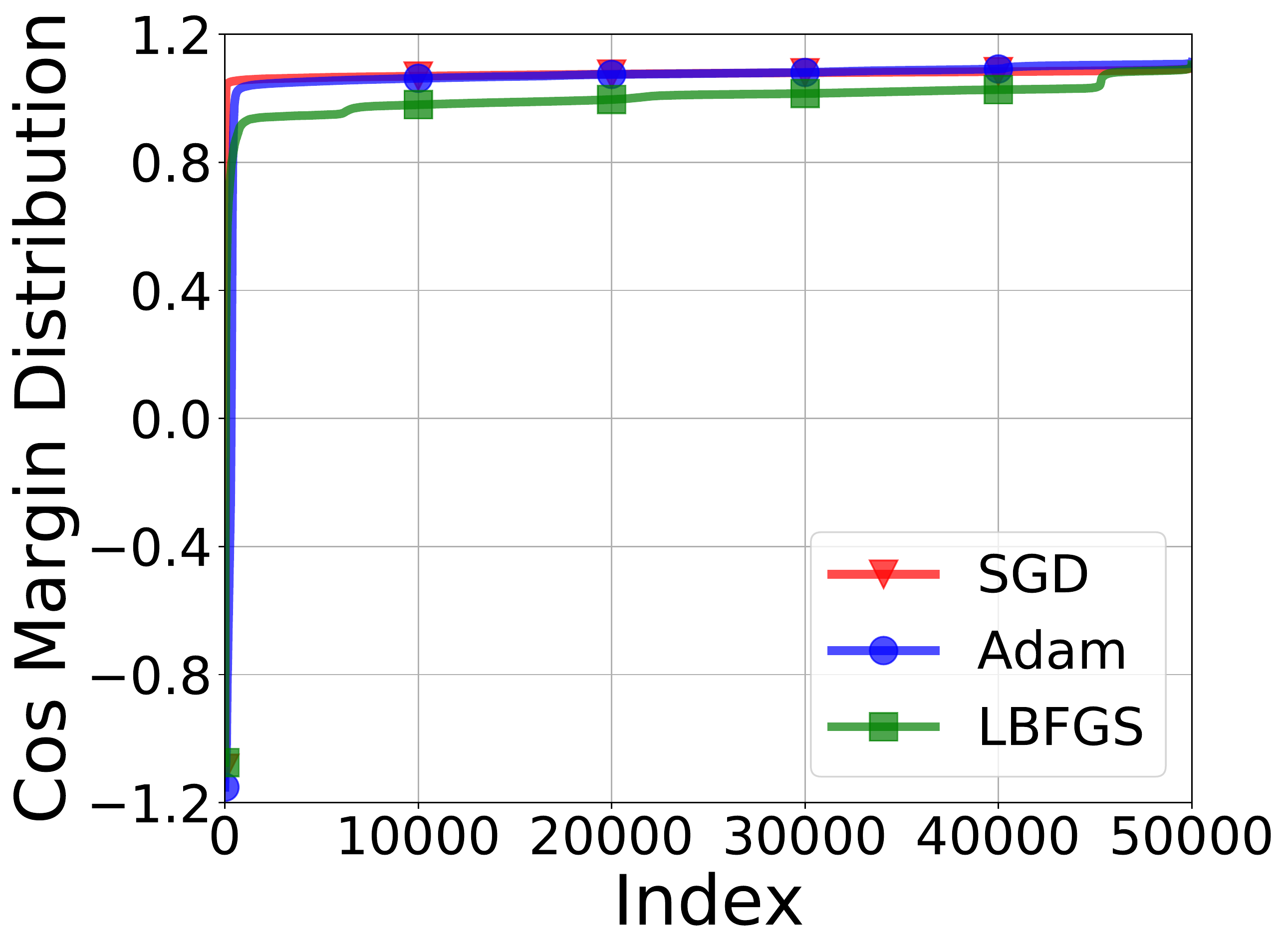}} 
    \caption{\textbf{Illustration of \NC\;, training and test accuracy and cosine margin across different training algorithms} with ResNet18 on CIFAR10. The networks are trained without data augmentation.
    % On the top row, the plots show the three metrics, $\mc {NC}_1, \mc {NC}_2$, and $\mc {NC}_3$, respectively. On the bottom row, the plots show training accuracy, test accuracy, and $\mathcal{P}_{CM}$, respectively.
    }
    \label{fig:cifar10_optim}
\end{figure*}

In this section, we conduct experiments to validate our findings from \Cref{sec:main} on practical networks and standard datasets. % More specifically, %in \cref{subsec:new_metrics}, 
We first introduce new metrics to better evaluate how well the \NC\ properties are satisfied in practical neural networks, in addition to the ones used in \cite{papyan2020prevalence,zhu2021geometric}.  %In \Cref{subsec:exp-NC}, 
Second, we verify our theoretical results in \Cref{subsec:global_optim} by showing that the \NC\ phenomena are algorithmic independent. 
%In \Cref{subsec:rescaled_exp}, 
Third, by a similar experiment as in \cite{zhu2021geometric}, we show that we could fix the last layer weights as a Simplex ETF while achieving comparable generalization performances as explicitly training the classifier. Finally, we examine our findings in \Cref{subsec:landscape-rescaling} that the rescaling factors in the rescaled MSE loss is beneficial for forming benign optimization landscapes. For the details of the experimental setup, we refer readers to the Appendix \ref{app:exp_setup}.

\paragraph{New metrics for evaluating \NC.} %\label{subsec:new_metrics}
To evaluate the \NC\;properties of well-trained neural networks, we adopt the same \NC$_1$, \NC$_2$ and \NC$_3$ metrics as \cite{papyan2020prevalence,zhu2021geometric}, which measure the within-class variability of $\mb H$, the convergence of $\mb W$ to a simplex ETF, and the self-duality between $\mb H$ and $\mb W$;
 see \Cref{app:exp_setup} for the details.\footnote{We also refer the reader to \cite{zhu2021geometric} for the exact definitions of these quantities. Note that for the case $d<K$, the definition of \NC2\ and \NC3\ will be slightly different from those in \cite{zhu2021geometric} based on our theoretical results in \Cref{subsec:global_optim}.} 
To better measure \NC, this paper also introduces the following two metrics that measure the diversities and margins of the learned features: 
\begin{itemize}[leftmargin=*,topsep=0.25em,itemsep=0.1em]
    \item \textbf{Numerical rank.} The \NC$_1$ metric measures the variability collapse through the between-class and within-class covariance matrices, which does not directly reveal the dimensionality of the features spanned for each class. Ideally, when \NC\;happens, for each class the feature dimension should collapse to one. To measure the dimensionality, we introduce a new metric that we call it \emph{numerical rank}, denoted by $\wt\rank(\mb H):= \frac{1}{K} \sum_{k=1}^K \frac{\|\mb H_k\|_{*}^2}{\|\mb H_k\|_{F}^2}$. Here, $\norm{\cdot}{*}$ represents the nuclear norm \cite{recht2010guaranteed} (i.e., the sum of singular values), while the Frobenius norm $\norm{\cdot}{F}$ in the denominator serves as a normalization factor. The metric is evaluated by averaging over all the classes. Our metric is inspired by the numeral sparsity (defined as $\|\va\|_1^2/\|\va\|_2^2$ for $\va\in\R^n$) that serves as a stable measure for sparsity of vectors \cite{lopes2013estimating}. For our numerical rank, we expect that the smaller $\wt\rank(\mb H)$ is, the more collapsed the features are to their class means.
    
  %  Besides \NC1, we use the rank of within-class features as an additional measure of the variability collapse. In particular, inspired by the numeral sparsity (defined as $\|\va\|_1^2/\|\va\|_2^2$ for $\va\in\R^n$) that serves as a stable measure for sparsity  \cite{lopes2013estimating}, we use $\tfrac{\norm{\mb H_k}{*}^2}{\norm{\mb H_k}{F}^2}$ to measure the numerical rank of the $k$-th class features $\mH_k \in \mathbb{R}^{n \times d}$, where $\norm{\mb H_k}{*}$ represents the nuclear norm, i.e., the sum of singular values. Then, we denote by $\wt\rank = \frac{1}{K} \sum_{k=1}^K \frac{\|\mb H_k\|_{*}^2}{\|\mb H_k\|_{F}^2}$ the average numerical rank among all classes. 
    \item \textbf{Cosine margin.} All current metrics measure \NC\; from a panoramic view, and do not quantify the behavior of individual features. 
    We introduce a metric based on the consine margin of individual features.
    % To evaluate the behavior of each feature individually, we introduce another metric based on the consine margin.
  %  Aside from the entire behavior of the features, we will also measure the behavior of each feature. 
    From the explanation in \Cref{subsec:landscape-rescaling}, neural network determines the class member by the direction of features rather than its length. Thus, we define the cosine margin for each sample as $CM_{k,i}=\cos{\theta_{k,i;k}}-\max\limits_{j\neq k}\cos{\theta_{k,i;j}}$, where $\cos{\theta_{k,i;j}}=\frac{\innerprod{\mb w^j-\mb w_G}{\mb h_{k,i}-\mb h_G}}{\|\mb w^j- \mb w_G\|_2 \,\|\mb h_{k,i}-\mb h_G\|_2}$ represents the cosine of the angle between the feature $\vh_{k,i}$ and the $j$-th classifier $\vw^j$, $\vh_G$ denotes the global mean of all the features, and $\vw_G$ denotes the mean of all the rows in $\mW$. Recall that 
    $\vh_{k,i} \in \mathbb{R}^{d}$ denotes the feature of $i$-th sample in the $k$-th class and $\vw^j \in \mathbb{R}^{d}$ denotes the $j$-th row of the linear classifier weight $\mW \in \mathbb{R}^{K \times d}$. We sort the cosine margins over the training dataset in the ascending order and denote the resulted distribution as $\mathcal{P}_{CM}$. We note that a similar metric has been explored by the work \cite{banburski2021distribution} as an alternative for the probability margin.\footnote{The probability margin cannot be adopted here because probability is not well-defined given that softmax is not used in the MSE loss.}
    %We are not using the probability margin here as the probability in the MSE loss case is not well-defined. To be more specific, the output can have negative values and may not sum to $1$.}.
    
    % When $d\ge K-1$ and $W$ converges to Simplex ETF, $CM_{k, i}$ achieves the maxima $\frac{K}{K-1}$ if $h_{k, i}$ aligns with $w_k$, and $CM_{k, i}$ achieves the minima $-\frac{K}{K-1}$ if $h_{k, i}$ aligns with $w_j$ and $j\neq k$.
    % =\frac{1}{nK}\sum\limits_{j=1}^K\sum\limits_{i=1}^n h_{k,i}
\end{itemize}
%\qq{refer to our paper for NC 1-3 (discuss in footnote for $d<K$), introduce the new metrics, and discuss the motivations and intuitions behind those metrics.}
%For evaluating the \NC\ performances of the experiments, we also adopt the metrics \NC1, \NC2 and \NC3\ from \cite{zhu2021geometric} where these metrics are corresponding to the last layer structures we discussed in \Cref{sec:intro}.\footnote{Note that for the case $d<k$, the definition of \NC2\ and \NC3\ will be slightly different as in \cite{zhu2021geometric}.} Besides, we consider other metrics that focus on the diversities and margins of the learned features: 
%\qq{study $d>K$ and $d<K$}

%\qq{study rescaled vs unrescaled ls loss}

%\qq{put experimental results on imagenet mini, more collapse}

%\label{subsec:exp-NC}
\paragraph{The prevalence of \NC\;across different optimization algorithms.}
% \qq{needs to make clear that this is on vanilla MSE or rescaled?}
The benign landscape for optimization of neural networks with vanilla MSE loss suggests the existence of \NC\ regardless of specific choice of the optimizer. 
We validate this result by training ResNet18 on CIFAR10 with vanilla MSE loss, using three different optimization algorithms: SGD, Adam and L-BFGS. 
As shown in \Cref{fig:cifar10_optim}, $\mc {NC}_1, \mc {NC}_2$, and $\mc {NC}_3$ converge to zero as training progresses, regardless of algorithm used. 
% This implies that \NC\; happens regardless of the choice of training methods. 
Similar to the observation for the CE loss in \cite{zhu2021geometric}, although all algorithms lead to \NC~solutions, networks trained with different algorithms have notably different generalization performances.\footnote{L-BFGS with strong Wolfe line-search strategy may result in quite small stepsize at the terminal phase of training. We think that L-BFGS with proper diminishing stepsize can improve the generalization ability.}
% Similar to \cite{zhu2021geometric}, the curves of training and test accuracy for all three different optimization show that neural networks trained with different optimizations have notably different generalization performance, even though all of them exhibit \NC.\footnote{LBFGS with strong Wolfe line-search strategy may result in quite small stepsize at the terminal phase of training. We think that LBFGS with proper diminishing stepsize can improve the generalization performance.}
We find the cosine distribution $\mathcal{P}_{CM}$ consistently aligns with the test accuracy, the more and higher. This may due to the fact that different training methods have different converge rate during the terminal phase of training, and it further lead to different distribution of features.  

\begin{figure*}[t]
    \centering
    \subfloat[$\mc {NC}_1$ (MSE)]{\includegraphics[width=0.18\textwidth]{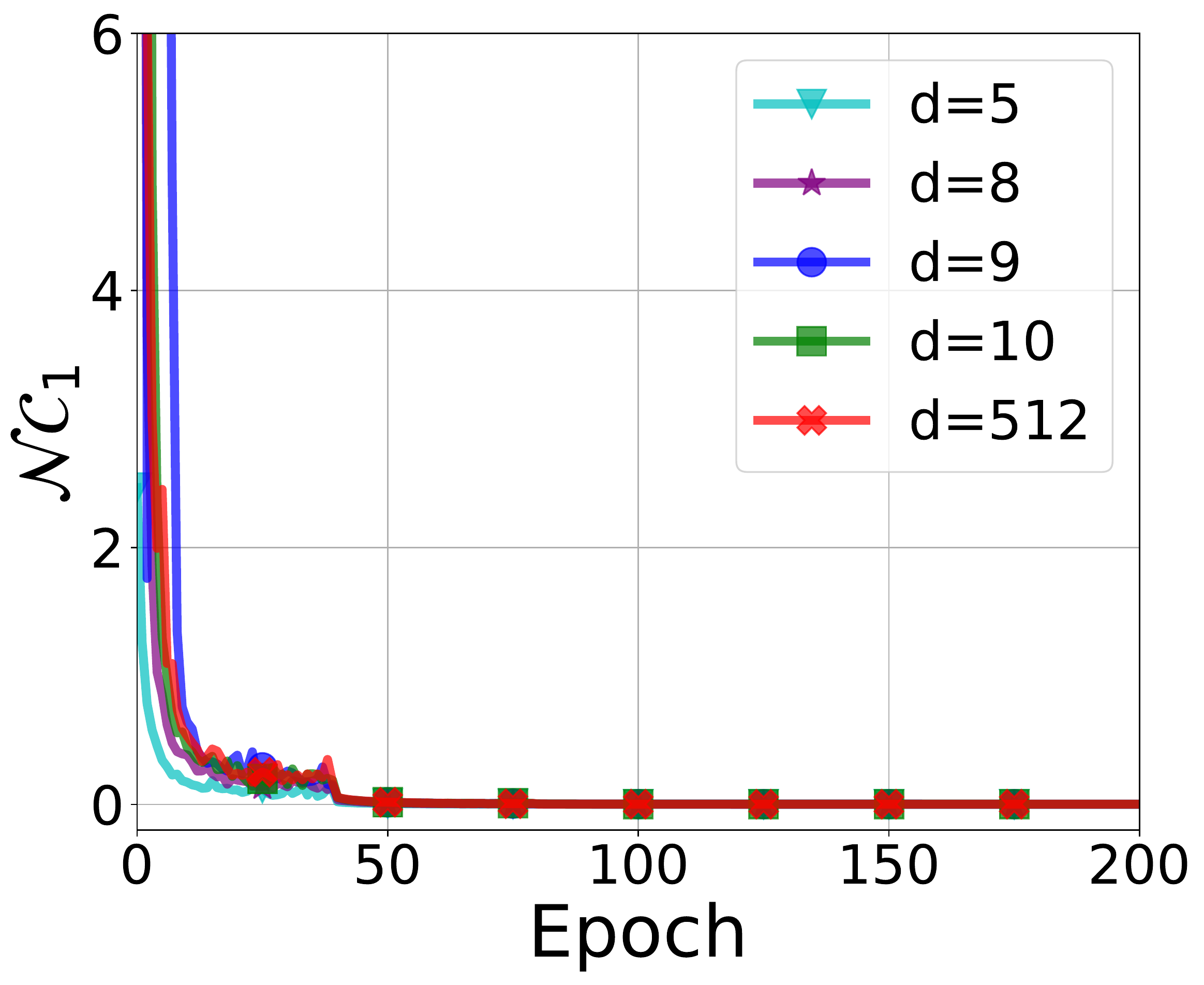}} \
    \subfloat[$\mc {P}_{CM}$ (MSE)]{\includegraphics[width=0.20\textwidth]{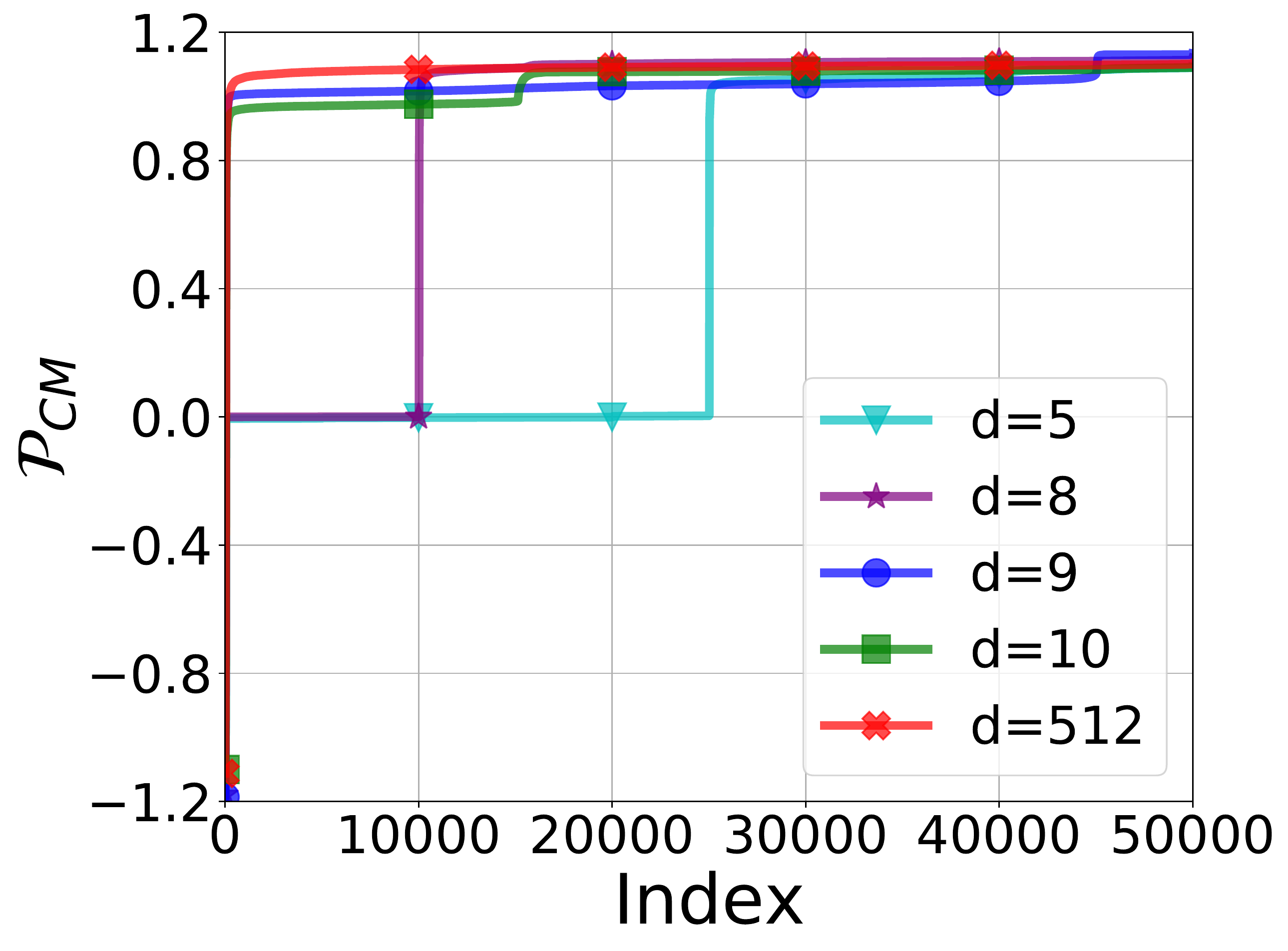}}
    \
    \subfloat[Train Acc.(MSE)]{\includegraphics[width=0.19\textwidth]{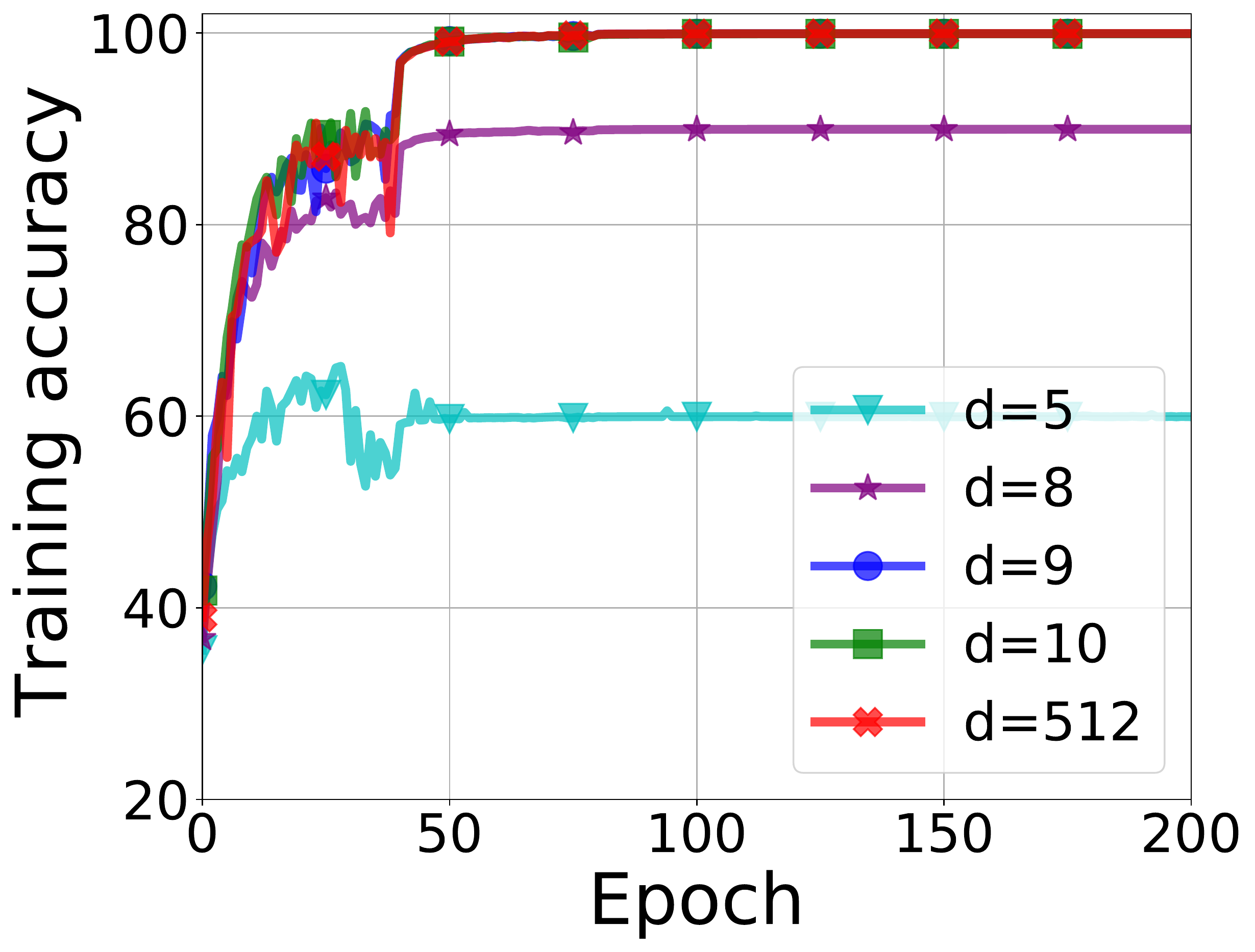}}\
    \subfloat[Test Acc. (MSE)]{\includegraphics[width=0.19\textwidth]{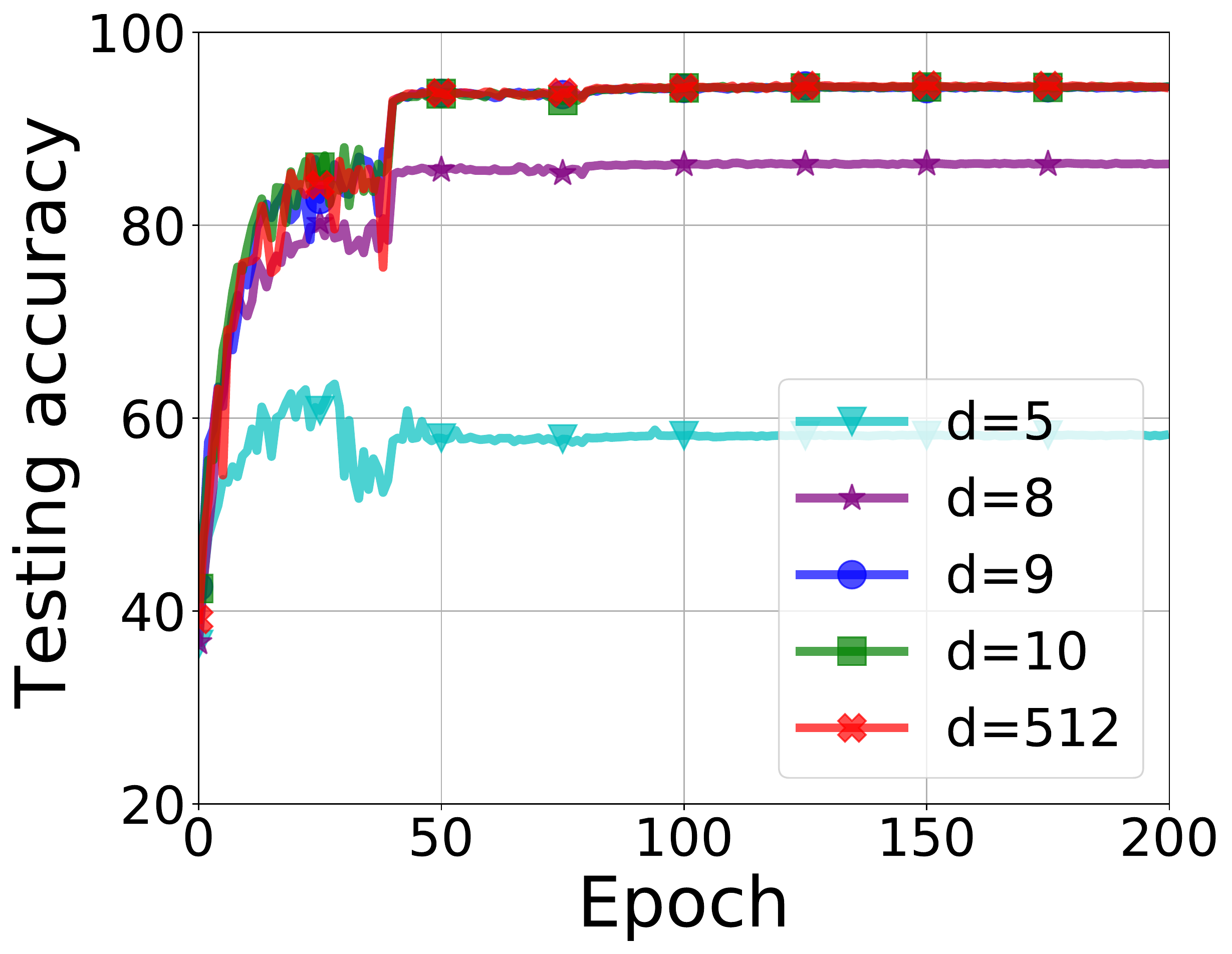}}
    \
    \subfloat[Test Acc. (CE)]{\includegraphics[width=0.19\textwidth]{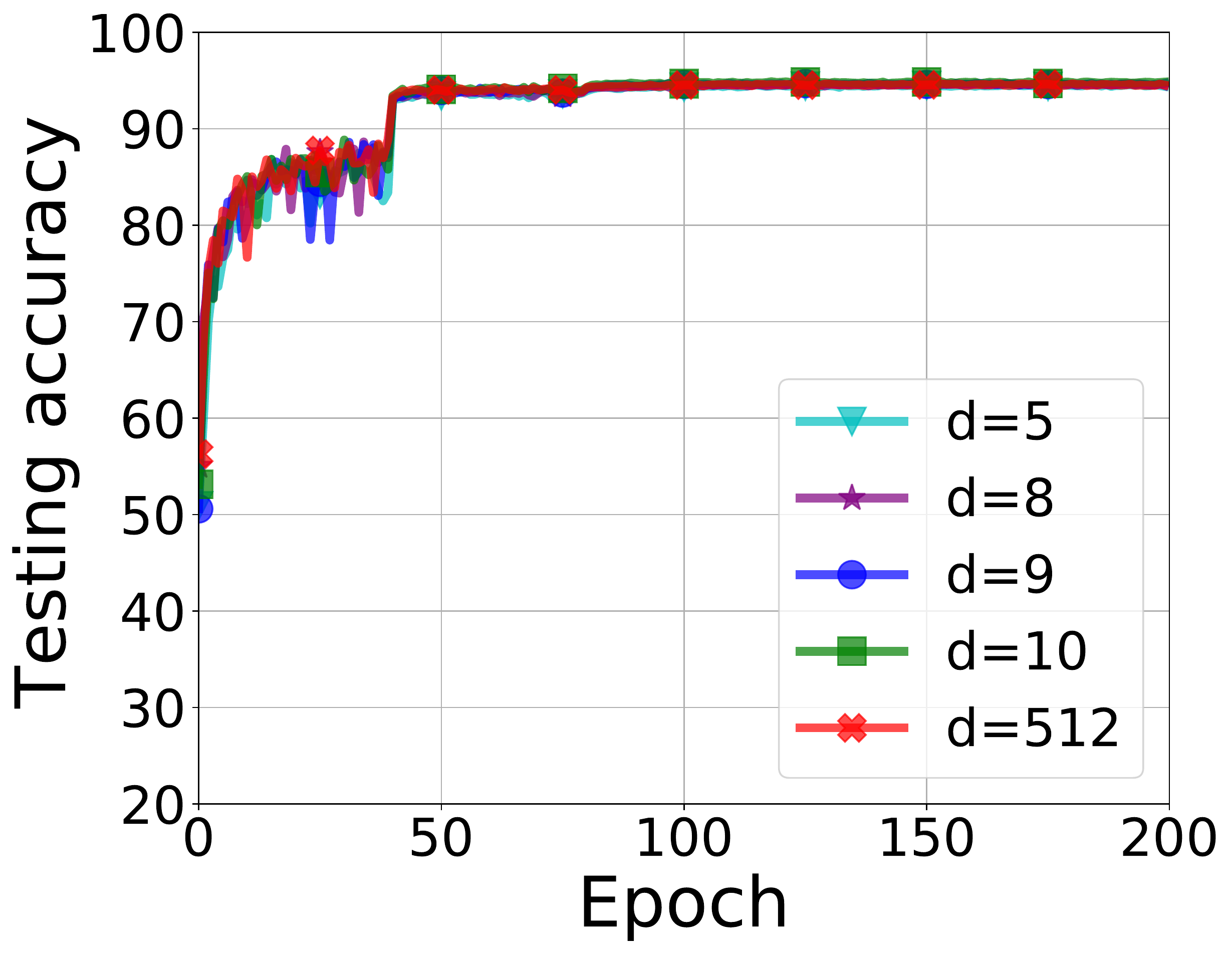}}
    \caption{\textbf{Comparison of the performances on networks with different feature dimensions $d$ for MSE and CE losses.} We compare within-class variation collapse $\mc {NC}_1$, cosine margin distribution $\mathcal{P}_{CM}$, training accuracy, and test accuracy  on learned classifier with different feature dimension $d$ on CIFAR10 using ResNet18 with data augmentation. The network is trained by the SGD optimizer.
    }
    \label{fig:cifar10_dim}
\end{figure*}

\paragraph{Improving network efficiency via fixing classifiers as simplex ETFs.}%\label{subsec:exp-fix-classifier} 
In \Cref{thm:global-minima}, when $d \geq K-1 $  and the weight decay terms are properly chosen, we showed that the optimal classifier for the vanilla MSE loss is a simplex ETF. 
This implies that we can \emph{(i)} fix the last-layer classifier as a simplex ETF, and \emph{(ii)} reduce the feature dimension $d=K$. 
By doing so, we substantially reduce the number of trainable parameters without sacrificing the generalization performance as shown in \Cref{fig:cifar10_etf}. 
% As shown \Cref{subsec:global_optim}, by doing these for the vanilla MSE loss, we can also substantially reduce the computation and memory cost without the sacrifice of performance (see \Cref{fig:cifar10_etf}), similar to that of \cite{zhu2021geometric}. 
% \qq{again, which loss are we using? vanill MSE or rescaled?}
%We demonstrate that the weight of the final classifier under standard MSE loss has the same geometry as that under CE loss \cite{zhu2021geometric} when $d \geq K$ and the weight decay terms are properly chosen. Thus, we can also substantially reduce the computation and memory cost without he sacrifice of performance, by \emph{(i)} fixing the last-layer classifier as a Simplex ETF, and \emph{(ii)} reducing the feature dimension $d=K$. In \Cref{fig:cifar10_etf}, we run experiments on the CIFAR10 dataset with ResNet18. Similar to the those of CE loss, the results of standard MSE loss verify our conclusion in \Cref{subsec:global_optim} that we can fix the weights in the last layer as a Simplex ETF and choose $d=K$ with comparable performance. \xiao{ Is this part essentially the same thing with the previous work? Should we just cite the previous work and say that similar experiments also works in the MSE case?}

\paragraph{Choice of the feature dimension $d$.}%\label{sec:exp-dim} 
On the other hand, \Cref{thm:global-minima} shows that the optimal class means $\ol\mH^\star$ form a simplex ETF only when $d\geq K-1$.
% In \Cref{subsec:global_optim} (well below \Cref{thm:global-minima}), we discussed the implications of \Cref{thm:global-minima} for the choice of the feature dimension $d$. Our theory implies that the global solutions are simplex ETFs only when $d\geq K-1$. 
If $d<K-1$,  then the global solution $\ol\mH^\star$ is only the best rank-$d$ approximation of the simplex ETF, where the class-means of the each class neither have equal length nor are maximally distant.
% which will lead to poor generalization performance. 
To demonstrate its effect, we run experiments on the CIFAR10 dataset using vanilla MSE loss and ResNet18, with both $d<K-1$ and $d \geq K-1$. 
As shown in \Cref{fig:cifar10_dim}, even though all cases exhibit \NC, choosing $d\geq K-1$ is crucial for fitting the training data and generalization to test data. 
This is also corroborated by observing $\mathcal{P}_{CM}$, which shows that more training samples lie on the decision boundary (i.e., $CM_{k,i}=0$) as $d$ decreases in the range of $d<K-1$. \revise{As shown in \Cref{fig:cifar10_dim}(e), this is in sharp contrast to CE loss which produces similar performance for different $d$. Note that all the existing work on CE loss \cite{papyan2020prevalence,papyan2020traces,fang2021layer,mixon2020neural,graf2021dissecting,ergen2021revealing,zhu2021geometric} only study the case when $d\ge K$. In the Appendix, we visually compare the features learned by CE and MSE, but we leave the thorough analysis for CE loss as future work.}

%In \Cref{subsec:global_optim}, although we also show the geometry of the final classifier when $d<K-1$, the relationship between $d$ and performance is unclear for us. In \Cref{fig:cifar10_dim}, we run experiments on the CIFAR10 dataset with ResNet18. The gap between $d<K-1$ and $d \geq K-1$ in training accuracy and testing accuracy plots indicates that $d\geq K-1$ is crucial for learning capacity and generalization, even though all cases are collapsed in the $\mc {NC}_1$ plot. In $\mathcal{P}_{CM}$ plot, we find that more training samples locate at the separate boundary($CM_{k,i}=0$) as $d$ decreases for $d<K-1$.

\begin{figure*}[t]
    \centering
    \subfloat[$\mc {NC}_1$]{\includegraphics[width=0.183\textwidth]{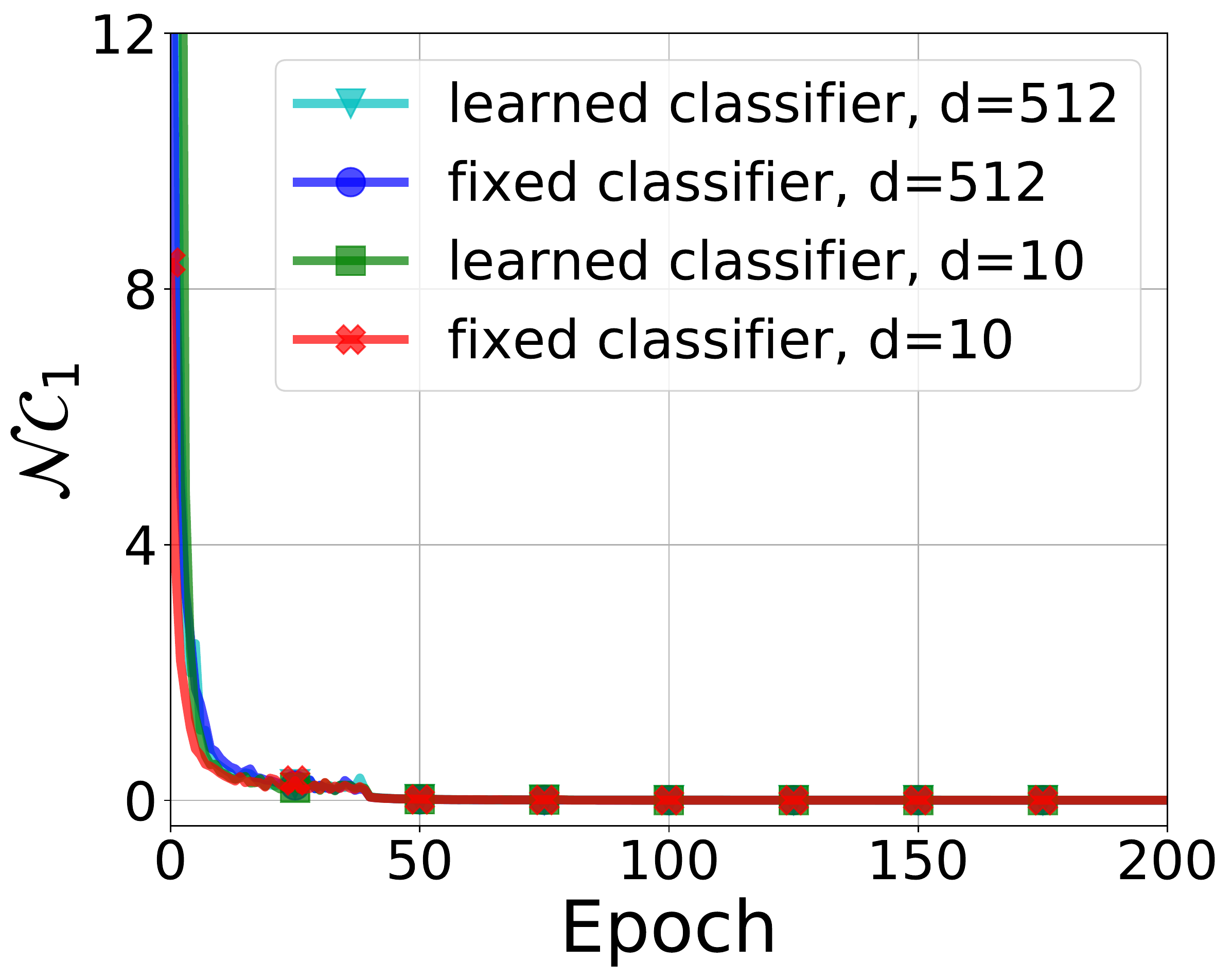}} \
    \subfloat[$\mc {NC}_3$]{\includegraphics[width=0.187\textwidth]{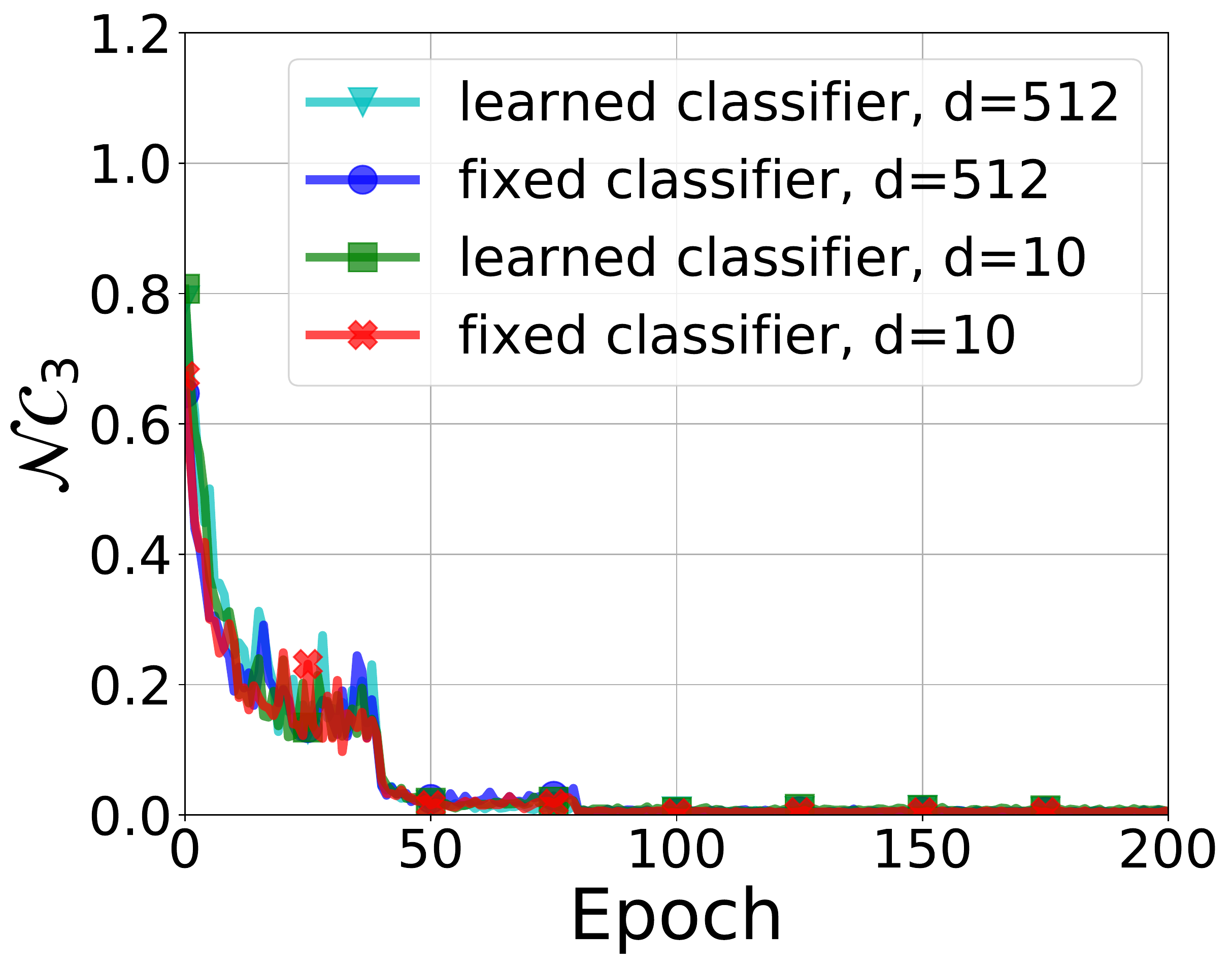}} \
    \subfloat[Training Acc.]{\includegraphics[width=0.187\textwidth]{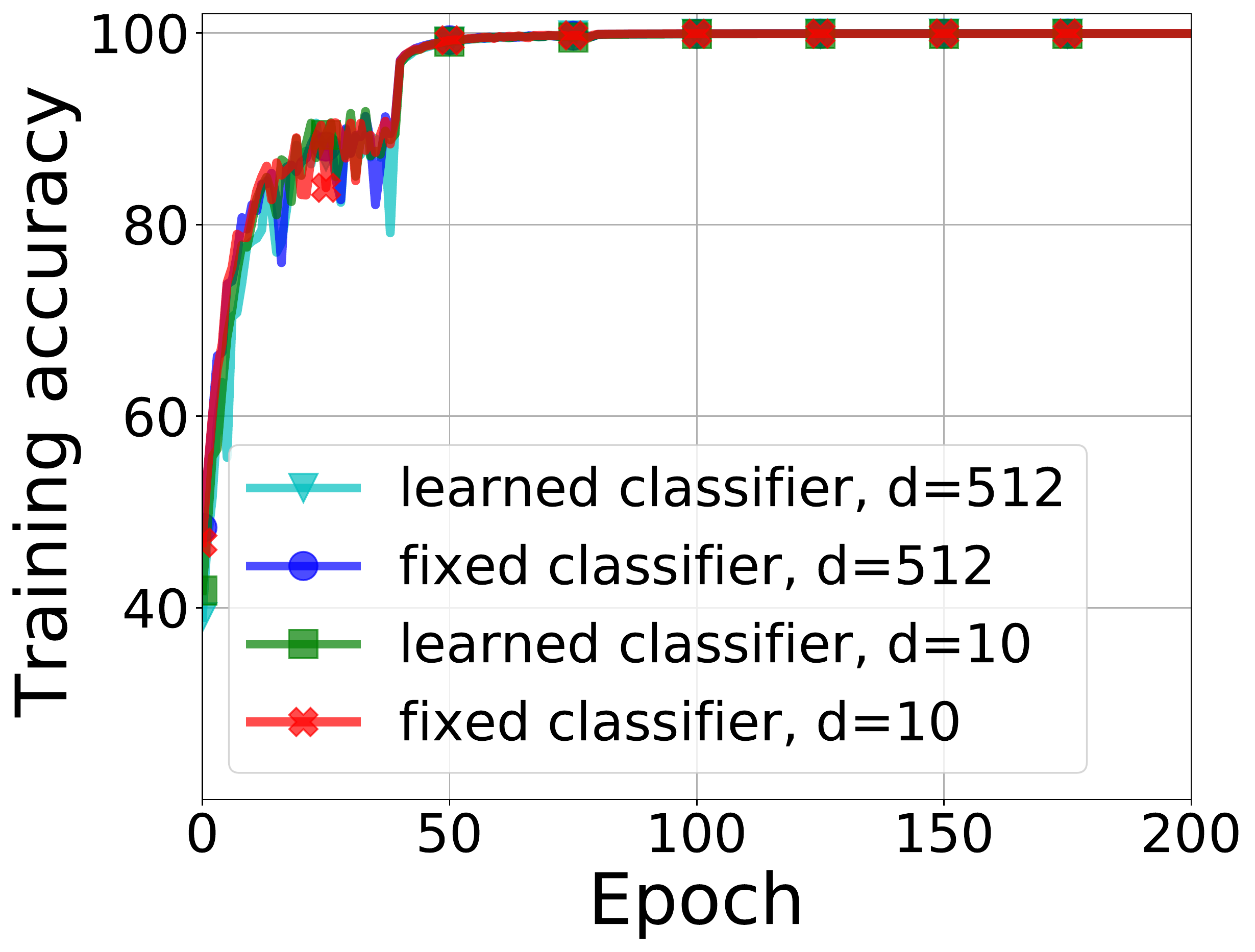}}\
    \subfloat[Test Acc.]{\includegraphics[width=0.187\textwidth]{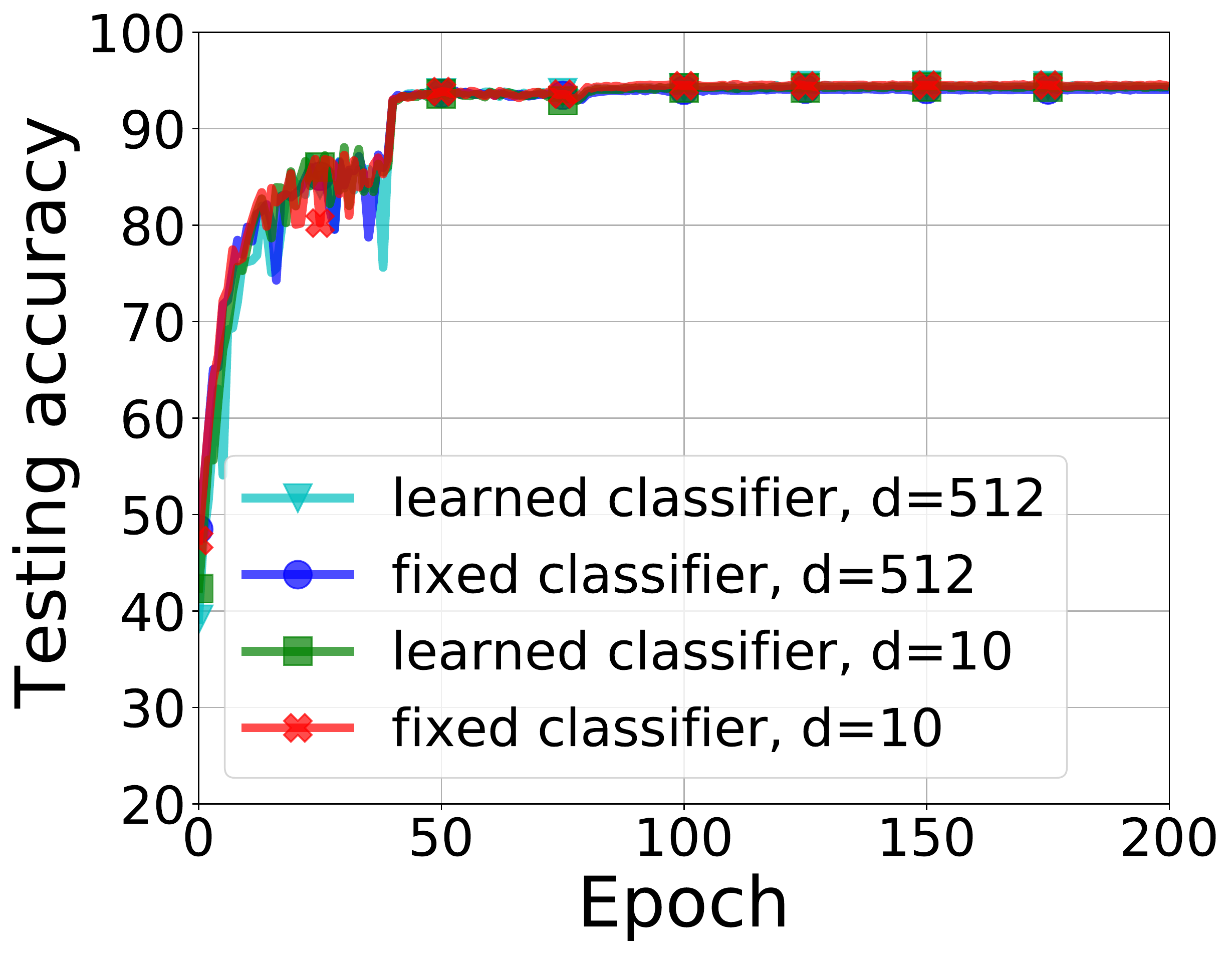}}\
    \subfloat[$\mc {P}_{CM}$]{\includegraphics[width=0.196\textwidth]{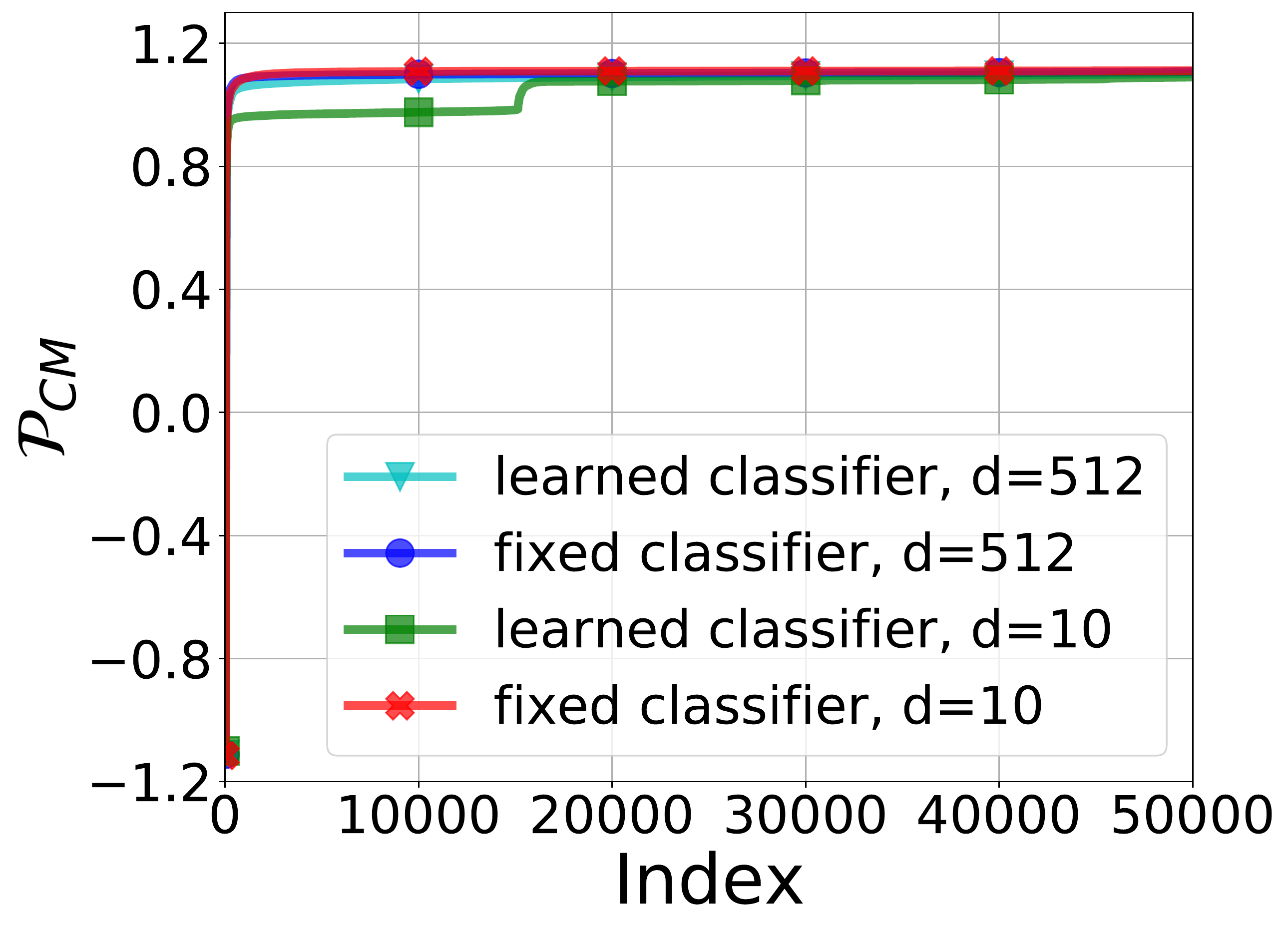}} 
    \caption{\textbf{Comparison of the performances on learned vs. fixed last-layer classifiers.} We compare within-class variation collapse $\mc {NC}_1$, self-duality $\mc {NC}_3$, training accuracy, test accuracy and cosine margin distribution $\mathcal{P}_{CM}$ on fixed and learned classifier on CIFAR10-ResNet18 with data augmentation. The network is trained by SGD optimizer.
    }
    \label{fig:cifar10_etf}
\end{figure*}

\paragraph{Experiments of the rescaled MSE loss.}
%\label{subsec:rescaled_exp}
In \Cref{subsec:landscape-rescaling}, we argued through landscape visualization that rescaling improves the optimization landscape for the MSE loss around the global solutions. Here, we corroborate our findings via experiments, showing that rescaling of MSE indeed leads to better \NC\;and hence better optimization landscapes. In particular, we empirically examine the effect of the two rescaling parameters $(\alpha,M)$ on the \NC\;phenomenon and the generalization performance. In \Cref{fig:mini_image_rescaling}, we run experiments on the miniImageNet \cite{vinyals2016matching} dataset with ResNet18 \cite{he2016deep}. We notice that when one scaling factor is fixed, the other scaling parameter has a positive correlation with the degree of \NC\;as well as the training and test performances. This observation is well-aligned with our analysis in \Cref{subsec:landscape-rescaling}.

\begin{figure*}[t]
    \centering
    \subfloat[$\mc {NC}_1$ ]{\includegraphics[width=0.16\textwidth]{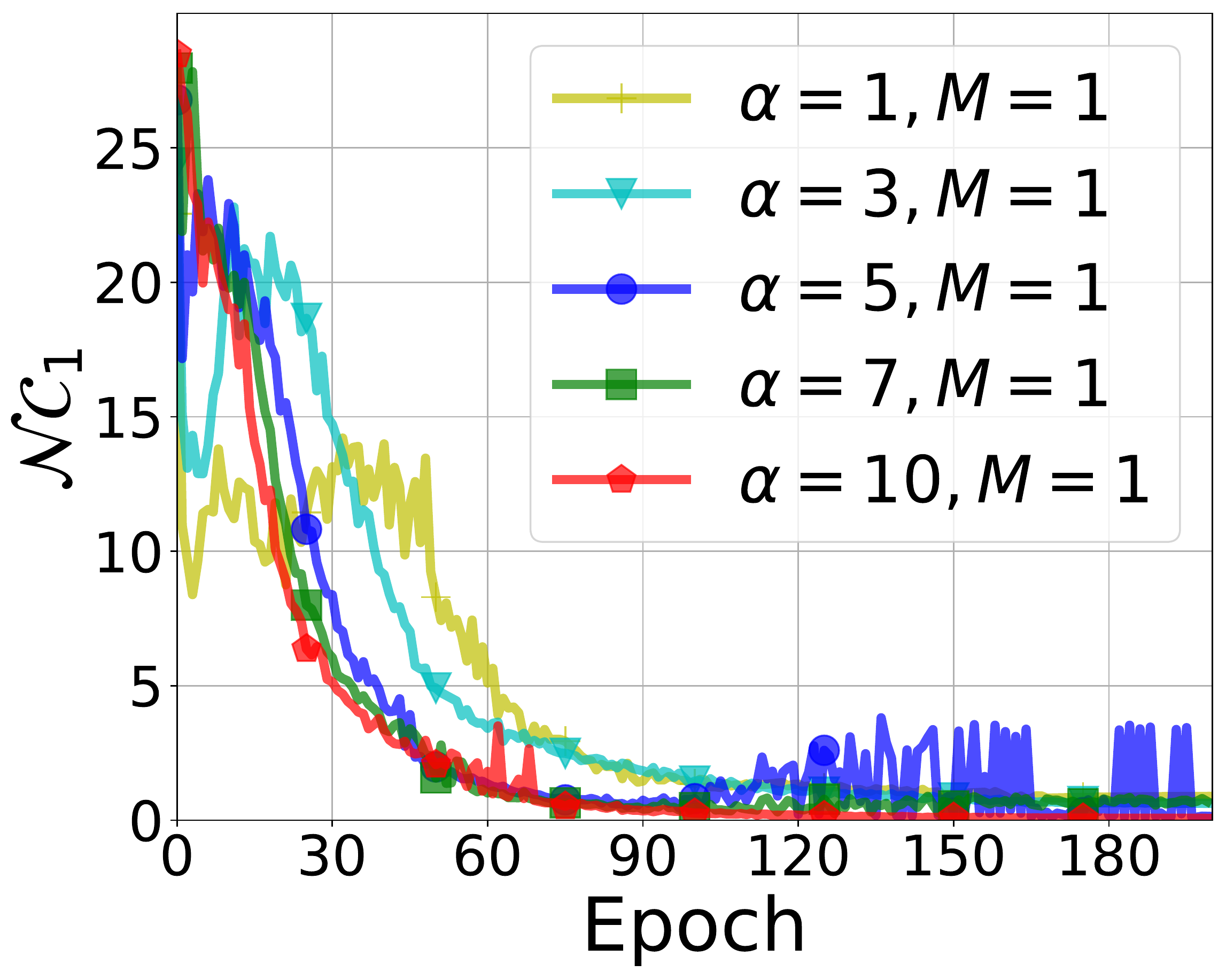}} \
    \subfloat[$\mc {NC}_3$]{\includegraphics[width=0.16\textwidth]{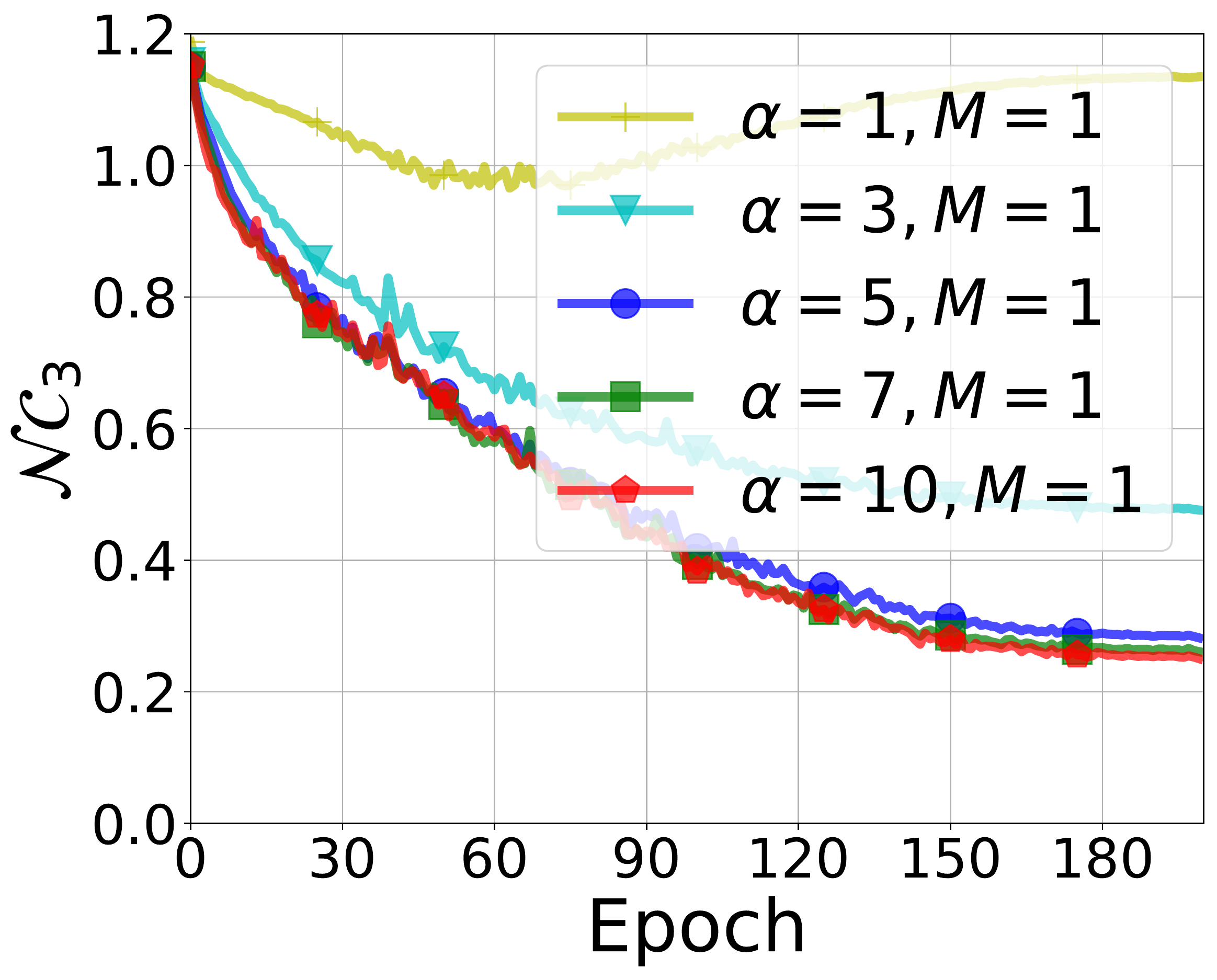}} \
    \subfloat[$\mc{P}_{CM}
    $]{\includegraphics[width=0.166\textwidth]{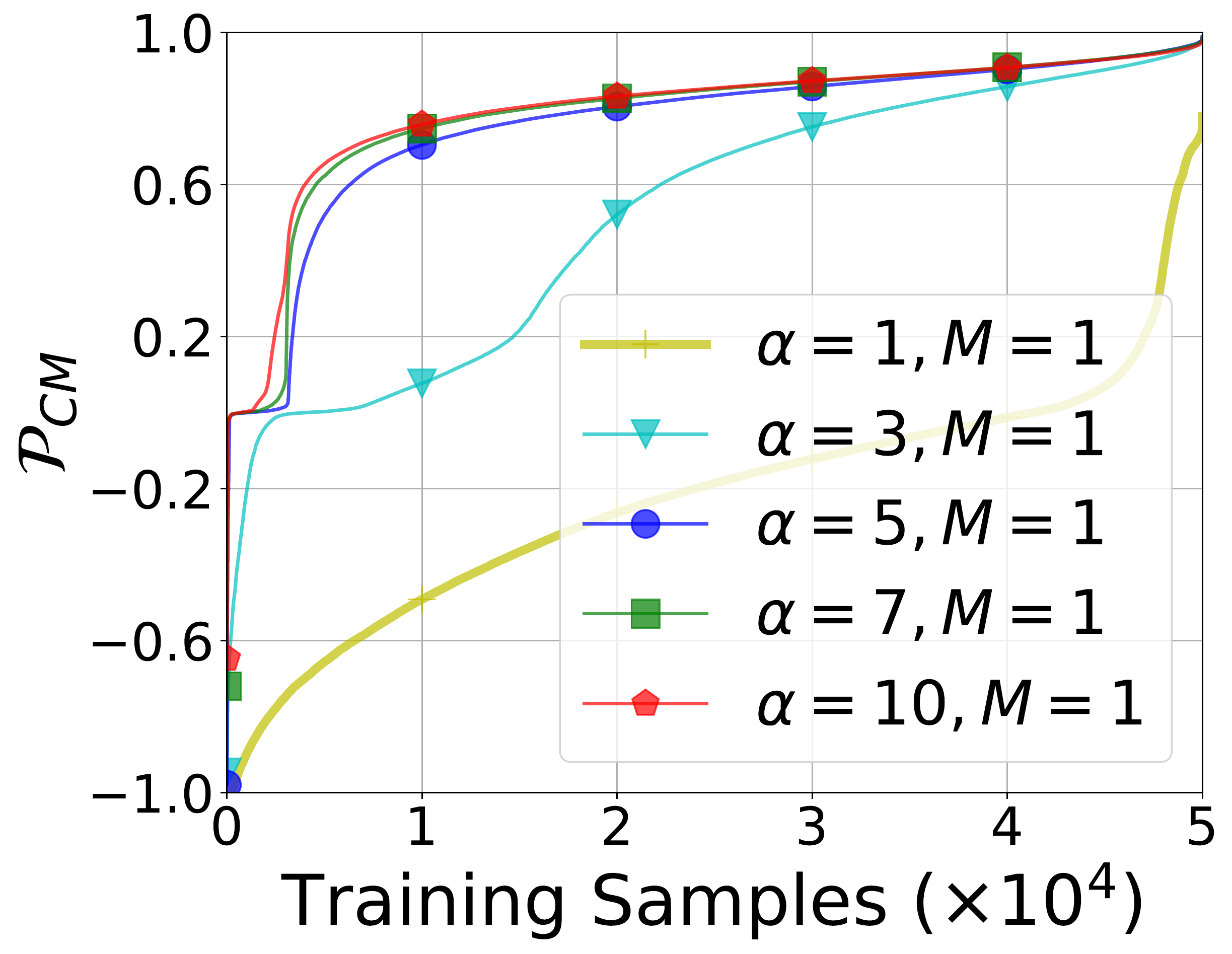}} \
    \subfloat[$\wt\rank$]{\includegraphics[width=0.155\textwidth]{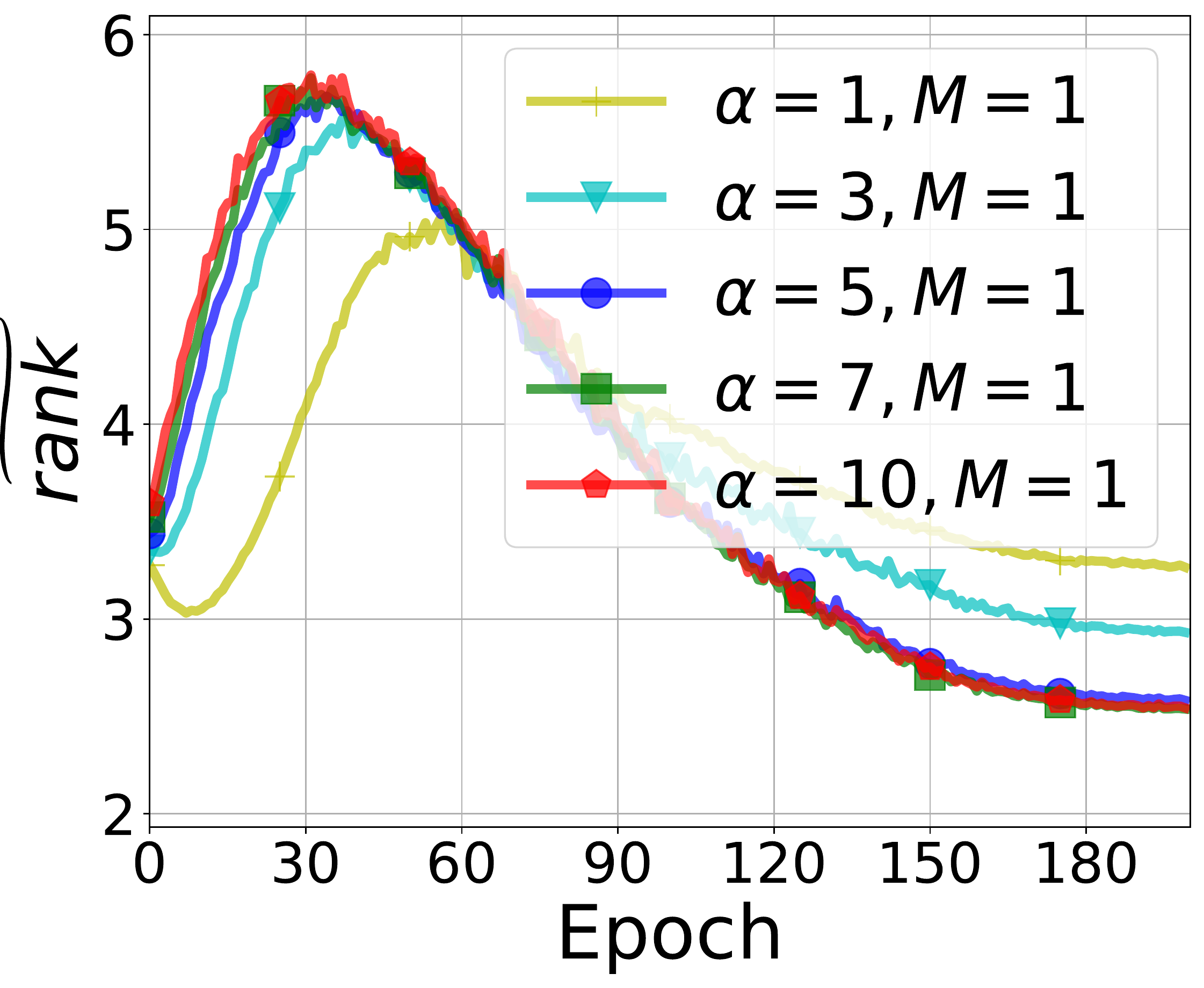}} \
    \subfloat[Train Acc.]{\includegraphics[width=0.162\textwidth]{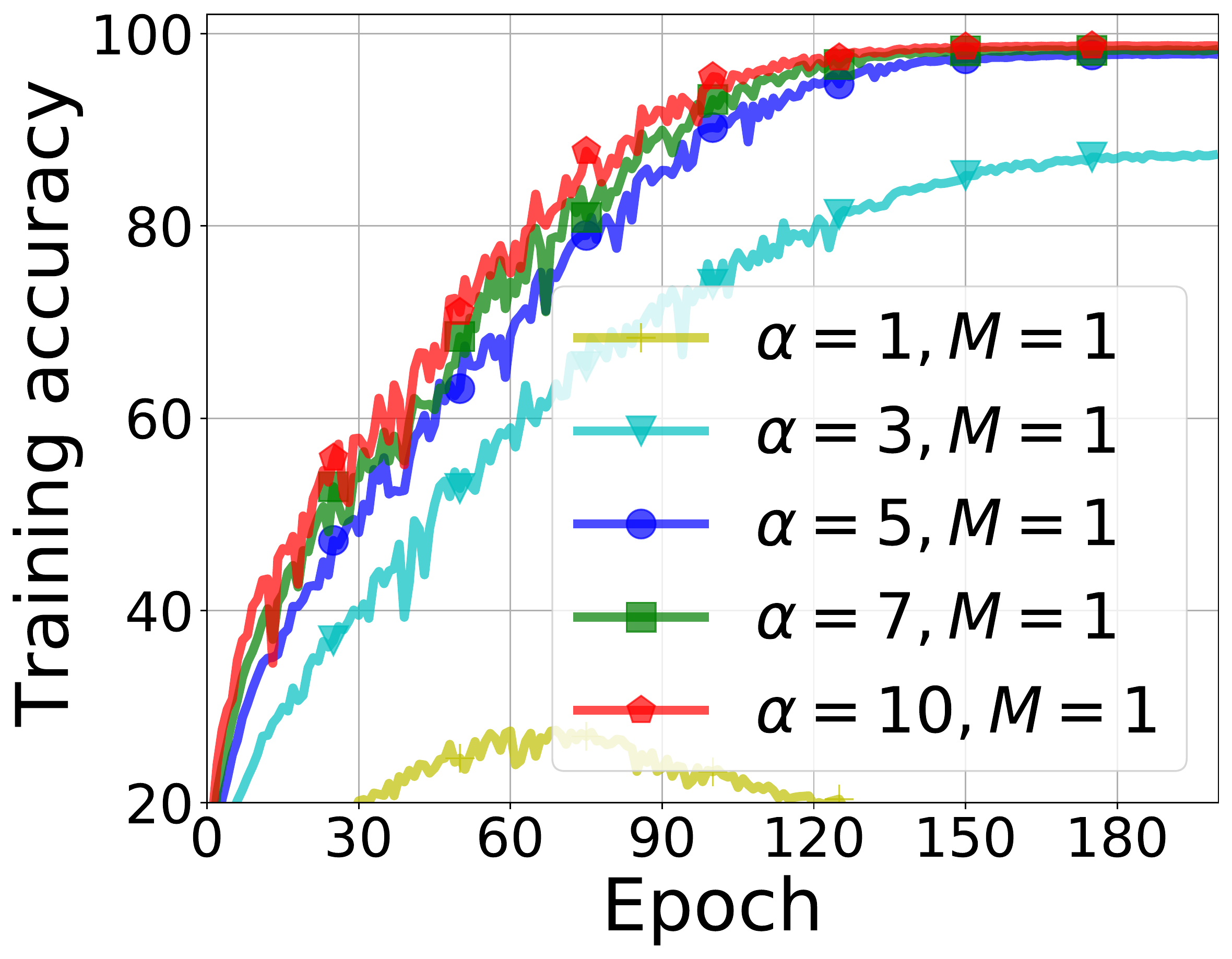}} \
    \subfloat[ Test Acc.]{\includegraphics[width=0.157\textwidth]{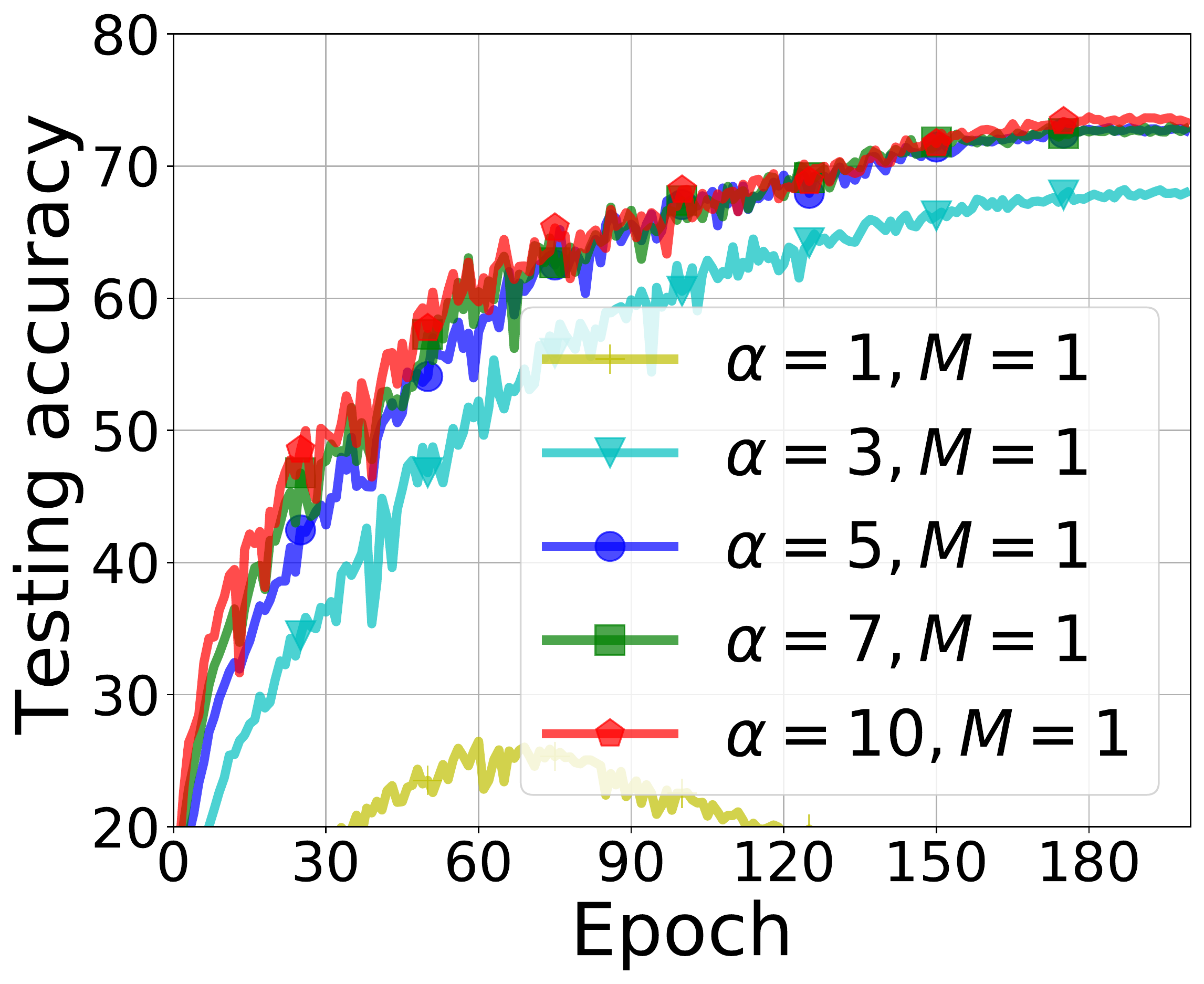}} \\
    
    \subfloat[$\mc {NC}_1$ ]{\includegraphics[width=0.16\textwidth]{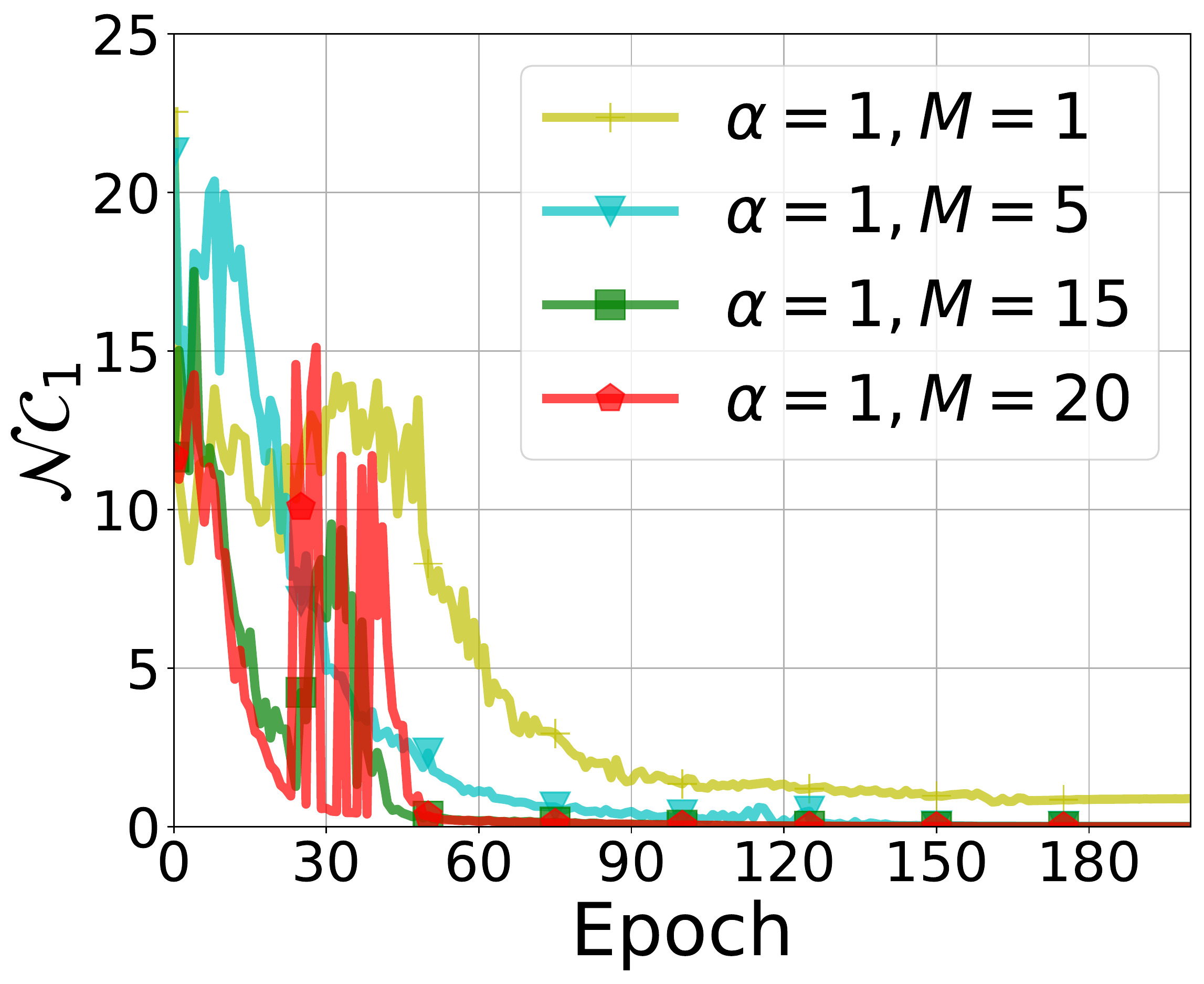}} \
    \subfloat[$\mc {NC}_3$]{\includegraphics[width=0.16\textwidth]{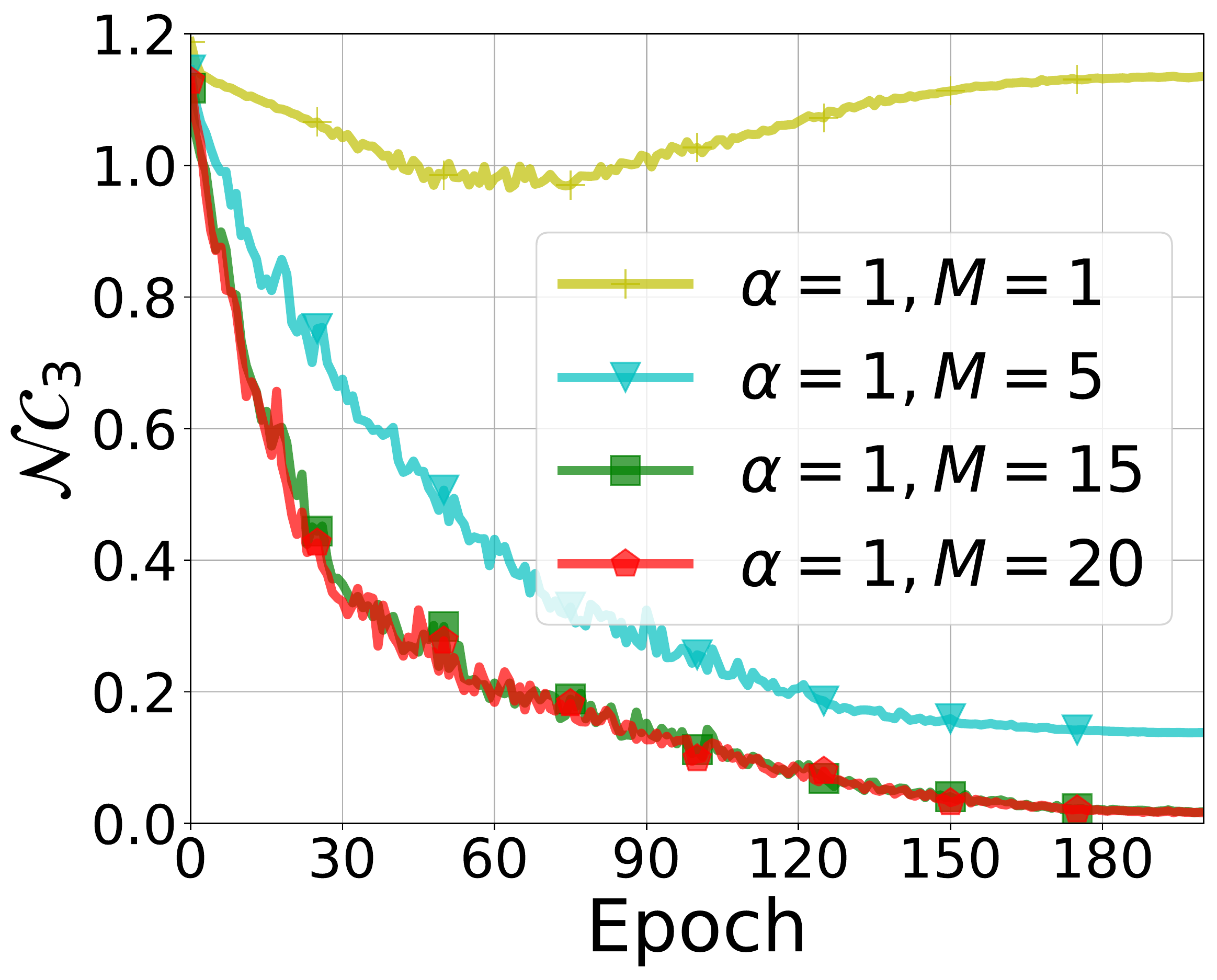}} \
    \subfloat[$\mc{P}_{CM}
    $]{\includegraphics[width=0.166\textwidth]{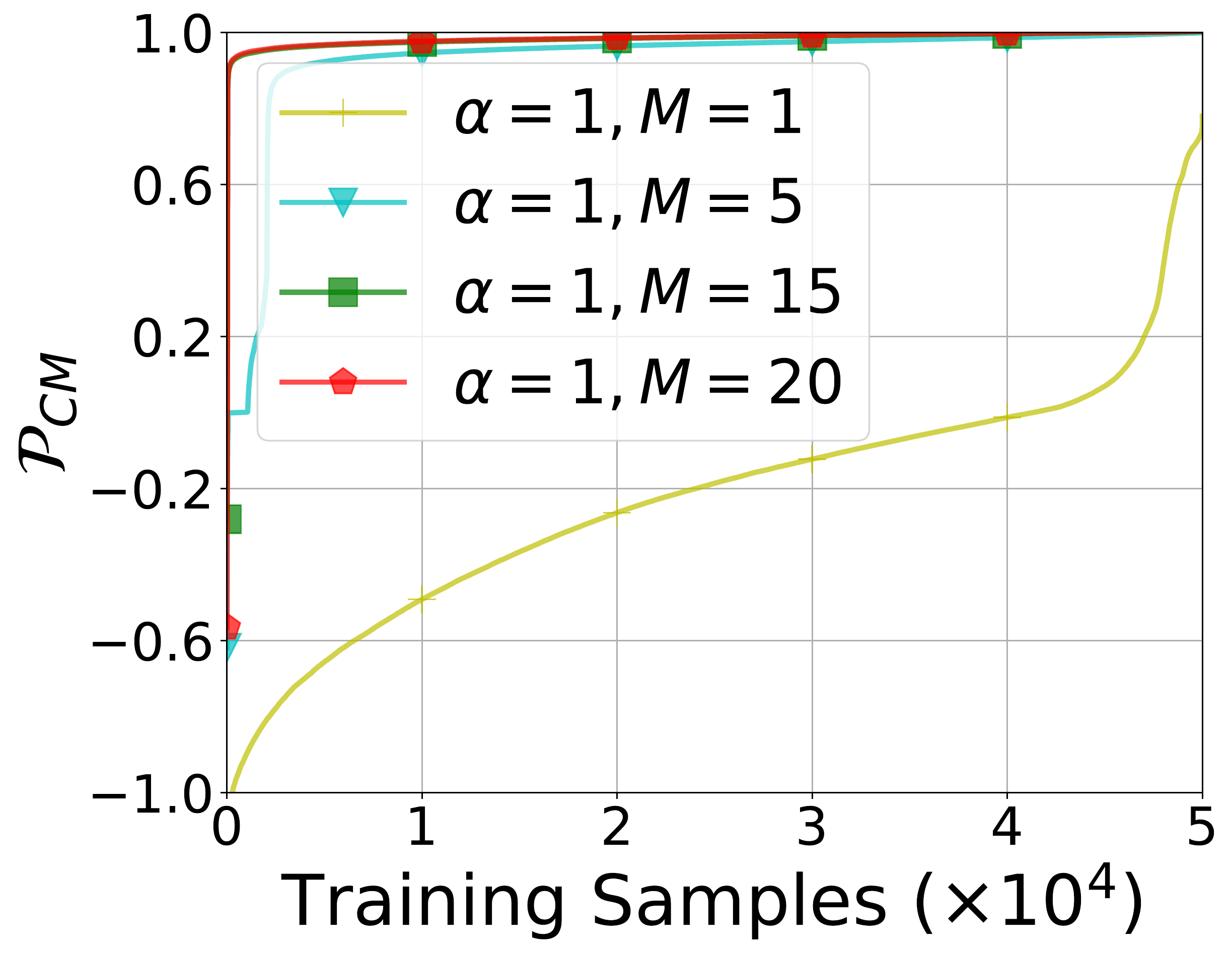}} \
    \subfloat[$\wt\rank$]{\includegraphics[width=0.155\textwidth]{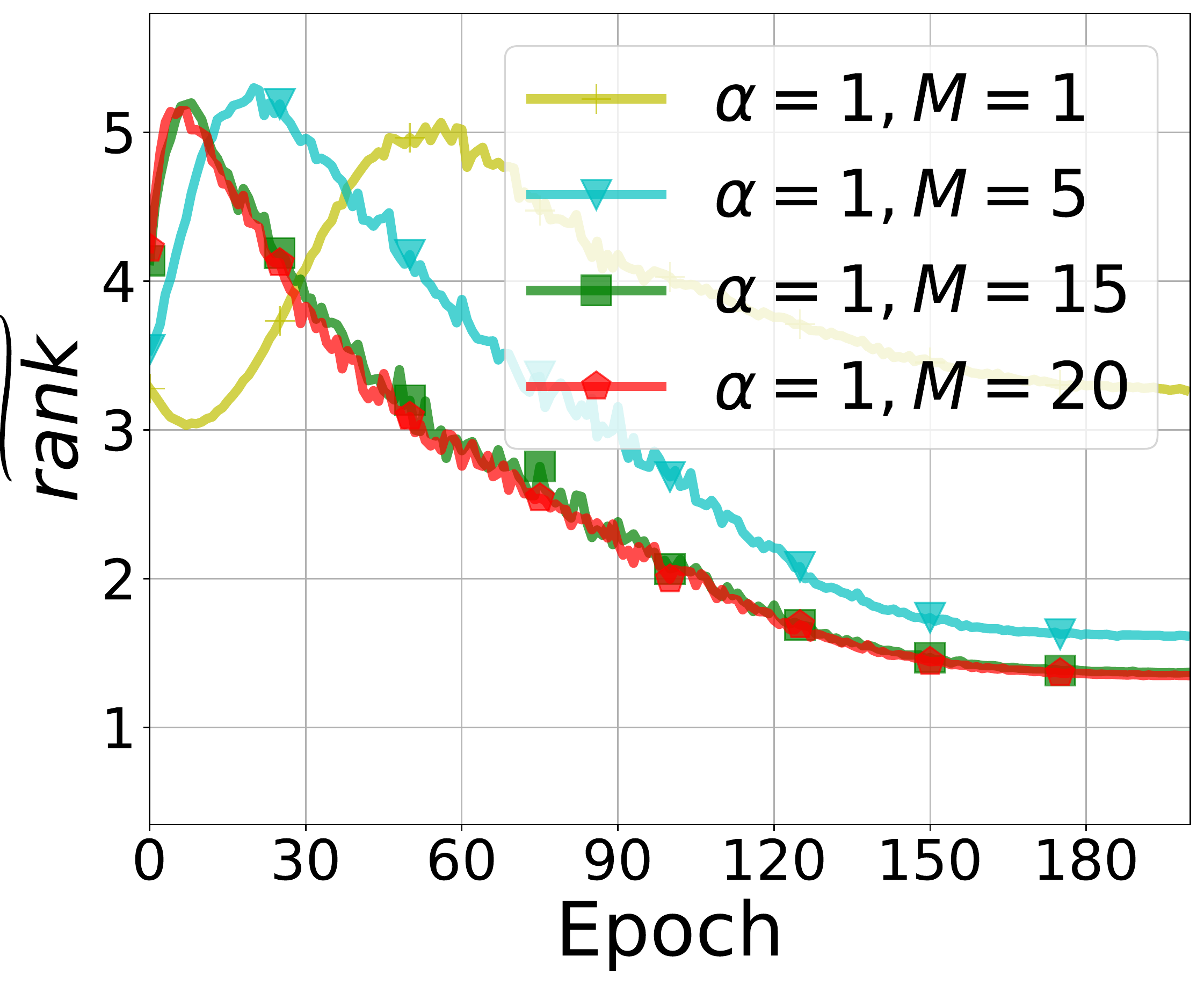}} \
    \subfloat[Train Acc.]{\includegraphics[width=0.162\textwidth]{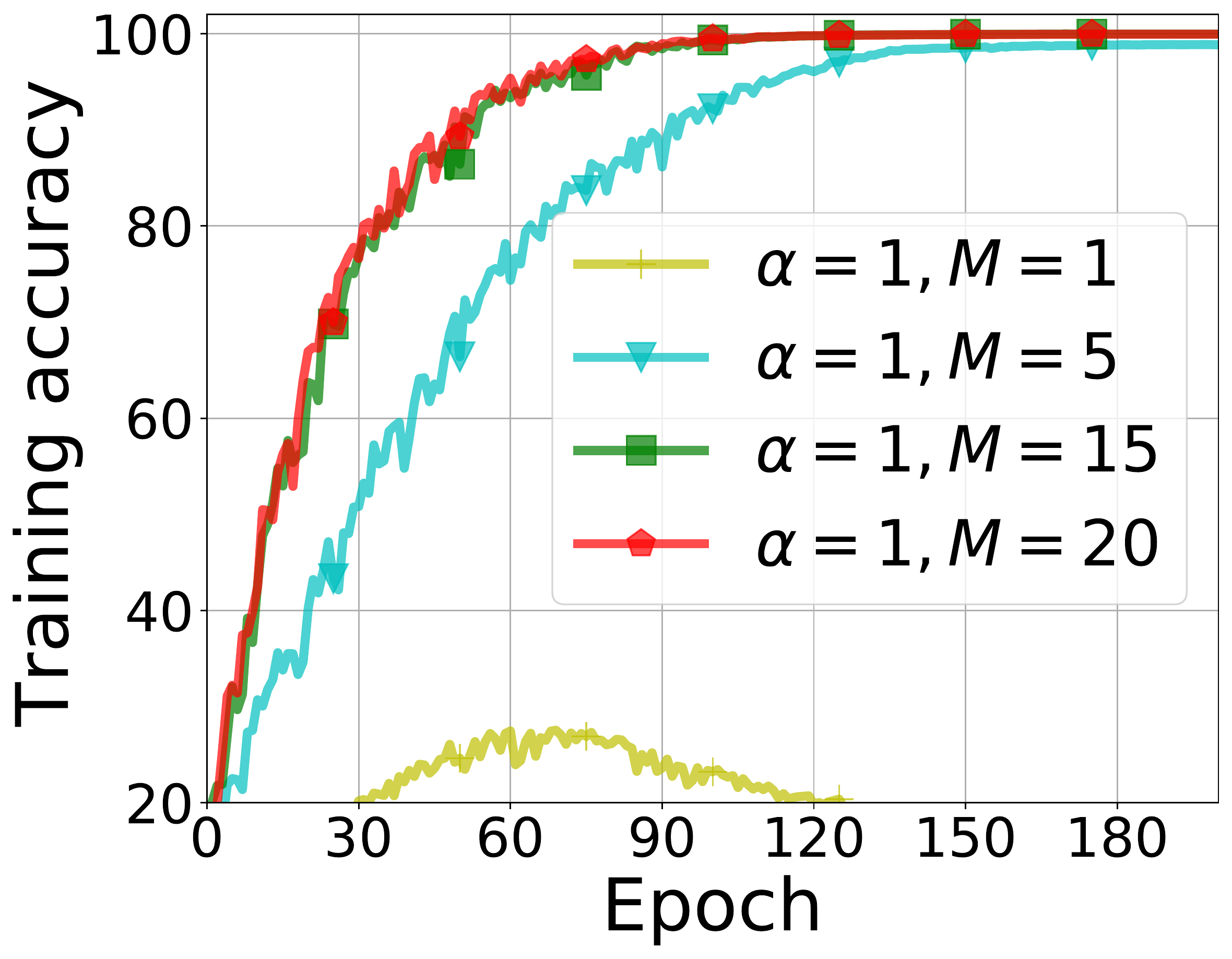}} \
    \subfloat[ Test Acc.]{\includegraphics[width=0.157\textwidth]{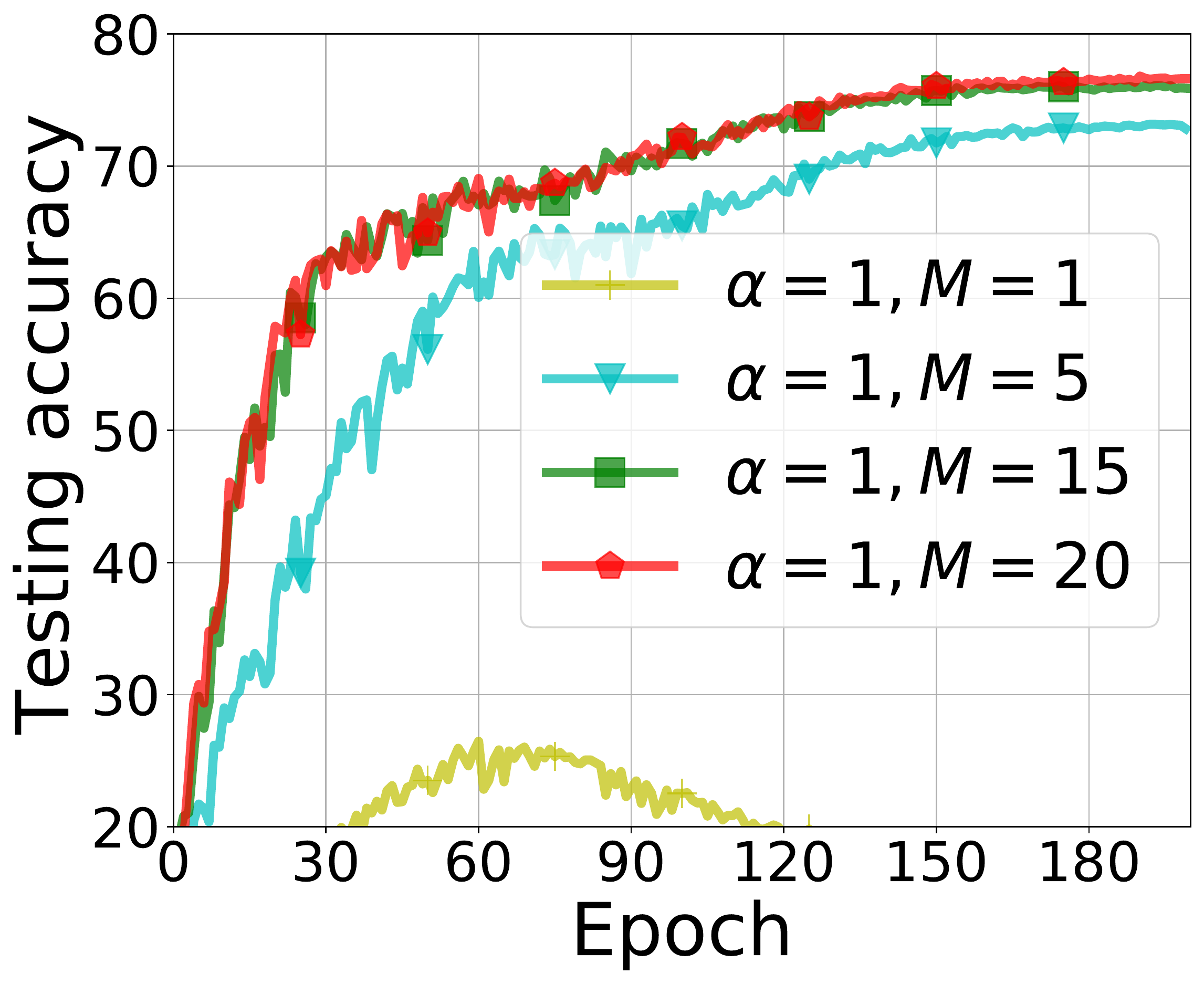}}

    \caption{\textbf{Effects of rescaling parameters $\alpha$ and $M$}. Experiments are conducted on the miniImageNet dataset (MIN) with a ResNet18 backbone. Top row shows the result of varying $\alpha$ with fixed $M$. Bottom row shows the result of varying $M$ with fixed $\alpha$.
    }
    \label{fig:mini_image_rescaling}
\end{figure*}

\section{Conclusion}\label{sec:conclusion}

In this work, we provide a global landscape analysis for deep neural networks trained via the MSE loss, under the unconstrained feature model.
Our theoretical results reveal that all global solutions exhibit the \NC~phenomenon, and that the global landscape is benign in the sense that it does not have spurious local minimizers. 
Such results extend the scope where \NC~provably occurs with the MSE loss, which was restricted to neural networks trained via particular and unrealistic algorithms in prior work \cite{mixon2020neural,han2021neural}.
More broadly, our results extend the scope of the ``prevalence of neural collapse'' in the seminal work \cite{papyan2020prevalence}, which was restricted to neural networks trained via the CE loss. 
Combined with the results in \cite{zhu2021geometric}, the prevalence of neural collapse now subsumes (at least) that deep neural networks trained for classification tasks with both CE and MSE losses exhibit neural collapse, regardless of the training algorithm (as long as it can escape strict saddle points) and network architecture (as long as it is sufficiently expressive).

 \paragraph{Towards designing better loss functions. } 
 As a future work, the improved understanding of \NC~with different choices of loss functions may help us to study and demystify the role of loss design for learning more generalizable and transferable deep features \cite{kornblith2021better,
 dikkala2021manifold,hui2022limitations,galanti2022on,ben2022nearest}.
 The fact that both CE and MSE exhibit the \NC~ does not mean that they are equally good at inducing neural collapse solutions in practical neural network training. As shown in our experiments, rescaling of the MSE loss is indispensable for improving \NC~hence producing better test performance over the vanilla MSE loss.  
 There is, however, no reason to be satisfied with the rescaled MSE loss since it is heuristically designed and does not have any justification on its ``optimality''.
Even though we are able to offer insights into the benefits of rescaling for MSE loss via landscape visualization, our explanation is approximate, based on extravagant simplifications of the optimization problem (by using two parameters $\theta$ and $s$ to summarize a very high-dimensional landscape!).  In practice, all the optimization variables $\bm H$, $\bm W$ and $\bm b$ are intricately correlated, and the insights gained from the visualization via simplification may hardly be useful for the design of new loss functions. The derivation of an ``optimal'' loss functions for inducing \NC~may require the development of new analysis techniques which we leave as future work. 

%============

%Could one envision similar tendencies in deep neural
%networks handling regression or synthesis tasks? Indeed, what is
%the parallel of the ideal classification to this breed of networks?
%These are important open questions that should be addressed, in
%our quest to demystify neural network solutions of inverse problems, generative adversarial networks, and more

%\zz{Not sure for language processing, but I don't see similar patterns in regression problems.}

%\paragraph{Optimization}

%\paragraph{Generalization}

%\paragraph{Robustness}

% \cy{Contrastive learning operates in the regime of $d \ll K$, where the features cannot form a Simplex ETF. Nonetheless, the features possess some ``uniformity'' properties that seem to generalize the notion of ETF \cite{wang2020understanding,chen2020intriguing}. May be of interest to generalize the study to those cases as future work. } 
% \qq{very good point}

\section*{Acknowledgements}

ZZ acknowledges support from NSF grants CCF 2008460 and CCF 2106881. XL and QQ acknowledge support from NSF grant DMS 2009752 and NSF Career Award 2143904. We also acknowledge Sheng Liu (NYU CDS) and Kangning Liu (NYU CDS) for fruitful discussion during various stages of the work.

% \section*{Accessibility}
% Authors are kindly asked to make their submissions as accessible as possible for everyone including people with disabilities and sensory or neurological differences.
% Tips of how to achieve this and what to pay attention to will be provided on the conference website \url{http://icml.cc/}.

% \section*{Software and Data}

% If a paper is accepted, we strongly encourage the publication of software and data with the
% camera-ready version of the paper whenever appropriate. This can be
% done by including a URL in the camera-ready copy. However, \textbf{do not}
% include URLs that reveal your institution or identity in your
% submission for review. Instead, provide an anonymous URL or upload
% the material as ``Supplementary Material'' into the CMT reviewing
% system. Note that reviewers are not required to look at this material
% when writing their review.

% % Acknowledgements should only appear in the accepted version.

% \textbf{Do not} include acknowledgements in the initial version of
% the paper submitted for blind review.

% If a paper is accepted, the final camera-ready version can (and
% probably should) include acknowledgements. In this case, please
% place such acknowledgements in an unnumbered section at the
% end of the paper. Typically, this will include thanks to reviewers
% who gave useful comments, to colleagues who contributed to the ideas,
% and to funding agencies and corporate sponsors that provided financial
% support.

% In the unusual situation where you want a paper to appear in the
% references without citing it in the main text, use \nocite
% \nocite{langley00}

\newpage 

{\small 
\bibliographystyle{unsrt}
\bibliography{biblio/optimization,biblio/learning}
}

%\bibliography{biblio/learning, biblio/optimization}
%\bibliographystyle{ICML/icml2022}

%%%%%%%%%%%%%%%%%%%%%%%%%%%%%%%%%%%%%%%%%%%%%%%%%%%%%%%%%%%%%%%%%%%%%%%%%%%%%%%
%%%%%%%%%%%%%%%%%%%%%%%%%%%%%%%%%%%%%%%%%%%%%%%%%%%%%%%%%%%%%%%%%%%%%%%%%%%%%%%
% APPENDIX
%%%%%%%%%%%%%%%%%%%%%%%%%%%%%%%%%%%%%%%%%%%%%%%%%%%%%%%%%%%%%%%%%%%%%%%%%%%%%%%
%%%%%%%%%%%%%%%%%%%%%%%%%%%%%%%%%%%%%%%%%%%%%%%%%%%%%%%%%%%%%%%%%%%%%%%%%%%%%%%
\newpage
\appendix
\onecolumn
% \section{You \emph{can} have an appendix here.}

% You can have as much text here as you want. The main body must be at most $8$ pages long.
% For the final version, one more page can be added.
% If you want, you can use an appendix like this one, even using the one-column format.

\newpage

\appendices

% --- PDF will be split by an editor (e.g. macOS preview), so need to restart from page 1
%\setcounter{page}{1}

% --- repeat the title (AT: haven't found a more elegant way to do this...)
%\onecolumn

%\begin{center}
%\Large
%Supplementary Material
%\end{center}

%\vspace{1.0em}
\paragraph{Notations and Organizations.} For a scalar function $f(\mZ)$ with a variable $\mZ\in\R^{K\times N}$, its Hessian can be represented by a bilinear form defined via $[\nabla^2 f(\mZ)](\mA,\mB) = \sum_{i,j,k,\ell} \frac{\partial^2 f(\mZ)}{\partial z_{ij}z_{k\ell}} a_{ij}b_{k\ell}$ for any $\mA,\mB\in\R^{K\times N}$, which avoids representing the Hessian as a tensor, or  vectorizing the variable $\mZ$. We will use the bilinear form for the Hessian throughout the Appendix. Now we give the formal definition of Simplex ETF.

\begin{definition}
[$K$-Simplex ETF \cite{strohmer2003grassmannian,papyan2020prevalence}]\label{def:simplex-ETF} 
A standard Simplex ETF is a collection of points in $\bb R^K$ specified by the columns of
\begin{align*}
    \mb M \;=\;  \sqrt{\frac{K}{K-1}}  \paren{ \mb I_K - \frac{1}{K} \mb 1_K \mb 1_K^\top },
\end{align*}
where $\mb I_K \in \bb R^{K \times K}$ is the identity matrix, and $\mb 1_K \in \bb R^K$ is the all ones vector.

As in \cite{papyan2020prevalence,fang2021layer}, in this paper we consider general Simplex ETF as a collection of points in $\R^d$ specified by the columns of $\sqrt{\frac{K}{K-1}} \mP \paren{ \mb I_K - \frac{1}{K} \mb 1_K \mb 1_K^\top }$, where $(i)$ when $d\ge K$, $\mP\in\R^{d\times K}$ is an orthonormal matrix, i.e., $\mP^\top \mP = \mb I_K$, and $(ii)$ when $d = K-1$, $\mP$ is chosen such that $\begin{bmatrix}\mP^\top & \frac{1}{\sqrt{K}}\vone_K \end{bmatrix}$ is an orthonormal matrix.%\cy{Should be $\begin{bmatrix}\mP^\top & \frac{1}{\sqrt{K}} \vone_K \end{bmatrix}$?}

\end{definition}

The appendix is organized as follows. In Appendix \ref{app:exp_setup}, we describe the datasets, network architectures and training settings. In Appendix \ref{app:thm-global}, we provide a detailed proof for \Cref{thm:global-minima}, analyzing the  global minimizers to our regularized MSE loss. %Finally, in Appendix \ref{sec:appendix-prf-global-geometry}, we present the whole proof for \Cref{thm:global-geometry} that the function is a strict saddle function and no spurious local minimizers exist, which is one of the major contributions of the work. 
Finally, in Appendix \ref{sec:appendix-visualization} we provide additional details for obtaining the visualization of rescaled MSE and CE losses presented in \Cref{subsec:landscape-rescaling}.

\section{Technical Details of the Experimental Setup in \Cref{sec:experiment}}\label{app:exp_setup}
%\section{Datasets, Network Architectures and Training, and Three Neural Collapse Measures }\label{app:exp_setup}
In \Cref{sec:experiment}, we conduct experiments on CIFAR10 \cite{krizhevsky2009learning} and miniImageNet \cite{vinyals2016matching} datasets. We note that for miniImageNet dataset, since we are not doing few-shot learning where the work \cite{vinyals2016matching} primarily considers, we split the total $60000$ images into training set ($50000$ images) and validation set ($10000$ images) such that both training and validation set include the full $100$ classes. All images from the datasets are normalized by their mean and variance channel-wise. We use the ResNet18 \cite{he2016deep} architecture throughout all the experiments. For CIFAR10, we use the same experiment setting in \cite{zhu2021geometric} except the replacement of CE loss by standard MSE loss for fair comparison. Specifically, we train ResNet18 for 200 epochs with three different optimizers: SGD, Adam and LBFGS. For SGD, the initial learning rate and momentum are set to $0.05$ and $0.9$, respectively. For Adam, the initial learning rate, $\beta_1$ and $beta_2$ are set to $0.001$, $0.9$ and $0.999$, respectively. We decay the learning rate by 0.1 every 40 epochs for SGD and Adam. We use LBFGS with an initial learning rate of 0.01 and strong Wolfe line search strategy for subsequent iterations. Without explicitly mentioned, we use the weight decay of $5\times 10^{-4}$ and the same data augmentation in \cite{zhu2021geometric} for all experiments on CIFAR10. For miniImageNet, we use the rescaled MSE loss as described in \Cref{subsec:rescaled-MSE} with the SGD optimizer with an initial learning rate $0.01$, momentum $0.9$ and weight decay $0.001$. We use a Cosine Annealing Warm Restarts \cite{loshchilov2017sgdr} learning rate scheduler where the number of epochs before the first restart is set as $200$ and the minimum learning rate is $0.0001$.

\paragraph{\textbf{Three \NC\ measures \NC$_1$-\NC$_3$} \cite{papyan2020prevalence,zhu2021geometric}} For the sake of completeness, we describe the three  \NC\ measures \NC$_1$-\NC$_3$ \cite{papyan2020prevalence,zhu2021geometric} used in \Cref{sec:experiment}. Towards that end, first define the global mean of the last-layer features $\Brac{ \mb h_{k,i} }$ as $\mb h_G \;=\; \frac{1}{nK} \sum_{k=1}^K  \sum_{i=1}^n \vh_{k,i}$ and the class mean as $\ol{\mb h}_k \;=\; \frac{1}{n} \sum_{i=1}^n \vh_{k,i} \;(1\leq k\leq K).$
\begin{itemize}[leftmargin=*]
    \item \textbf{\NC$_1$.} We measure the within-class variability collapse by 
    \begin{align}\label{eq:NC1}
    \mc {NC}_1\;:=\;\frac{1}{K}\trace\parans{\mSigma_ W\mSigma_B^\dagger},
\end{align}
where $\mb \Sigma_W \;:=\; \frac{1}{nK} \sum_{k=1}^K \sum_{i=1}^n \paren{  \mb h_{k,i}  -  \ol{\mb h}_k } \paren{ \mb h_{k,i} - \ol{\mb h}_k }^\top \in \bb R^{d \times d}$ denotes the within-class covariance of the features, $\mb \Sigma_B \;:=\; \frac{1}{K} \sum_{k=1}^K \paren{ \ol{\mb h}_k - \mb h_G } \paren{ \ol{\mb h}_k - \mb h_G }^\top \in \bb R^{d \times d}$ represents  the between-class covariance, and $\mSigma_B^\dagger$ denotes the pseudo inverse of $\mSigma_B$.
\item \textbf{\NC$_2$.} We measure the onvergence of the learned classifier $\mW \in\R^{K\times d}$ (for $d\ge K-1$) to a Simplex ETF by 
\begin{align}\label{eq:NC2}
\mc {NC}_2\;:=\;  \norm{ \frac{\mW \mW^\top}{\norm{\mW \mW^\top}{F}} \;-\; \frac{1}{\sqrt{K-1}}  \paren{ \mb I_K - \frac{1}{K} \mb 1_K \mb 1_K^\top } }{F},
\end{align}
where the Simplex ETF and $\mW\mW^\top$ are rescaled to have unit energy (in Frobenius norm).
\item \textbf{\NC$_2$.} For $d\ge K-1$, we measure the convergence to self-duality between the learned features $\mb H$ and the learned classifier $\mb W$ via
\begin{align}\label{eq:NC3}
 \mc {NC}_3\;:=\; \norm{\frac{\mW \ol{\mH}}{\norm{\mW\ol{\mb H}}{F}}   \;-\; \frac{1}{\sqrt{K-1}}  \paren{ \mb I_K - \frac{1}{K} \mb 1_K \mb 1_K^\top} }{F},
\end{align}
where $
    \ol{\mb H} \;:=\; \begin{bmatrix}
\ol{\mb h}_1 - \mb h_G & \cdots & \ol{\mb h}_K - \mb h_G
\end{bmatrix} \in \bb R^{d \times K}$ are the centered class-means.
\end{itemize}

\revise{\paragraph{Visual comparison of features learned by MSE and CE losses with feature dimension $d = 3$.}  To visualize the learned features, we set the feature dimension $d = 3$ for ResNet18 and train it with CIFAR10.  \Cref{fig:cifar10_distribution} display the learned features with MSE loss and CE loss on randomly selected 100 training samples for each class.
We observe that the features learned by CE loss is more diverse and discriminative than MSE loss.}

% \begin{figure*}[t]
%     \centering
%     \subfloat[$\mc {NC}_1$]{\includegraphics[width=0.21\textwidth]{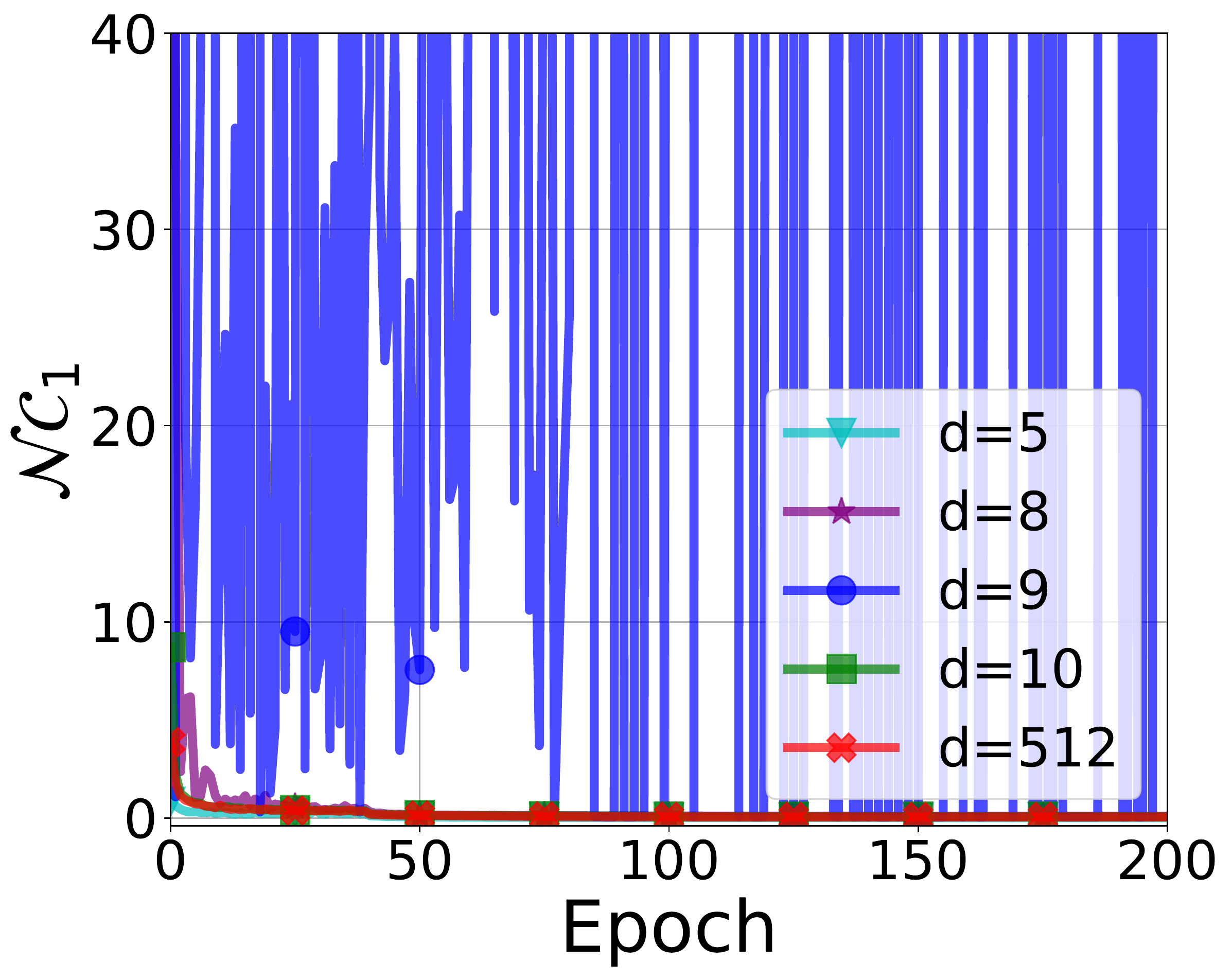}} \
%     \subfloat[Training Acc.]{\includegraphics[width=0.222\textwidth]{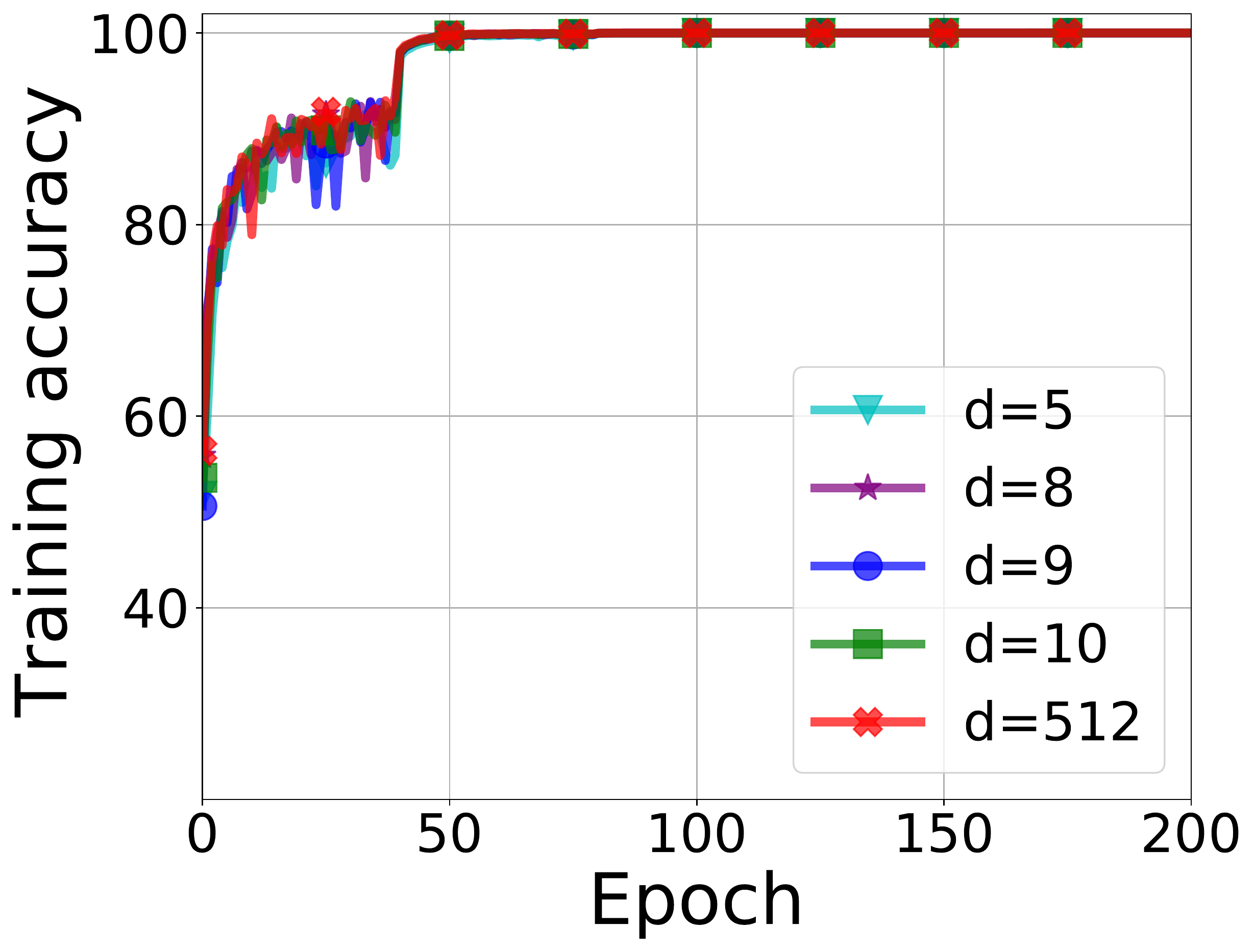}}\
%     \subfloat[Test Acc.]{\includegraphics[width=0.222\textwidth]{figs/cifar10_res18_ce_dk/resnet18-test-acc.pdf}}\
%     \subfloat[$\mc {P}_{CM}$]{\includegraphics[width=0.232\textwidth]{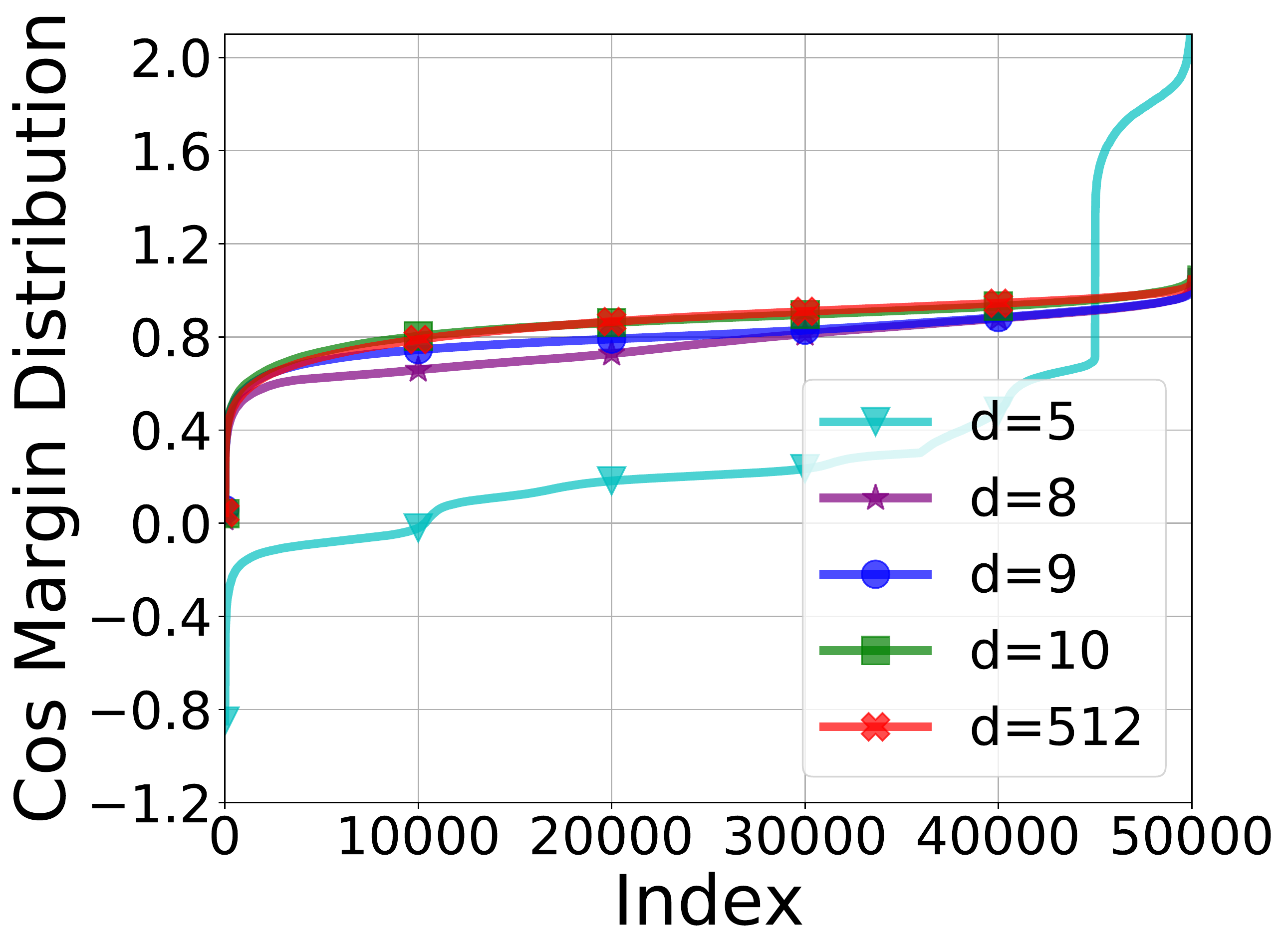}} 
%     \vspace{-0.03in}
%     \caption{\textbf{Comparison of the performances on networks with different feature dimensions $d$ for CE loss.} We compare within-class variation collapse $\mc {NC}_1$, training accuracy, test accuracy and cosine margin distribution $\mathcal{P}_{CM}$ on learned classifier with different feature dimension $d$ on CIFAR10 using ResNet18 with data augmentation. The network is trained by the SGD optimizer.
%     }
%     \label{fig:cifar10_dim_ce}
% \end{figure*}

\begin{figure*}[t]
    \centering
    \subfloat[MSE]{\includegraphics[width=0.25\textwidth]{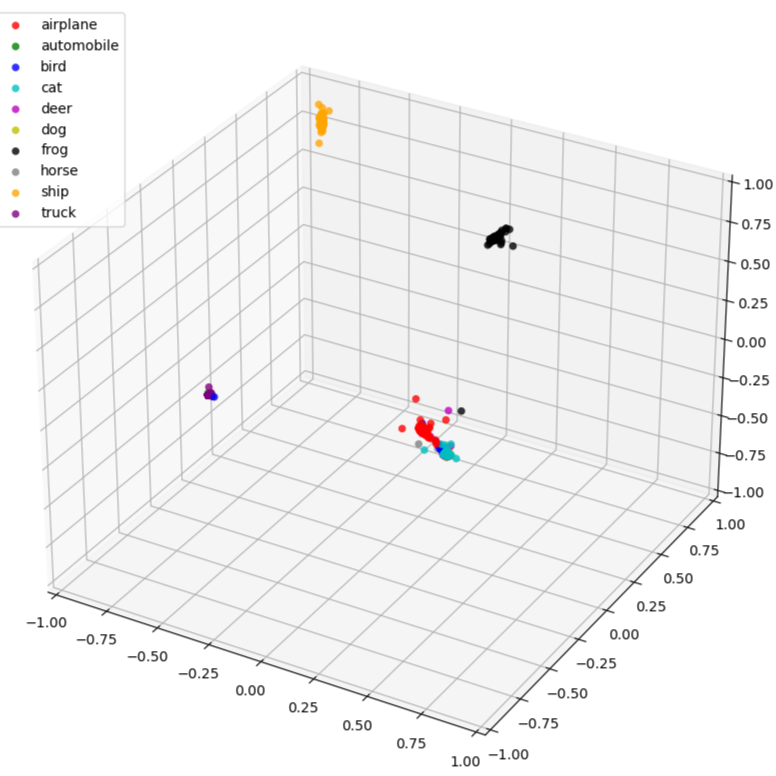}} \
    \subfloat[CE]{\includegraphics[width=0.25\textwidth]{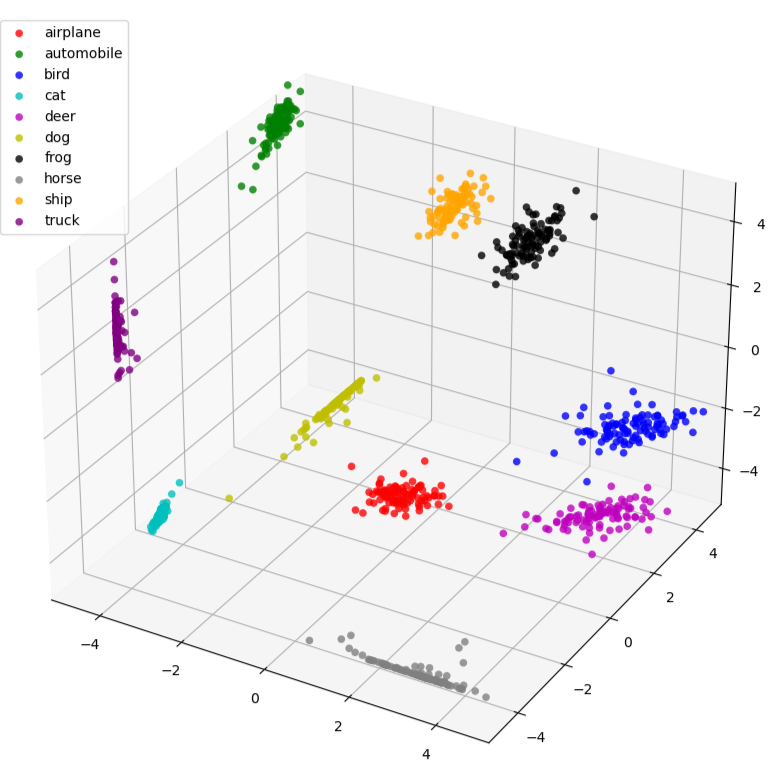}} 
    \vspace{-0.03in}
    \caption{\textbf{Visual comparison of features learned by MSE and CE losses with feature dimension $d = 3$.} 
    We compare the training feature distribution by setting the feature dimension $d = 3$ for ResNet18 and training it with CIFAR10. The network is trained by the SGD optimizer.
    }
    \label{fig:cifar10_distribution}
\end{figure*}
%\qq{the figure is too large}

%\section{Basics}\label{app:basics}
%\input{ICLR/Appendices/basics}

\section{Proof of \Cref{thm:global-minima} in \Cref{subsec:global_optim}}\label{app:thm-global}

In this part of appendices, we prove \Cref{thm:global-minima} in \Cref{sec:main} that we restate as follows.

\begin{theorem}[Global Optimality Condition]\label{thm:global-minima-app}
Let $(\mW^\star, \mH^\star,\vb^\star)$ be a global minimizer of 
	\begin{align}\label{eq:obj-app}
     \min_{\mb W , \mb H,\mb b  } \; f(\mb W,\mb H,\mb b) \;:=\; \frac{1}{2N}\norm{\mW\mH + \vb\vone^\top - \mY}{F}^2 \;+\; \frac{\lambda_{\mb W} }{2} \norm{\mb W}{F}^2 + \frac{\lambda_{\mb H} }{2} \norm{\mb H}{F}^2 + \frac{\lambda_{\mb b} }{2} \norm{\mb b}{2}^2.
\end{align}
Then $(\mW^\star, \mH^\star,\vb^\star)$ satisfies:
\begin{itemize}[leftmargin= 3em]
    \item [(\NC1,3)] If $\lambdaW\lambdaH < \frac{1}{NK}$, then $(\mW^\star,\mH^\star)$
satisfies \NC1 and \NC3 as
\begin{gather*}
      \vh_{k,i}^\star \;=\;  \ol\vh_k^\star, \ \sqrt{ \frac{ \lambda_{\mb W}  }{ \lambda_{\mb H} n } } \vw^{\star k} \;=\;  \ol\vh_k^\star ,\quad \forall \; k\in[K],\; i\in[n].
\end{gather*}
Otherwise, if $\lambdaW\lambdaH \ge \frac{1}{NK}$, then $\mW^\star = \vzero$ and $\mH^\star = \vzero$.
 \item [(\NC2)] If $\lambdaW\lambdaH < \frac{1}{NK}$, then $\ol\mH^\star$ further obeys the following properties for different $d$:
\begin{enumerate}
\item If $d< K-1$:  we have $\ol\mH^{\star\top}\ol\mH^\star = C_1 \calP_d(\mId - \frac{1}{K} \mb 1_K \mb 1_K^\top)$, where $\calP_d(\mb M)$ denotes the best rank-$d$ approximating of $\mb M$;
\item  If $d = K-1$: we have $\ol\mH^{\star\top}\ol\mH^\star= C_2 (\mb I - \frac{1}{K} \mb 1_K \mb 1_K^\top)$;
 \item  If $d\ge K$: we have
 \e
 \ol\mH^{\star\top}\ol\mH^\star = \begin{cases} C_3 \paren{ \mb I - \frac{1}{K} \mb 1_K \mb 1_K^\top}, &   \lambdab \le \frac{\sqrt{KN \lambdaW\lambdaH}}{1 - \sqrt{KN \lambdaW\lambdaH}}, \\ C_4 \paren{ \mId - \frac{\sqrt{n\lambdaW\lambdaH}}{\lambdab(1 - \sqrt{KN\lambdaW\lambdaH} )} \vone_K \vone_K^\top }, &  \text{otherwise}, 
        \end{cases}
    \nonumber \ee
  where  $\frac{\sqrt{n\lambdaW\lambdaH}}{\lambdab(1 - \sqrt{KN\lambdaW\lambdaH})}\le \frac{1}{K}$ in the second case since $\lambdab \ge \frac{\sqrt{KN \lambdaW\lambdaH}}{1 - \sqrt{KN \lambdaW\lambdaH}}$. 
\end{enumerate}
Here, $C_1$, $C_2$, $C_3$, and $C_4$ are some positive numerical constants that depend on $\lambdaW,\lambdaH,\lambdab$. 
  \item [(Bias)] The bias satisfies $\mb b^\star = b^\star \mb 1_K$ with $b^\star \le \frac{1}{K}$ given by:
    \begin{enumerate}
        \item If $d<K$: we have $b^\star  = \frac{1}{K(\lambdab + 1)}$;
        \item If $d \ge K$: we have $b^\star = \begin{cases} \frac{1}{K(\lambdab + 1)}, &   \lambdab \le \frac{\sqrt{KN \lambdaW\lambdaH}}{1 - \sqrt{KN \lambdaW\lambdaH}}, \\ \frac{\sqrt{n\lambdaW\lambdaH}}{\lambdab}, &  \text{otherwise}. 
        \end{cases}$
    \end{enumerate}
    In particular, when $\lambdab \rightarrow 0$, we have $b^\star \rightarrow \frac{1}{K}$; when $\lambdab \rightarrow \infty$, we have $b^\star \rightarrow 0$.
\end{itemize}

\end{theorem}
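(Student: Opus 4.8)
The plan is to collapse the joint problem in $(\mW,\mH,\vb)$ down to a problem in the single product matrix $\mZ:=\mW\ol\mH\in\R^{K\times K}$ together with $\vb$, solve that reduced problem exactly using permutation symmetry, and then lift the solution back through a balanced matrix factorization to recover self-duality and the simplex-ETF structure. First I would dispose of within-class variability (NC1): for fixed $\mW,\vb$ the objective is separable across the columns of $\mH$, and the subproblem for the $(k,i)$-th column, $\tfrac{1}{2N}\|\mW\vh_{k,i}+\vb-\vy_k\|_2^2+\tfrac{\lambdaH}{2}\|\vh_{k,i}\|_2^2$, is a strictly convex ridge problem whose data depends only on the class index $k$. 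Its minimizer is therefore unique and identical across $i$, forcing $\vh_{k,i}^\star=\ol\vh_k^\star$. After this collapse the loss depends only on $\ol\mH$, and (using $N=Kn$ and that the class-mean targets assemble into $\mId_K$) it becomes $\tfrac{1}{2K}\|\mW\ol\mH+\vb\vone_K^\top-\mId_K\|_F^2+\tfrac{\lambdaW}{2}\|\mW\|_F^2+\tfrac{\lambdaH n}{2}\|\ol\mH\|_F^2+\tfrac{\lambdab}{2}\|\vb\|_2^2$.

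Next I would invoke the matrix-factorization lemma (Lemma~\ref{lem:global-fact-nuclear}): for any $\mW,\ol\mH$ with product $\mZ=\mW\ol\mH$ one has $\tfrac{\lambdaW}{2}\|\mW\|_F^2+\tfrac{\lambdaH n}{2}\|\ol\mH\|_F^2\ge c\,\|\mZ\|_*$ with $c:=\sqrt{\lambdaW\lambdaH n}$, equality holding iff the factors are balanced, in which case $\ol\mH=\sqrt{\lambdaW/(\lambdaH n)}\,\mW^\top$. Since any feasible product with $\mW\in\R^{K\times d}$ satisfies $\rank(\mZ)\le d$, minimizing over all factorizations reduces everything to $\min_{\vb,\ \rank(\mZ)\le d}\ \tfrac{1}{2K}\|\mZ+\vb\vone_K^\top-\mId_K\|_F^2+c\|\mZ\|_*+\tfrac{\lambdab}{2}\|\vb\|_2^2$. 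The equality clause already yields NC3 once $\mZ^\star$ is shown to be symmetric PSD, because then every balanced factorization obeys $\ol\mH=\sqrt{\lambdaW/(\lambdaH n)}\,\mW^\top$ and $\ol\mH^{\star\top}\ol\mH^\star=\sqrt{\lambdaW/(\lambdaH n)}\,\mZ^\star$, so the Gram structure of the class means is read off directly from $\mZ^\star$.

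For $d\ge K$ the rank constraint is vacuous and the reduced objective is jointly convex. Eliminating the bias in closed form gives $\vb(\mZ)=\tfrac{\vone_K-\mZ\vone_K}{K(1+\lambdab)}$, leaving a strictly convex function of $\mZ$ that is invariant under $\mZ\mapsto\mtx{\Pi}\mZ\mtx{\Pi}^\top$ for every permutation $\mtx{\Pi}$; by uniqueness the minimizer must be permutation-invariant and symmetric, hence diagonal in the basis $\{\mtx{P}_{\parallel},\mtx{P}_{\perp}\}$ with $\mtx{P}_{\parallel}=\tfrac1K\vone_K\vone_K^\top$ and $\mtx{P}_{\perp}=\mId_K-\mtx{P}_{\parallel}$, i.e.\ $\mZ^\star=z_1\mtx{P}_{\parallel}+z_\perp\mtx{P}_{\perp}$. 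Substituting this ansatz collapses the problem to a three-variable convex program in $(z_1,z_\perp,b)$ with PSD constraints $z_1,z_\perp\ge0$. Its KKT conditions give $z_\perp=\max\{0,1-cK\}$, where $cK=\sqrt{KN\lambdaW\lambdaH}$ produces exactly the threshold $\lambdaW\lambdaH<\tfrac1{NK}$, and a two-regime solution for $(z_1,b)$ according to whether the constraint $z_1\ge0$ is active, which reproduces the $\lambdab$-threshold dichotomy in the statement (inactive $z_1\ge0$ in the large-$\lambdab$ regime giving $b=\sqrt{n\lambdaW\lambdaH}/\lambdab$, active $z_1=0$ otherwise giving $b=\tfrac{1}{K(1+\lambdab)}$). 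Using $\ol\mH^{\star\top}\ol\mH^\star\propto\mZ^\star$ then yields the orthonormal versus centered-identity cases; the degenerate regime $\lambdaW\lambdaH\ge\tfrac1{NK}$ forces $\mZ^\star=\vzero$, and since the minimal regularizer equals $c\|\mZ^\star\|_*=0$ with both penalties positive, $\mW^\star=\vzero$ and $\mH^\star=\vzero$.

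The hard part is the regime $d<K$, where the rank constraint is nonconvex and the permutation-averaging argument breaks down: averaging two rank-$d$ matrices need not stay rank-$d$, and indeed for $d<K-1$ the minimizer \emph{cannot} be permutation-invariant, since any invariant PSD matrix supported on $\vone_K^\perp$ already has rank $K-1$. I would instead analyze, for the optimal bias (which one shows jointly is still $\vb=b\vone_K$), the inner problem $\min_{\rank(\mZ)\le d}\tfrac{1}{2K}\|\mZ-\mtx A\|_F^2+c\|\mZ\|_*$ for the symmetric target $\mtx A=\mId_K-b\vone_K\vone_K^\top$ via the singular-value characterization (von Neumann's trace inequality), which shows the optimizer retains the top-$d$ eigendirections of $\mtx A$ and soft-thresholds them. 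Because $\mtx A$ has eigenvalue $1$ of multiplicity $K-1$ on $\vone_K^\perp$ and the single smaller eigenvalue $1-bK$ on $\vone_K$, the rank budget is always spent on $\vone_K^\perp$, forcing $z_1=0$ (hence $b=\tfrac{1}{K(1+\lambdab)}$ uniformly), the exact centered identity when $d=K-1$, and the best rank-$d$ approximation $\calP_d(\mId_K-\tfrac1K\vone_K\vone_K^\top)$ when $d<K-1$. Closing this rigorously---coupling the rank-constrained SVD description with the simultaneous optimization over $\vb$ so as to certify \emph{global} rather than merely stationary optimality, and to guarantee that no eigenweight ever leaks onto the $\vone_K$ direction---is the delicate technical core; everything after $\mZ^\star$ is fixed is bookkeeping that translates it into the stated conditions on $\ol\mH^\star$, $\mW^\star$, and $\vb^\star$.
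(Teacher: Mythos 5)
Your overall architecture is sound, and parts of it are genuinely different from the paper's proof. The NC1 step (strict convexity of the ridge subproblem in each column of $\mH$ for fixed $(\mW,\vb)$, with identical data across samples of a class) is correct; the paper instead reads NC1 off the explicit form of the optimal $\mH^\star$ in \Cref{lem:global-one-hot-nuclear}. Your reduction to the nuclear-norm-penalized problem in $(\mZ,\vb)$ plays the same role as the paper's \Cref{lem:global-fact-nuclear}, and your handling of $d\ge K$ --- eliminating $\vb$ in closed form, then using strict convexity plus permutation invariance of the reduced objective to force $\mZ^\star = z_1\tfrac1K\vone_K\vone_K^\top + z_\perp\paren{\mId - \tfrac1K\vone_K\vone_K^\top}$, then solving a three-variable program --- is a valid route that the paper does not take (the paper never symmetrizes; it optimizes a singular-value lower bound over $\vb$ directly). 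I checked your computations for this case: $z_\perp = \brac{1-\sqrt{KN\lambdaW\lambdaH}}_+$, the $\lambdab$-threshold $\frac{\sqrt{KN\lambdaW\lambdaH}}{1-\sqrt{KN\lambdaW\lambdaH}}$, the two bias values, and the resulting $C_3$/$C_4$ Gram structures all match the statement, and the balanced-factorization argument does yield NC3 once $\mZ^\star$ is known to be symmetric PSD.

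The genuine gap is the one you flagged yourself, and it is not a routine detail: for $d<K$ you assume the optimal bias still has the form $\vb^\star = b\vone_K$, and nothing in your framework proves it. Your symmetrization device is unavailable there: with the nonconvex rank constraint the minimizer need not be unique, so permutation invariance only gives that $(\mtx{\Pi}\mZ^\star\mtx{\Pi}^\top,\mtx{\Pi}\vb^\star)$ is also optimal, not that $\vb^\star$ is fixed by every $\mtx{\Pi}$ --- and you correctly observe that for $d<K-1$ the optimal $\mZ^\star$ itself \emph{cannot} be permutation invariant. Without $\vb^\star\propto\vone_K$, the target $\mId_K - \vb^\star\vone_K^\top$ is not symmetric, its spectrum is not $\Brac{1,\dots,1,1-bK}$, and the ``rank budget stays on $\vone_K^\perp$'' argument cannot even be formulated. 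The paper closes exactly this hole with \Cref{lem:spectral-Ytilde}: for \emph{arbitrary} $\vb$, $K-2$ singular values of $\mY-\vb\vone^\top$ equal $\sqrt n$, and on each sphere $\norm{\vb}{}=c\le 1/\sqrt K$ the two remaining singular values are simultaneously (entrywise) minimized if and only if $\vb = \tfrac{c}{\sqrt K}\vone$; since the fixed-$\vb$ optimal value from \Cref{lem:global-fact-nuclear} is increasing in each singular value (see \eqref{eq:singular-value-reform}), the symmetric bias is optimal for every $d$ simultaneously, leaving only a scalar problem in $c$. Your proof needs a lemma of this type (a spectral statement valid for non-symmetric bias, coupled with monotonicity of the value in each singular value); once it is supplied, your von Neumann soft-thresholding finish for $d<K$ goes through.
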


\subsection{Main Proof}

\begin{proof}[Proof of \Cref{thm:global-minima-app}] We first characterize the solutions $(\mW,\mH)$ in terms of $\vb$. Denote by $\wt\mY = \mY - \vb\vone^\top$ and let $\wt \mY = \mU\mSigma\mV^\top = \sum_{i=1}^K \sigma_i \vu_i\vv_i^\top$ be its SVD, where  $\sigma_1 \ge \sigma_2 \ge \cdots \ge \sigma_K \ge 0$ are the singular values. For convenience, we denote by $\wt\lambda = N\sqrt{\lambdaW\lambdaH}$.
By \Cref{lem:global-fact-nuclear}, we know
\begin{align}
	 f(\mb W,\mb H,\mb b) \ge \frac{\lambda_{\mb b} }{2} \norm{\mb b}{2}^2 + \frac{1}{N}\cdot \begin{cases}\sum\limits_{i=1}^K \frac{1}{2}\paren{\sigma_i - \brac{ \sigma_i - \wt\lambda}_+}^2 + \wt\lambda \brac{ \sigma_i - \wt\lambda}_+, & d\ge K\\
	   \sum\limits_{i=1}^d \frac{1}{2}\paren{\sigma_i - \brac{ \sigma_i - \wt\lambda}_+}^2 + \wt\lambda \brac{ \sigma_i - \wt\lambda}_+ + \sum\limits_{i = d+1}^K \frac{1}{2}\sigma_i^2 , & d<K
	    \end{cases}
\label{eq:proof-lower-bound}\end{align}
where the inequality becomes an equality when $\mW\mH = \sum_{i=1}^{\min(d,K)} \brac{\sigma_i - \sqrt{\lambdaW\lambdaH}}_+\vu_i\vv_i^\top$.

Noting that the singular values $\sigma_i$ also depend on $\vb$, to minimize the right hand side (RHS) of \eqref{eq:proof-lower-bound} in terms of $\vb$, we first rewrite each term involving the singular value as
\begin{align}
\frac{1}{2}\paren{\sigma_i - \brac{ \sigma_i - \wt\lambda}_+}^2 + \wt\lambda \brac{ \sigma_i - \wt\lambda}_+ = \begin{cases} 
\frac{1}{2}\sigma_i^2, & \sigma_i \le \wt\lambda,\\
\wt\lambda \sigma_i - \frac{1}{2}\wt\lambda^2, & \sigma_i \ge \wt\lambda,
\end{cases}
\label{eq:singular-value-reform}\end{align}
where for both cases it increases as $\sigma_i$ increases. 
Thus, for any $\vb$ with the same energy, say $c$, minimizing the RHS of \eqref{eq:proof-lower-bound} is equivalent to minimizing the singular values $\sigma_i$. With this in mind, we now show that if $\vb^\star$ is a minimizer to RHS of \eqref{eq:proof-lower-bound}, then $\|\vb^\star\| \le \frac{1}{\sqrt{K}}$. %We prove it by contradiction, i.e., we suppose $\|\vb^\star\| > \frac{1}{\sqrt{K}}$. 
By \Cref{lem:spectral-Ytilde}, we know for any $\vb$ we have $\sigma_2 = \sigma_3 = \cdots = \sigma_{K-1} = \sqrt{n}$ and $\sigma_1 \ge \sqrt{n}$ (see \eqref{eq:sigma1-lower-bound}). On the other hand, when $\vb = \frac{1}{K}\vone$, we have $\sigma_1 = \sigma_2 = \cdots = \sigma_{K-1} = \sqrt{n}$ and $\sigma_K = 0$, which are the smallest possible singular values that can be achieved. Thus, considering the weight decay term on \eqref{eq:proof-lower-bound}, the minimizer $\vb^\star$ must satisfy $\|\vb^\star\| \le \norm{\frac{\vone}{K}}{} = \frac{1}{\sqrt{K}}$.

Therefore, we only need to optimize over $\vb$ with $\norm{\vb}{} = c \le \frac{1}{\sqrt{K}}$. It this case, it follows from \Cref{lem:spectral-Ytilde} that $\sigma_2 = \cdots = \sigma_{K-1} = \sqrt{n}$, $\sigma_1 \ge  \sqrt{n}, \sigma_K \ge \sqrt{n}\paren{1 - \sqrt{K}c }$, and both inequalities become equalities \emph{if and only if} $\vb = \frac{c}{\sqrt{K}}\vone$. The remaining is to optimize the RHS of \eqref{eq:proof-lower-bound} in terms of $\sigma_K$ which depends on $c$. By \eqref{eq:proof-lower-bound} and \eqref{eq:singular-value-reform}, this problem reduces to 
\begin{align}
\min_{0\le c\le \frac{1}{\sqrt{K}}} \frac{\lambda_{\mb b} }{2} c^2  + 
\frac{n}{2N} \paren{1 - \sqrt{K}c }^2 
\label{eq:problem-c-1}\end{align}
if $d<K$, and otherwise reduces to
\begin{align}
\begin{cases}
\min_{0\le c\le \frac{1}{\sqrt{K}}} \frac{\lambda_{\mb b} }{2} c^2 + \frac{\wt\lambda}{N} \paren{
  \sqrt{n}\paren{1 - \sqrt{K}c } - \frac{1}{2}\wt\lambda}, &  \sqrt{n}\paren{1 - \sqrt{K}c } \ge \wt \lambda\\ 
\min_{0\le c\le \frac{1}{\sqrt{K}}} \frac{\lambda_{\mb b} }{2} c^2 + \frac{n}{2N} \paren{1 - \sqrt{K}c }^2 , & \sqrt{n}\paren{1 - \sqrt{K}c } \le \wt \lambda
\end{cases}
\label{eq:problem-c-2}\end{align}
% \begin{align}
% \min_{0\le c\le \frac{1}{\sqrt{K}}} \frac{\lambda_{\mb b} }{2} c^2 + \frac{1}{N} \cdot \begin{cases}
%  \wt\lambda \sqrt{n}\paren{1 - \sqrt{K}c } - \frac{1}{2}\wt\lambda^2, & d\ge K \ \& \  \sqrt{n}\paren{1 - \sqrt{K}c } \ge \wt \lambda,\\
% \frac{1}{2}n \paren{1 - \sqrt{K}c }^2, & \text{otherwise}.
% \end{cases}
% \label{eq:problem-c}\end{align}
We now consider the two cases as follows:
\begin{enumerate}
    \item Case I: $d<K$. In this case, the problem \eqref{eq:problem-c-1} achieves its minimum at $c^\star = \frac{1}{\sqrt{K}(\lambdab + 1)}$.
    \item Case II: $d\ge K$. In this case, when $c\ge \frac{1}{\sqrt{K}}\paren{1 - \frac{\wt\lambda}{\sqrt{n}}}$,   problem \eqref{eq:problem-c-2} becomes \eqref{eq:problem-c-1}, and thus its minimum among $c\ge \frac{1}{\sqrt{K}}\paren{1 - \frac{\wt\lambda}{\sqrt{n}}}$ is $c^\star = \max\paren{\frac{1}{\sqrt{K}(\lambdab + 1)}, \frac{1}{\sqrt{K}}\paren{1 - \frac{\wt\lambda}{\sqrt{n}}}}$.
   On the other hand, when $c\le \frac{1}{\sqrt{K}}\paren{1 - \frac{\wt\lambda}{\sqrt{n}}}$, the problem \eqref{eq:problem-c-2} is also a quadratic function on  $c$ and achieves its minimum among  $c\le \frac{1}{\sqrt{K}}\paren{1 - \frac{\wt\lambda}{\sqrt{n}}}$ is $c^\star = \min\paren{\frac{\wt\lambda}{\lambdab\sqrt{N}}, \frac{1}{\sqrt{K}}\paren{1 - \frac{\wt\lambda}{\sqrt{n}}}}$. 
   
   We now find the minimum value among these two cases. When $\frac{1}{\sqrt{K}(\lambdab + 1)} \ge \frac{1}{\sqrt{K}}\paren{1 - \frac{\wt\lambda}{\sqrt{n}}}$, i.e., $\lambdab \le \frac{\sqrt{KN \lambdaW\lambdaH}}{1 - \sqrt{KN \lambdaW\lambdaH}}$, we have $\frac{\wt\lambda}{\lambdab\sqrt{N}}\ge \frac{1}{\sqrt{K}}\paren{1 - \frac{\wt\lambda}{\sqrt{n}}}$, which together with the form of the two quadratic functions implies that the minimum is achieved when $c^\star = \frac{1}{\sqrt{K}(\lambdab + 1)}$. On the other hand, when $\frac{1}{\sqrt{K}(\lambdab + 1)} < \frac{1}{\sqrt{K}}\paren{1 - \frac{\wt\lambda}{\sqrt{n}}}$, i.e., $\lambdab > \frac{\sqrt{KN \lambdaW\lambdaH}}{1 - \sqrt{KN \lambdaW\lambdaH}}$, we have $\frac{\wt\lambda}{\lambdab\sqrt{N}}< \frac{1}{\sqrt{K}}\paren{1 - \frac{\wt\lambda}{\sqrt{n}}}$, which together with the form of the two quadratic functions implies that the minimum is achieved when $c^\star = \frac{\wt\lambda}{\lambdab\sqrt{N}} = \frac{\sqrt{N\lambdaW\lambdaH}}{\lambdab}$. Thus, we can also conclude that $c^\star \rightarrow 0$ when $\lambdab \rightarrow \infty$ and $c^\star \rightarrow \frac{1}{\sqrt{K}}$ when $\lambdab \rightarrow 0$.
    
\end{enumerate}

The proof is completed by invoking \Cref{lem:global-one-hot-nuclear} to characterize $(\mW^\star,\mH^\star)$.
\end{proof}

\subsection{Supporting Lemmas}

We first characterize the following balance property between $\mW$ and $\mH$ for any critical point  $(\mW,\mH,\vb)$ of our loss function:

\begin{lemma}\label{lem:global-fact-nuclear} 
For any $K,d,N$, and $\wt{\mY}\in\R^{K\times N}$ with SVD given by $\wt \mY = \mU\mSigma\mV^\top = \sum_{i=1}^K \sigma_i \vu_i\vv_i^\top$ where  $\sigma_1 \ge \sigma_2 \ge \cdots \ge \sigma_K \ge 0$ are the singular values, the following problem
\begin{align}\label{eq:global-fact-nuclear}
  \min_{\mW\in\R^{K\times d},\mH \in\R^{d\times N}} \xi(\mW,\mH) \;= \; \frac{1}{2}\norm{\mW\mH - \wt\mY}{F}^2 \;+\; \frac{\lambda_{\mb W} }{2} \norm{\mb W}{F}^2 + \frac{\lambda_{\mb H} }{2} \norm{\mb H}{F}^2
\end{align}
is a strict saddle function with no spurious local minimizer, in the sense that
	\begin{itemize}
	    \item Any local minimizer $(\mW^\star,\mH^\star)$ of \eqref{eq:global-fact-nuclear} is a global minimizer of \eqref{eq:global-fact-nuclear}, with the following form 
	    \[
	    \mW^\star \mH^\star \;=\; \sum_{i=1}^{\min(d,K)} \eta_i \vu_i\vv_i^\top,
	    \]
	    where we let $\eta_i(\lambdaW,\lambdaH) := \brac{ \sigma_i - \sqrt{\lambdaW\lambdaH}}_+$.
	    Correspondingly, the minimal objective value of \eqref{eq:global-fact-nuclear} is
	    \begin{align}\label{eqn:minimum-obj-val}
	    \xi_\star  \;=\; \begin{cases}\sum_{i=1}^K \frac{1}{2}\paren{\sigma_i - \eta_i }^2 + \sqrt{\lambdaW\lambdaH}\; \eta_i , & d\ge K\\
	   \sum_{i=1}^d \frac{1}{2}\paren{\sigma_i - \eta_i }^2 + \sqrt{\lambdaW\lambdaH} \; \eta_i + \sum_{i = d+1}^K \sigma_i^2 , & d<K
	    \end{cases}.
	    \end{align}
	 
	    \item Any critical point $(\mb W,\mb H)$ of \eqref{eq:global-fact-nuclear} that is not a local minimizer is a strict saddle with negative curvature, i.e. the Hessian at this critical point has at least one negative eigenvalue.
	\end{itemize}
\end{lemma}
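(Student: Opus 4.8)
The plan is to establish the global optimum by a convex relaxation, and then to analyze the factored critical points directly to exhibit negative curvature.

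\textbf{Step 1 (Convex lower bound and global minimizers).} The key observation is that the two Frobenius penalties jointly control the nuclear norm of the product $\mZ = \mW\mH$. By AM--GM and the factorization bound for the nuclear norm,
\[
\frac{\lambdaW}{2}\norm{\mW}{F}^2 + \frac{\lambdaH}{2}\norm{\mH}{F}^2 \;\ge\; \sqrt{\lambdaW\lambdaH}\,\norm{\mW}{F}\norm{\mH}{F} \;\ge\; \sqrt{\lambdaW\lambdaH}\,\norm{\mZ}{*},
\]
so that $\xi(\mW,\mH) \ge \tfrac{1}{2}\norm{\mZ - \wt\mY}{F}^2 + \sqrt{\lambdaW\lambdaH}\norm{\mZ}{*}$ subject to $\rank(\mZ) \le \min(d,K)$. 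Minimizing this rank-constrained convex objective over $\mZ$ with shared singular vectors reduces, via a direction-by-direction computation, to singular-value soft-thresholding: including direction $i$ lowers the objective by $\tfrac12(\sigma_i-\sqrt{\lambdaW\lambdaH})_+^2$, which is monotone in $\sigma_i$, so the optimum activates the top $\min(d,K)$ directions, giving $\mZ^\star = \sum_{i=1}^{\min(d,K)}\eta_i\vu_i\vv_i^\top$ and exactly the value $\xi_\star$ in the two regimes (the tail $\sum_{i>d}\tfrac12\sigma_i^2$ in the case $d<K$ accounts for directions the rank budget cannot reach). I would then produce a \emph{balanced} factorization $\mW^\star\mH^\star = \mZ^\star$ with $\lambdaW\norm{\mW^\star}{F}^2 = \lambdaH\norm{\mH^\star}{F}^2$, which attains equality in both inequalities above, proving the bound is tight and that every global minimizer has product $\mZ^\star$.

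\textbf{Step 2 (Critical points and balance).} Setting the gradients to zero gives $(\mW\mH-\wt\mY)\mH^\top = -\lambdaW\mW$ and $\mW^\top(\mW\mH-\wt\mY) = -\lambdaH\mH$. Left-multiplying the first by $\mW^\top$ and right-multiplying the second by $\mH^\top$ and subtracting yields the balance condition $\lambdaW\mW^\top\mW = \lambdaH\mH\mH^\top$, so the row space of $\mW$ and the column space of $\mH$ coincide as one $r$-dimensional subspace $S\subseteq\R^d$, with $r=\rank(\mW\mH)$.

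\textbf{Step 3 (Strict saddle).} For a non-global critical point I would construct negative curvature for the Hessian form
\[
[\nabla^2\xi](\mD_{\mW},\mD_{\mH}) = \norm{\mD_{\mW}\mH + \mW\mD_{\mH}}{F}^2 + 2\innerprod{\mW\mH - \wt\mY}{\mD_{\mW}\mD_{\mH}} + \lambdaW\norm{\mD_{\mW}}{F}^2 + \lambdaH\norm{\mD_{\mH}}{F}^2.
\]
Let $E := \wt\mY - \mW\mH$ be the residual with leading singular pair $(\vu_E,\vv_E)$. The critical-point characterization forces $\mW\mH$ to be a \emph{partial} thresholded SVD of $\wt\mY$; failing to be the top one leaves a missed above-threshold direction, so $\norm{E}{2} > \sqrt{\lambdaW\lambdaH}$. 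In the generic case $r<d$ I would pick a unit $\vg\in S^\perp$ (so $\mW\vg=0$ and $\vg^\top\mH=0$) and set $\mD_{\mW}=\alpha\vu_E\vg^\top$, $\mD_{\mH}=\beta\vg\vv_E^\top$; the cross term vanishes and the form collapses to $\lambdaW\alpha^2 - 2\norm{E}{2}\alpha\beta + \lambdaH\beta^2$, which is negative for suitable $\alpha,\beta$ precisely when $\norm{E}{2}>\sqrt{\lambdaW\lambdaH}$. Note that when $d>K$ we always have $r\le K<d$, so the room for $\vg$ is automatic, matching the hypothesis of \Cref{thm:global-geometry}.

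The hard part is the saturated case $r=d$ (which arises when $d<K$ and more than $d$ singular directions lie above the threshold): here $S^\perp$ is trivial, no $\vg$ exists, and the negative curvature must be built \emph{inside} $S$ by trading an activated but suboptimal singular direction for a missed larger one, i.e.\ a rank-two rotation of singular mass. Verifying that every non-global saturated critical point admits such an exchange, and that the resulting direction makes the Hessian form negative uniformly — including near the thresholding boundary $\sigma_i\approx\sqrt{\lambdaW\lambdaH}$ and across both regimes $d\ge K$ and $d<K$ — is the most delicate step, and is where I expect the bulk of the technical work to lie.
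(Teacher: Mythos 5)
Your Steps 1--2 are fine, and Step 1 is genuinely different from the paper: the paper never passes through a convex relaxation inside this lemma; instead it enumerates \emph{all} critical points (balance condition, then an orthogonalization by a rotation $\mR$ reducing to column pairs $(\vw_i,\vh^i)$ that must be scaled singular pairs of $\wt\mY$), and reads off the minimum value among them. Your nuclear-norm/AM--GM lower bound plus a balanced factorization is a cleaner way to get $\xi_\star$ and the form of the global product (and, incidentally, your tail term $\sum_{i>d}\tfrac12\sigma_i^2$ is the correct one; the paper's statement drops the $\tfrac12$ in \eqref{eqn:minimum-obj-val} but has it in \eqref{eq:proof-lower-bound}). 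The genuine gap is exactly the one you flag and then leave open: the saturated case $r=d$ (which occurs only when $d<K$). Because in your architecture the first bullet (``local $\Rightarrow$ global'') is deduced from the strict-saddle property, this gap infects \emph{both} bullets of the lemma, not just the second. A secondary gap: your Step 3 presupposes that every critical point is a ``partial thresholded SVD'' of $\wt\mY$; that fact is not a consequence of the balance condition alone and needs the column-pair analysis the paper carries out --- and it is precisely this structure that makes the saturated case easy to close.

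The paper closes the saturated case with a short argument; no ``rank-two rotation of singular mass'' is needed, and you keep the same rank-one perturbation, only replacing the null-space vector $\vg$ by $\valpha\in\R^d$, the unit eigenvector of the \emph{smallest} eigenvalue of $\mW^\top\mW$. By the critical-point structure, the diagonal entries of $\sqrt{\lambdaW/\lambdaH}\,\mW^\top\mW$ are thresholded singular values of $\wt\mY$ with the missed value $\sigma_j-\sqrt{\lambdaW\lambdaH}$ excluded, hence $\nu:=\sqrt{\lambdaW/\lambdaH}\,\valpha^\top\mW^\top\mW\valpha<\sigma_j-\sqrt{\lambdaW\lambdaH}$. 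Taking $\mDelta_{\mW}=\paren{\lambdaH/\lambdaW}^{1/4}\vu_j\valpha^\top$ and $\mDelta_{\mH}=\paren{\lambdaW/\lambdaH}^{1/4}\valpha\vv_j^\top$, the first Hessian term no longer vanishes but equals $2\nu$ (using the balance condition, and the cross term $\innerprod{\mDelta_{\mW}\mH}{\mW\mDelta_{\mH}}$ dies because $\vu_j$ is orthogonal to the column space of $\mW$); the residual term is still $2\innerprod{\mW\mH-\wt\mY}{\vu_j\vv_j^\top}=-2\sigma_j$ and the regularizers give $2\sqrt{\lambdaW\lambdaH}$, so
\begin{align*}
2[\nabla^2\xi(\mW,\mH)](\mDelta,\mDelta)\;=\;2\nu-2\sigma_j+2\sqrt{\lambdaW\lambdaH}\;=\;-2\paren{\sigma_j-\sqrt{\lambdaW\lambdaH}-\nu}\;<\;0.
\end{align*}
(Your Case $r<d$ is recovered as the special instance $\nu=0$.) Note also that your worry about making the estimate work ``uniformly'' near the threshold $\sigma_i\approx\sqrt{\lambdaW\lambdaH}$ is unnecessary: the lemma only requires one negative-curvature direction at each fixed non-global critical point, not a uniform negative eigenvalue bound across critical points.
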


\begin{proof}[Proof of Lemma \ref{lem:global-fact-nuclear} ]
By definition, any critical point $(\mW,\mH)$ of \eqref{eq:global-fact-nuclear} satisfies the following:
\begin{align*}
    	\nabla_{\mW}\xi(\mW,\mH) \;&=\; (\mW\mH - \wt\mY)\mH^\top + \lambdaW \mW \;=\; \vzero,\\
	\nabla_{\mH}\xi(\mW,\mH) \;&=\;  \mW^\top (\mW\mH - \wt\mY) + \lambdaH \mH \;=\; \vzero. 
\end{align*}
By left multiplying the first equation by $\mb W^\top$ on both sides and then right multiplying second equation by $\mH^\top$ on both sides and combining the equations together, we obtain
\begin{align}
    \lambdaW \mW^\top \mW \;=\; \lambdaH \mH \mH^\top.
\label{eq:critical-balance}\end{align}

This further gives  
\e\begin{split}
    \frac{\lambdaW}{\lambdaH} \mW\mW^\top\mW  + \lambdaW \mW \;&=\; \wt\mY\mH^\top,\\
\frac{\lambdaH}{\lambdaW} \mH^\top \mH\mH^\top + \lambdaH \mH^\top \;&=\; \wt\mY^\top \mW. 
\end{split}
\label{eqn:crtical-fact-nuclear}
\ee
In the following, without loss of generality, we assume that the critical point $(\mb W,\mb H)$ satisfying the above equations has the form 
\begin{align}\label{eqn:WH-orthogonal}
    \mb W \;=\; \begin{bmatrix}
    \wh{\mb W} & \mb 0
    \end{bmatrix},\quad \mb H \;=\; \begin{bmatrix}
    \wh{\mb H} \\ \mb 0
    \end{bmatrix}
\end{align}
where the columns of $\wh{\mb W}$ are orthogonal and the rows of $\wh{\mb H}$ are orthogonal, and the zeros $\mb 0$ in $\mb W$ and $\mb H$ might or might not exist depending on the rank of $\mb W$ and $\mb H$. 
The underlying reasoning is that, for any $\mb W$ satisfying \eqref{eqn:crtical-fact-nuclear}, the Gram-Schmidt process implies that we can always orthogonalize $\mb W$ by an orthonormal matrix $\mb R\in \bb R^{d\times d}$ (i.e., $\mb R^\top \mb R = \mb R\mb R^\top  = \mb I$), such that $\wt{\mb W} = \mb W\mb R = \begin{bmatrix}\wh{\mb W} & \mb 0 \end{bmatrix}$. On the other hand, let $\wt{\mb H}= \mb R^\top \mb H $. Because $\lambdaW \mW^\top \mW \;=\; \lambdaH \mH\mH^\top$, we have $\lambdaW \wt\mW^\top \wt\mW \;=\; \lambdaH \wt\mH \wt\mH^\top$, which implies that the rows of $\wt \mH$ are also orthogonal. Therefore, multiply $\mb R$ on both sides of \eqref{eqn:crtical-fact-nuclear}, we always have
\begin{align*}
        \frac{\lambdaW}{\lambdaH} \wt{\mW}\wt{\mW}^\top\wt{\mW}  + \lambdaW \wt{\mW} \;=\; \wt\mY\wt{\mH}^\top,\quad \frac{\lambdaH}{\lambdaW} \wt{\mH}^\top \wt{\mH}\wt{\mH}^\top + \lambdaH \wt{\mH}^\top \;=\; \wt\mY^\top \wt{\mW}. 
\end{align*}
Thus, we can verify that $(\wt\mW,\wt\mH)$ is also a critical point with $\wt\mW \wt\mH = \mW\mH$ and has the same Hessian information as $(\mW,\mH)$. Thus, without the loss of generality, we can assume orthogonal $(\mb W,\mb H)$ in the form \eqref{eqn:WH-orthogonal}, but with possible zero columns.

%We can apply the Gram-Schmidt process to orthonormalize the columns of $\mW$ such that $\wt \mW = \mW\mR$, where $\wt \mW$ is orthogonal but with possible zero columns and $\mR\in\R^{d\times d}$ is orthonormal, i.e., $\mR\mR^\top = \mId$. Also let $\wt\mH = \mR^\top \mH$. Since $\lambdaW \mW^\top \mW \;=\; \lambdaH \mH\mH^\top$, we have $\lambdaW \wt\mW^\top \wt\mW \;=\; \lambdaH \wt\mH \wt\mH^\top$, which implies that $\wt \mH$ is also orthogonal. Moreover, we can verify that $(\wt\mW,\wt\mH)$ is also a critical point with $\wt\mW \wt\mH = \mW\mH$ and has the same Hessian information as $(\mW,\mH)$. Thus, without loss of generality, we assume the critical point $\mW$ is orthogonal, but with possible zero columns. 

\paragraph{Form of the global solutions.} Based on the orthogonalization, we further decompose \eqref{eqn:crtical-fact-nuclear} for all $i=1,\cdots, d$ columns of $\mb W$ as
\e\begin{split}
   \paren{ \frac{\lambdaW}{\lambdaH} \norm{\vw_i}{}^2  + \lambdaW } \vw_i \;&=\; \wt\mY\vh^i,\\
\paren{\frac{\lambdaH}{\lambdaW} \norm{\vh^i}{}^2 + \lambdaH} \vh^i \;&=\; \wt\mY^\top \vw_i,
\end{split}
\label{eqn:crtical-fact-nuclear-2}
\ee
which implies that either (\emph{i}) $\vw_i = \vzero$ and $\vh^i = \vzero$, or (\emph{ii}) $\vw_i, \vh^i$ are the (scaled) left and right singular vectors of $\wt\mY$. In particular, when $\vw_i \neq \vzero$ and $\vh_i \neq \vzero$, then by \eqref{eq:critical-balance}, it gives
\begin{align}\label{eqn:norm-relationship}
    \norm{\vh^i}{}^2 = \frac{\lambdaW}{\lambdaH} \norm{\vw_i}{}^2.
\end{align}
By further plugging the equation above into \eqref{eqn:crtical-fact-nuclear-2}, it gives
\e\begin{split}
   \paren{ \sqrt{\frac{\lambdaW}{\lambdaH}} \norm{\vw_i}{}^2  + \sqrt{\lambdaW\lambdaH} } \frac{\vw_i}{\norm{\vw_i}{}} \;&=\; \wt\mY \frac{\vh^i}{\norm{\vh^i}{}},\\
\paren{ \sqrt{\frac{\lambdaW}{\lambdaH}} \norm{\vw_i}{}^2  + \sqrt{\lambdaW\lambdaH} } \frac{\vh^i}{\norm{\vh^i}{}} \;&=\; \wt\mY^\top \frac{\vw_i}{\norm{\vw_i}{}}.
\end{split}
\label{eqn:crtical-fact-nuclear-3}
\ee
Thus, when  $\vw_i \neq \vzero$ and $\vh_i \neq \vzero$, we conclude that $\sqrt{\frac{\lambdaW}{\lambdaH}} \norm{\vw_i}{}^2  + \sqrt{\lambdaW\lambdaH}$ is a singular value of $\wt\mY$, say $\sigma_{i_j}$, and $\frac{\vw_i}{\norm{\vw_i}{}}$ and $\frac{\vh^i}{\norm{\vh^i}{}}$ are the corresponding left and right singular vectors, respectively. In other words, when $\vw_i \neq \vzero$ and $\vh_i \neq \vzero$, then
\begin{align}\label{eqn:singular-values-Y}
    \sigma_{i_j} \;=\; \sqrt{\frac{\lambdaW}{\lambdaH}} \norm{\vw_i}{}^2  + \sqrt{\lambdaW\lambdaH}, \quad \mb u_{i_j} \;=\; \frac{\vw_i}{\norm{\vw_i}{}},\quad \mb v_{i_j} \;=\; \frac{\vh^i}{\norm{\vh^i}{}}
\end{align}
for some $i_j$ such that $\sigma_{i_j} > \sqrt{\lambdaW\lambdaH}$. Together with \eqref{eqn:norm-relationship}, it further implies that
\begin{align*}
    \mb w_i \mb h^{i \top}\;=\; \norm{\mb w_i}{2}^2 \frac{ \mb w_i}{ \norm{\mb w_i}{2} } \frac{ \mb h^{i \top} }{ \norm{\mb w_i}{2} } \;=\;\sqrt{\frac{\lambdaW}{\lambdaH}} \norm{\mb w_i}{2}^2  \frac{ \mb w_i}{ \norm{\mb w_i}{2} } \frac{ \mb h^{i \top} }{ \norm{\mb h^i}{2} } \;=\; \paren{\sigma_{i_j} - \sqrt{\lambdaW\lambdaH}} \mb u_{i_j} \mb v_{i_j}^\top .
\end{align*}
%When $\vw_i \neq \vzero$ and $\vh_i \neq \vzero$, 
Next, we discuss global minimizers and global function values in two cases: (\emph{i}) $d \geq K$, and (\emph{ii}) $d<K$. For both cases, based on the above results, we can write
\begin{align*}
    \mb W \mb H^\top \;=\; \sum_{i=1}^d \mb w_i \mb h^{i\top} \;& =\; \sum_{ \vw_i \neq \vzero, \vh^i \neq \vzero } \paren{\sigma_{i_j} - \sqrt{\lambdaW\lambdaH}} \mb u_{i_j} \mb v_{i_j}^\top +  \sum_{ \vw_i = \vzero \text{ and } \vh^i = \vzero }  \mb w_i \mb h^{i \top} \\
    \;&=\;  \sum_{ \vw_i \neq \vzero, \vh^i \neq \vzero } \paren{\sigma_{i_j} - \sqrt{\lambdaW\lambdaH}} \mb u_{i_j} \mb v_{i_j}^\top.
\end{align*}

\noindent \textbf{Case I: $d\geq K$}. In this case, given the rank of $\mb W$ is at most $K$, we know that the minimum is achieved when
\begin{align*} 
    \mW^\star \mH^\star \;=\; \sum_{i=1}^{K} \brac{\sigma_i - \sqrt{\lambdaW\lambdaH}}_+\vu_i\vv_i^\top 
\end{align*}
with $\sigma_i \geq \sqrt{\lambdaW\lambdaH}$ for all $i=1,\cdots, K$. In this case, we have
\begin{align*}
    \xi_\star \;&=\; \frac{1}{2}\sum_{i=1}^{ K } \paren{ \brac{ \sigma_i - \sqrt{\lambdaW\lambdaH} }_+  - \sigma_i}^2 + \frac{\lambdaW}{2} \sum_{i=1}^d \norm{\mb w_i}{2}^2  + \frac{\lambdaH}{2} \sum_{i=1}^d \norm{\mb h^i}{2}^2 \\
    \;&=\; \frac{1}{2}\sum_{i=1}^{ K } \paren{ \brac{ \sigma_i - \sqrt{\lambdaW\lambdaH} }_+  - \sigma_i}^2 + \lambdaW \sum_{i=1}^d \norm{\mb w_i}{2}^2 \\
    \;&=\; \frac{1}{2}\sum_{i=1}^{ K } \paren{ \brac{ \sigma_i - \sqrt{\lambdaW\lambdaH} }_+  - \sigma_i}^2 + \sqrt{ \lambdaH \lambdaW } \sum_{i=1}^{ K } \brac{ \sigma_i - \sqrt{\lambdaW\lambdaH} }_+ ,
\end{align*}
where for the second and third equality, we used \eqref{eqn:norm-relationship} and \eqref{eqn:singular-values-Y}, respectively.

\noindent \textbf{Case II: $d< K$}. In this case, we know that the minimum is achieved when
\begin{align*} 
    \mW^\star \mH^\star \;=\; \sum_{i=1}^{d} \brac{\sigma_i - \sqrt{\lambdaW\lambdaH}}_+\vu_i\vv_i^\top 
\end{align*}
with $\sigma_i \geq \sqrt{\lambdaW\lambdaH}$ for all $i=1,\cdots, d$. Similarly, we have
\begin{align*}
    \xi_\star \;&=\; \frac{1}{2}\sum_{i=1}^{ d } \paren{ \brac{ \sigma_i - \sqrt{\lambdaW\lambdaH} }_+  - \sigma_i}^2 + \sqrt{ \lambdaH \lambdaW } \sum_{i=1}^{ d } \brac{ \sigma_i - \sqrt{\lambdaW\lambdaH} }_+ + \sum_{i=d+1}^K \sigma_i^2 ,
\end{align*}
where the extra term $ \sum_{i=d+1}^K \sigma_i^2$ is coming from the singular values of $\wh{\mb Y}$ and the decomposition of $\frac{1}{2}\norm{\mW\mH - \wt\mY}{F}^2 \;+\; \frac{\lambda_{\mb W} }{2} \norm{\mb W}{F}^2$. 

In summary, the minimum function value is obtained when 
\begin{align}\label{eqn:global-minimizer-condition}
    \mW^\star \mH^\star \;=\; \sum_{i=1}^{\min \{d,K\}} \brac{\sigma_i - \sqrt{\lambdaW\lambdaH}}_+\vu_i\vv_i^\top \;=\; \sum_{i=1}^{\min \{d,K\}}\eta_i \vu_i\vv_i^\top,
\end{align}
with $\eta_i(\lambdaW,\lambdaH) := \brac{ \sigma_i - \sqrt{\lambdaW\lambdaH}}_+$, and the minimum function value is attained as in \eqref{eqn:minimum-obj-val}.

\paragraph{Showing negative curvature for strict saddles.}
In the remaining part, we show those critical point $(\mW,\mH)$ that does not satisfy the condition in \eqref{eqn:global-minimizer-condition} are strict saddle points, by showing that the Hessian of \eqref{eq:global-fact-nuclear} has negative eigenvalues. First, we derive the directional Hessian of \eqref{eq:global-fact-nuclear}, which has the following form
\begin{align} 
2[\nabla^2 \xi(\mW,\mH)](\mDelta,\mDelta) \;=\;& \norm{\mDelta_{\mW}\mH + \mW\mDelta_{\mH}}{F}^2 + 2\innerprod{\mW\mH - \wt\mY}{\mDelta_{\mW}\mDelta_{\mH}} \nonumber  \\
&+ \lambdaW\norm{\mDelta_{\mW}}{F}^2 + \lambdaH\norm{\mDelta_{\mH}}{F}^2. \label{eqn:hessian-form}
\end{align}
Given that a critical point $(\mW,\mH)$ is not a global minimizer, then \eqref{eqn:global-minimizer-condition} is not satisfied. This implies that  there must exist a singular value of $\wh{\mb Y}$ with $\sigma_j > \sqrt{\lambdaW\lambdaH}$, which cannot be not covered by any $(\vw_i,\vh_i)$ in the sense that $\vw_j\vh^{j\top} \neq (\sigma_{j} - \sqrt{\lambdaW\lambdaH})\vu_{j}\vv_{j}^\top$ for some $j$. We now discuss this situation separately in two cases: (\emph{i}) $d \geq K$, and (\emph{ii}) $d<K$.

\paragraph{Case I: $d\ge K$.} In this case, since each column of $\mW$ is either zero or corresponds to the left singular vectors of $\wt\mY$, it implies that the column space of $\mW$ has a non-trivial null space, i.e., there must exist a unit vector $\valpha \in\R^{d}$ such that $\mW\valpha = \vzero$. Since $\lambdaW \mW^\top \mW \;=\; \lambdaH \mH\mH^\top$,  we also have $\valpha^\top \mH = \vzero$. With this property, for the index $j$ with $\vw_j\vh^{j\top} \neq (\sigma_{j} - \sqrt{\lambdaW\lambdaH})\vu_{j}\vv_{j}^\top$, we construct $\mDelta_{\mW} = \paren{\frac{\lambdaH}{\lambdaW}}^{1/4} \vu_j\valpha^\top, \mDelta_{\mH} = \paren{\frac{\lambdaW}{\lambdaH}}^{1/4} \valpha \vv_j^\top$. Given that $\mDelta_{\mW}\mH = \mb 0$ and $\mW\mDelta_{\mH} = \mb 0$
\begin{align*}
\norm{\mDelta_{\mW}\mH + \mW\mDelta_{\mH}}{F}^2 \;&=\; 0,\\
\innerprod{\mW\mH - \wt\mY}{\mDelta_{\mW}\mDelta_{\mH}} \;& =\; - \sigma_j,\\
\lambdaW\norm{\mDelta_{\mW}}{F}^2 + \lambdaH\norm{\mDelta_{\mH}}{F}^2 \;&=\; 2\sqrt{\lambdaW\lambdaH}. 
\end{align*}
Plugging this into the Hessian \eqref{eqn:hessian-form}, it gives
\[
2[\nabla^2 \xi(\mW,\mH)](\mDelta,\mDelta) \;=\; -2\sigma_j + 2\sqrt{\lambdaW\lambdaH} \;=\; -2(\sigma_j - \sqrt{\lambdaW\lambdaH})\;<\;0.
\]
This implies that there exists a negative curvature for the Hessian, and the saddle point must be strict saddle.

\paragraph{Case II: $d< K$.} Recall from \eqref{eqn:WH-orthogonal} and \eqref{eqn:singular-values-Y} that $\sqrt\frac{\lambdaW}{\lambdaH}\mW^\top\mW$ is a diagonal matrix with the values of diagonal entry from $\sets{\brac{\sigma_1 - \sqrt{\lambdaW\lambdaH}}_+,\ldots,\brac{\sigma_K - \sqrt{\lambdaW\lambdaH}}_+,0}$, but here it excludes $\brac{\sigma_j - \sqrt{\lambdaW\lambdaH}}_+$ which equals $\sigma_j - \sqrt{\lambdaW\lambdaH}$ by our assumption. Thus, $\sqrt\frac{\lambdaW}{\lambdaH}\mW^\top\mW$ has at least one diagonal entry which is strictly smaller than $\sigma_j - \sqrt{\lambdaW\lambdaH}$. Now let $\valpha\in\R^d$ be the eigenvector associated with the smallest eigenvalue of $\mW^\top\mW$, so that  %\qq{isn't smallest eigenvalue should be 0?}
\begin{align*}
    \nu:=\sqrt\frac{\lambdaW}{\lambdaH}\valpha^\top \mW^\top\mW\valpha< \sigma_j - \sqrt{\lambdaW\lambdaH}. 
\end{align*}
Since $\lambdaW \mW^\top \mW \;=\; \lambdaH \mH\mH^\top$, we also have $\sqrt\frac{\lambdaH}{\lambdaW}\valpha^\top \mH^\top\mH\valpha = \nu$. With this property, we construct $\mDelta_{\mW} = \paren{\frac{\lambdaH}{\lambdaW}}^{1/4} \vu_j\valpha^\top, \mDelta_{\mH} = \paren{\frac{\lambdaW}{\lambdaH}}^{1/4} \valpha \vv_j^\top$, which satisfies 
\begin{align*}
\norm{\mDelta_{\mW}\mH + \mW\mDelta_{\mH}}{F}^2 &= \sqrt\frac{\lambdaW}{\lambdaH}\valpha^\top \mW^\top\mW\valpha + \sqrt\frac{\lambdaH}{\lambdaW}\valpha^\top \mH^\top\mH\valpha = 2\nu,\\
\innerprod{\mW\mH - \wt\mY}{\mDelta_{\mW}\mDelta_{\mH}} & = - \sigma_j,\\
\lambdaW\norm{\mDelta_{\mW}}{F}^2 + \lambdaH\norm{\mDelta_{\mH}}{F}^2 & = 2\sqrt{\lambdaW\lambdaH}. 
\end{align*}
Plugging this into the Hessian quadratic form gives
\[
2[\nabla^2 \xi(\mW,\mH)](\mDelta,\mDelta) = 2\nu -2\sigma_j + 2\sqrt{\lambdaW\lambdaH} = -2(\sigma_j - \sqrt{\lambdaW\lambdaH} - \nu)<0.
\]
Therefore, we prove $(\mW,\mH)$ is a strict saddle for both cases. This completes the proof.
\end{proof}

\begin{lemma} Assume the number of training samples in each class is balanced, i.e., $n = n_1 = \cdots = n_K$, and let $\mY = \begin{bmatrix}\vy_1 & \cdots & \vy_1 & \vy_2 & \cdots & \vy_K \end{bmatrix} \in \R^{K\times nK}$ be the matrix that contains the one-hot vectors for all the training samples. Then $\wt \mY = \mY - \vb\vone^\top$ has at least $K-2$ singular values being $\sqrt{n}$. The rest of the two singular values, without loss of generality, denoted by $\sigma_1$ and $\sigma_K$, depend on $\vb$. Then, we have the following lower bounds for $\sigma_1$ and $\sigma_K$.
\begin{enumerate}
    \item For any $\vb$, the largest singular value $\sigma_1$ can be lower bounded by 
\begin{align}
    \sigma_1 \ge \sqrt{n} \max\paren{\sqrt{1  + K \paren{ \norm{\vb}{}^2 - \frac{1}{K}\paren{\vone^\top \vb}^2 }},    \abs{1 - \vone^\top \vb }}.
\label{eq:sigma1-lower-bound}\end{align}
\item For any $\vb$ on the sphere $\{\vb\in\R^K:\norm{\vb}{} = c\}$ with $c\le \frac{1}{\sqrt{K}}$, we have
\begin{align}
    \sigma_1 \ge  \sqrt{n}, \quad \sigma_K \ge \sqrt{n}\paren{1 - \sqrt{K}c }
\end{align}
and both inequalities become equalities \emph{if and only if} $\vb = \frac{c}{\sqrt{K}}\vone$. 
\end{enumerate}
\label{lem:spectral-Ytilde}\end{lemma}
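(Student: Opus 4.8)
The plan is to reduce the entire statement to the spectrum of the $K\times K$ Gram matrix $\wt\mY\wt\mY^\top$. Using the balanced structure of $\mY$ one has $\mY\mY^\top = n\mId_K$ and $\mY\vone_N = n\vone_K$, so a direct expansion of $(\mY - \vb\vone_N^\top)(\mY - \vb\vone_N^\top)^\top$ gives
\[
\wt\mY\wt\mY^\top = n\left[\mId_K - \vone_K\vb^\top - \vb\vone_K^\top + K\vb\vb^\top\right] =: n\,\mM,
\]
and hence $\sigma_i = \sqrt{n\,\lambda_i(\mM)}$. The key structural observation is that $\mM - \mId_K$ is symmetric with range contained in $\Span\{\vone_K,\vb\}$, so it has rank at most two; thus $\mM$ acts as the identity on the $(\ge K-2)$-dimensional orthogonal complement, producing at least $K-2$ eigenvalues equal to $1$, i.e. at least $K-2$ singular values equal to $\sqrt n$. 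Writing $\mM-\mId_K = UAU^\top$ with $U = [\,\vone_K\ \ \vb\,]$ and $A = \begin{bmatrix} 0 & -1 \\ -1 & K \end{bmatrix}$, the two remaining (nontrivial) eigenvalues are the eigenvalues of the $2\times2$ matrix $A U^\top U$; a short computation gives
\[
\lambda_1 + \lambda_2 = 2 + K\|\vb\|^2 - 2\,\vone^\top\vb, \qquad \lambda_1\lambda_2 = (1-\vone^\top\vb)^2 .
\]

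Next I would fix the ordering. Since $(\lambda_1-1)(\lambda_2-1) = (\vone^\top\vb)^2 - K\|\vb\|^2 \le 0$ by Cauchy--Schwarz, exactly one nontrivial eigenvalue lies above $1$ and one below; together with the $K-2$ unit eigenvalues this confirms $\sigma_2 = \cdots = \sigma_{K-1} = \sqrt n$, with $\sigma_1 = \sqrt{n\,\lambda_{\max}(\mM)}$ and $\sigma_K = \sqrt{n\,\lambda_{\min}(\mM)}$. Rather than solving the quadratic, I would extract every stated bound through Rayleigh quotients. For $\sigma_1$, testing $\mM$ against $\vx=\vone_K$ yields $\lambda_{\max}(\mM)\ge (1-\vone^\top\vb)^2$, and testing against the component of $\vb$ orthogonal to $\vone_K$, namely $\vx = \vb - \tfrac{\vone^\top\vb}{K}\vone_K$ (the degenerate case $\vb\parallel\vone_K$ giving the bound $1$ trivially), yields $\lambda_{\max}(\mM)\ge 1 + K\|\vb\|^2 - (\vone^\top\vb)^2$. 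Taking the larger of the two and the square root proves the first claim.

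For the second part, restrict to $\|\vb\| = c \le 1/\sqrt K$. The bound $\sigma_1\ge\sqrt n$ is immediate from $\lambda_{\max}(\mM)\ge 1$. For $\sigma_K$ I would lower bound $\lambda_{\min}(\mM) = \min_{\|\vx\|=1}\vx^\top\mM\vx$: setting $p = \vone^\top\vx$ and $q = \vb^\top\vx$, the quadratic form equals $1 - 2pq + Kq^2$, and relaxing to the independent constraints $|p|\le\sqrt K$, $|q|\le c$ (both valid by Cauchy--Schwarz) the minimum is $(1-\sqrt K c)^2$, attained at $|q|=c$; this gives $\sigma_K \ge \sqrt n\,(1-\sqrt K c)$.

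The one genuinely delicate point is the equality characterization. I expect that equality in either bound forces simultaneous saturation of both Cauchy--Schwarz inequalities, hence $\vb\parallel\vone_K$, i.e. $\vb = \pm\frac{c}{\sqrt K}\vone_K$; a sign check then rules out the minus sign, since for $\vb = -\frac{c}{\sqrt K}\vone_K$ the nontrivial eigenvalues are $\{1,(1+\sqrt K c)^2\}$, giving $\sigma_1 = \sqrt n(1+\sqrt K c)$ and $\sigma_K=\sqrt n$, both strict. Conversely, for $\vb = \frac{c}{\sqrt K}\vone_K$ I would verify that $\lambda=1$ solves the quadratic $\lambda^2 - (\lambda_1+\lambda_2)\lambda + \lambda_1\lambda_2 = 0$ and read off the other root as $(1-\sqrt K c)^2$ from the product, so the nontrivial eigenvalues are exactly $\{1,(1-\sqrt K c)^2\}$ and both inequalities become equalities. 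I anticipate that this sign determination and the bookkeeping of which nontrivial eigenvalue is $\sigma_1$ versus $\sigma_K$ will be the only error-prone steps; the Gram-matrix reduction and the Rayleigh-quotient estimates are otherwise routine.
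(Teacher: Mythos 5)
Your proposal is correct and follows essentially the same route as the paper's proof: reduce to the Gram matrix $\wt\mY\wt\mY^\top = n\paren{\mId - \vb\vone^\top - \vone\vb^\top + K\vb\vb^\top}$, extract the $K-2$ unit eigenvalues from the rank-two perturbation structure, lower bound the top eigenvalue via Rayleigh quotients along exactly the same two test directions $\vone$ and $\vb - \frac{\vone^\top\vb}{K}\vone$, and lower bound the smallest eigenvalue via the same Cauchy--Schwarz relaxation of the quadratic form $1 - 2pq + Kq^2$. Your added trace/determinant bookkeeping, the ordering fact $(\lambda_1-1)(\lambda_2-1)\le 0$, and the explicit sign check ruling out $\vb = -\frac{c}{\sqrt{K}}\vone$ are refinements within the same argument; in fact your sign check is more careful than the paper's, whose final sentence asserts equality at $\vb = \pm\frac{c}{\sqrt{K}}\vone$ even though (as your eigenvalue computation for the minus sign shows) only the positive sign is correct, consistent with the lemma statement.
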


\begin{proof}[Proof of \Cref{lem:spectral-Ytilde}] To study the singular values of $\mY$, it is equivalent to look at the eigenvalues of the Gram matrix of $\wt \mY^\top$:
\[
\mG = \wt \mY \wt \mY^\top = \paren{\mY - \vb\vone^\top}\paren{\mY - \vb\vone^\top}^\top = n \paren{\mId - \vb\vone^\top - \vone \vb^\top + K \vb\vb^\top}. 
\]
If $\vb$ is aligned with $\vone$, i.e., they live in the same line, then $- \vb\vone^\top - \vone \vb^\top + K \vb\vb^\top$ is a rank-1 matrix and $\mG$ has $K-1$ eignevalues being $n$ and the rest eigenvalue being $n \paren{1 - \vone^\top \vb }^2$. On the other hand, if $\vb$ is not aligned with $\vone$, then $- \vb\vone^\top - \vone \vb^\top + K \vb\vb^\top$ is a rank-2 matrix and $\mG$ has $K-2$ eignevalues being $n$. In this case, the rest of the two eigenvalues, denoted by $\pi_1$ and $\pi_K$, correspond to the eigenvectors within the subspace spanned by $\vone$ and $\vb$. 

To estimate the largest eigenvalues $\pi_1$, we construct two orthonormal vectors within this subspace spanned by $\vone$ and $\vb$ and compute the corresponding Rayleigh quotient. Specifically, we first compute the Rayleigh quotient along the direction $\vone$ as
\begin{align*}
    \frac{\vone^\top\mG \vone}{\vone^\top \vone} = \frac{n}{K}\paren{K - 2K\vone^\top \vb + K \paren{\vone^\top \vb}^2  } = n \paren{1 - 2\vone^\top \vb +  \paren{\vone^\top \vb}^2  } = n \paren{1 - \vone^\top \vb }^2.
\end{align*}

Use Gram-Schmidt orthonormalization to obtain the other direction as $\va = \vb - \frac{1}{K} \vone^\top \vb \vone$, which gives the following Rayleigh quotient:
\begin{align*}
    \frac{\va^\top\mG \va}{\va^\top \va} = \frac{n}{\norm{\va}{}^2}\paren{\norm{\va}{}^2  + K \paren{ \norm{\vb}{}^2 - \frac{1}{K}\paren{\vone^\top \vb}^2  }^2  } = n  + nK \paren{ \norm{\vb}{}^2 - \frac{1}{K}\paren{\vone^\top \vb}^2 }, 
\end{align*}
where the last equality follows because $\norm{\va}{}^2 = \norm{\vb}{}^2 - \frac{1}{K}\paren{\vone^\top \vb}^2$. Thus, by the min-max theorem (i.e., Courant–Fischer–Weyl min-max principle), we have
\[
\pi_1 \ge  \max\paren{\frac{\vone^\top\mG \vone}{\vone^\top \vone},\frac{\va^\top\mG \va}{\va^\top \va} } \ge \max\paren{n \paren{1 - \vone^\top \vb }^2, n  + nK \paren{ \norm{\vb}{}^2 - \frac{1}{K}\paren{\vone^\top \vb}^2}} \ge n,
\]
where the last inequality becomes an inequality \emph{if and only if} $\vb$ is a scaled version of the vector $\vone$, i.e., $\vb = \frac{\norm{\vb}{}}{\sqrt{K}}\vone$.

To obtain a lower bound for $\pi_K$ whenever $\norm{\vb}{}\le \frac{1}{\sqrt{K}}$, we again use the the min-max theorem as
\begin{align*}
\frac{1}{n}\pi_K &\ge \min_{\norm{\vu}{}=1} \frac{1}{n} \vu^\top \mG \vu = \min_{\norm{\vu}{}=1} 1 - 2\vu^\top \vb \vone^\top \vu + K (\vu^\top \vb)^2\\
& \ge \min_{\norm{\vu}{}=1} 1 - 2\sqrt{K}\abs{\vu^\top \vb}  + K (\vu^\top \vb)^2\\
& \ge 1 - 2\sqrt{K}\norm{\vb}{} + K \norm{\vb}{}^2 = \paren{1 - \sqrt{K}\norm{\vb}{}}^2, 
\end{align*}
where the first inequality achieves equality when $\vu$ is restricted to the subspace spanned by $\vone$ and $\vb$, the second inequality becomes an equality only when $\vu = \vone/\sqrt{K}$ and $\vu^\top \vb \ge 0$ or $\vu = -\vone/\sqrt{K}$ and $\vu^\top \vb \le 0$, and the last inequality achieves equality if and only if $\vu$ is aligned with $\vb$, i.e., $\abs{\vu^\top \vb} = \norm{\vb}{}$. Thus, for any $\vb$ on the sphere $\{\vb\in\R^K:\norm{\vb}{} = c\}$ with $c\le \frac{1}{\sqrt{K}}$, $\pi_K$ achieves its minimum possible value $n \paren{1 - \sqrt{K}\norm{\vb}{}}^2$ if and only if $\vb = \pm\frac{c}{\sqrt{K}}\vone$. This completes the proof.

\end{proof}

\begin{lemma}\label{lem:global-one-hot-nuclear} Assume the number of training samples in each class is balanced, i.e., $n = n_1 = \cdots = n_K$, and let $\mY = \begin{bmatrix}\vy_1 & \cdots & \vy_1 & \vy_2 & \cdots & \vy_K \end{bmatrix} \in \R^{K\times nK}$ be the matrix that contains the one-hot vectors for all the training samples. Suppose $b^\star\le \frac{1}{K}$. Then any global minimizer  $(\mW^\star,\mH^\star)$  of 
	\begin{align}\label{eq:obj-app}
     \min_{\mb W , \mb H}  ~ \frac{1}{2N}\norm{\mW\mH + b^\star\vone\vone^\top - \mY}{F}^2 \;+\; \frac{\lambda_{\mb W} }{2} \norm{\mb W}{F}^2 + \frac{\lambda_{\mb H} }{2} \norm{\mb H}{F}^2. 
\end{align}
satisfies the self-duality
\begin{gather*}
      \vh_{k,i}^\star \;=\;  \sqrt{ \frac{ \lambda_{\mb W}  }{ \lambda_{\mb H} n } } \vw^{\star k} ,\quad \forall \; k\in[K],\; i\in[n].
\end{gather*}
Moreover, if $\lambdaW\lambdaH \ge \frac{1}{NK}$, then $\mW^\star = \vzero$ and $\mH^\star = \vzero$. On the other hand, if $\lambdaW\lambdaH < \frac{1}{NK}$, $(\mW^\star,\mH^\star)$ further obeys the following properties for different $d$:
\begin{enumerate}
\item $d< K-1$: $\mW^\star\mW^{\star\top}\sim \calP_d(\mId - \frac{1}{K} \mb 1_K \mb 1_K^\top)$ where $\calP_d$ denotes the best rank-$d$ approximating and $\mA\sim\mB$ means that there is a constant $c$ such that $\mA = c\mB$;
\item  $d = K-1$: In this case, $\mW^{\star} \mW^{\star\top} \sim\mb I_K - \frac{1}{K} \mb 1_K \mb 1_K^\top$;
 \item  $d\ge K$ and $b^\star \ge \frac{1}{K} - \sqrt{n\lambdaW\lambdaH}$: $\mW^{\star} \mW^{\star\top} \sim\mb I_K - \frac{1}{K} \mb 1_K \mb 1_K^\top$;
\item  $d\ge K$ and $b^\star < \frac{1}{K} - \sqrt{n\lambdaW\lambdaH}$: $\mW^{\star} \mW^{\star\top} \sim \mId - \frac{b^\star}{1 - K\sqrt{n\lambdaW\lambdaH}} \vone_K \vone_K^\top$;

\end{enumerate}

\end{lemma}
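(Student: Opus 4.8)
The plan is to establish the result in three stages: first reduce to the class-mean variables by proving variability collapse, then put the reduced problem in the exact form covered by \Cref{lem:global-fact-nuclear}, and finally upgrade the resulting characterization of the \emph{product} $\mW^\star\ol\mH^\star$ to the \emph{self-duality} of the factors. Throughout I write $\ol\mH = \begin{bmatrix}\ol\vh_1 & \cdots & \ol\vh_K\end{bmatrix}\in\R^{d\times K}$ and recall $N=nK$.

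First I would prove \NC1, namely $\vh_{k,i}^\star=\ol\vh_k^\star$ for all $i$. For fixed $\mW$, the map $\vh\mapsto\frac{1}{2N}\norm{\mW\vh+b^\star\vone-\vy_k}{2}^2+\frac{\lambdaH}{2}\norm{\vh}{2}^2$ is convex and its regularizer is \emph{strictly} convex, so replacing every $\vh_{k,i}$ in a class by the class mean $\ol\vh_k$ does not increase the fitting term (Jensen) and strictly decreases $\frac{\lambdaH}{2}\sum_i\norm{\vh_{k,i}}{2}^2$ unless the features already coincide; hence any global minimizer must be collapsed. Substituting $\vh_{k,i}=\ol\vh_k$, and using $\norm{\mW\mH+b^\star\vone\vone^\top-\mY}{F}^2=n\norm{\mW\ol\mH-\wt\mY_0}{F}^2$ and $\norm{\mH}{F}^2=n\norm{\ol\mH}{F}^2$, collapses the objective to
\begin{align*}
\frac{1}{2K}\norm{\mW\ol\mH-\wt\mY_0}{F}^2+\frac{\lambdaW}{2}\norm{\mW}{F}^2+\frac{n\lambdaH}{2}\norm{\ol\mH}{F}^2,\qquad \wt\mY_0:=\mId_K-b^\star\vone_K\vone_K^\top.
\end{align*}
Multiplying by $K$ yields precisely the form of \eqref{eq:global-fact-nuclear} with effective weights $\lambdaW'=K\lambdaW$ and $\lambdaH'=N\lambdaH$, so the effective threshold is $\sqrt{\lambdaW'\lambdaH'}=\sqrt{KN\lambdaW\lambdaH}$.

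Next I would apply \Cref{lem:global-fact-nuclear} to this reduced problem. Since $\wt\mY_0\vone_K=(1-Kb^\star)\vone_K$ and $\wt\mY_0 v=v$ for $v\perp\vone_K$, the target is symmetric PSD (using $b^\star\le\frac1K$) with eigenvalue $1$ of multiplicity $K-1$ and eigenvalue $1-Kb^\star\ge 0$ along $\vone_K$. \Cref{lem:global-fact-nuclear} then gives, at any global minimizer, $\ol\mZ^\star:=\mW^\star\ol\mH^\star=\sum_{i=1}^{\min(d,K)}[\mu_i-\sqrt{KN\lambdaW\lambdaH}]_+\,\vu_i\vu_i^\top$, a symmetric PSD matrix (here $\{(\mu_i,\vu_i)\}$ are the eigenpairs of $\wt\mY_0$ in nonincreasing order). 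If $\lambdaW\lambdaH\ge\frac1{NK}$ the threshold is $\ge1\ge\mu_i$, so $\ol\mZ^\star=\vzero$ and \Cref{lem:global-fact-nuclear} forces $\mW^\star=\vzero,\ \mH^\star=\vzero$. If $\lambdaW\lambdaH<\frac1{NK}$, the $K-1$ unit eigenvalues survive with value $1-\sqrt{KN\lambdaW\lambdaH}$, while the $\vone_K$-eigenvalue survives iff $1-Kb^\star>\sqrt{KN\lambdaW\lambdaH}=K\sqrt{n\lambdaW\lambdaH}$, i.e. iff $b^\star<\frac1K-\sqrt{n\lambdaW\lambdaH}$. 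Reading off the retained components gives $\ol\mZ^\star\propto\calP_d(\mId-\frac1K\vone\vone^\top)$ for $d<K-1$, $\ol\mZ^\star\propto\mId-\frac1K\vone\vone^\top$ for $d=K-1$ and for $d\ge K$ with $b^\star\ge\frac1K-\sqrt{n\lambdaW\lambdaH}$, and (by collecting the $\mId$ and $\vone\vone^\top$ contributions) $\ol\mZ^\star\propto\mId-\frac{b^\star}{1-K\sqrt{n\lambdaW\lambdaH}}\vone\vone^\top$ in the remaining case.

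Finally I would convert these statements about $\ol\mZ^\star$ into the claims about $\mW^\star\mW^{\star\top}$ and the exact self-duality, which is the delicate part. \Cref{lem:global-fact-nuclear} supplies the balance relation, which after cancelling the $K$-scaling reads $\lambdaW\mW^{\star\top}\mW^\star=n\lambdaH\,\ol\mH^\star\ol\mH^{\star\top}$. Multiplying $\ol\mZ^\star=\mW^\star\ol\mH^\star$ by its transpose and substituting balance gives $\ol\mZ^\star\ol\mZ^{\star\top}=\frac{\lambdaW}{n\lambdaH}(\mW^\star\mW^{\star\top})^2$; since both $\ol\mZ^\star$ and $\mW^\star\mW^{\star\top}$ are symmetric PSD, taking PSD square roots yields $\mW^\star\mW^{\star\top}=\sqrt{\tfrac{n\lambdaH}{\lambdaW}}\,\ol\mZ^\star$, which transfers all four proportionalities above to $\mW^\star\mW^{\star\top}$. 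To pin down $\ol\mH^\star$ itself, write $\ol\mH^\star=\sqrt{\tfrac{\lambdaW}{n\lambdaH}}\mW^{\star\top}+\mE$; the identity $\mW^\star\ol\mH^\star=\sqrt{\tfrac{\lambdaW}{n\lambdaH}}\mW^\star\mW^{\star\top}$ forces $\mW^\star\mE=\vzero$, and feeding this back into the balance relation and taking a trace (using $\trace(\mE\mW^\star)=\trace(\mW^\star\mE)=0$ to kill the cross terms) gives $n\lambdaH\norm{\mE}{F}^2=0$, hence $\mE=\vzero$; combined with \NC1 this is exactly $\vh_{k,i}^\star=\sqrt{\lambdaW/(n\lambdaH)}\,\vw^{\star k}$. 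I expect the main obstacle to be precisely this last step: the factorization $\mW^\star\ol\mH^\star=\ol\mZ^\star$ is determined only up to an invertible change of factors, so exact self-duality cannot be read off from the product and must be forced out of the balance condition, and it is the symmetry and positive semidefiniteness of $\ol\mZ^\star$ (inherited from $\wt\mY_0$) that make the square-root and trace arguments valid. A secondary subtlety is the $(K-1)$-fold degeneracy of the unit eigenvalue, which renders $\calP_d$ non-unique for $d<K-1$ and is why the conclusion there is stated only up to the choice of rank-$d$ principal subspace.
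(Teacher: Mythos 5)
Your proof is correct, and it reaches the same conclusions by a genuinely different route than the paper. The paper never isolates variability collapse (\NC1) as a separate step: it builds the SVD of the full $K\times nK$ matrix $\mY-b^\star\vone\vone^\top$ explicitly (eigendecomposition of $\mId_K-b^\star\vone_K\vone_K^\top$, with right singular vectors obtained by repeating and rescaling its rows, and \Cref{lem:spectral-Ytilde} handling the spectrum), applies \Cref{lem:global-fact-nuclear} to the full-size problem, and then reads off collapse, the self-duality constant $\sqrt{\lambdaW/(n\lambdaH)}$, and all four forms of $\mW^\star\mW^{\star\top}$ directly from the explicit factored form of the minimizers, $\mW^\star\propto \mU(:,1:r)\mR$ and $\mH^\star\propto \mR^\top\mV(:,1:r)^\top$ for an orthonormal $\mR$, which comes from inside the proof of that lemma. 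You instead (i) prove collapse first by a Jensen/strict-convexity argument, (ii) apply \Cref{lem:global-fact-nuclear} to the reduced $K$-column class-mean problem, where the target $\mId_K-b^\star\vone_K\vone_K^\top$ is symmetric PSD so its SVD is its eigendecomposition (and your thresholding condition $1>\sqrt{KN\lambdaW\lambdaH}$ is exactly the paper's $\sqrt{n}>N\sqrt{\lambdaW\lambdaH}$ after dividing by $\sqrt{n}$), and (iii) recover the factor-level claims from the product $\ol\mZ^\star$ alone via the balance identity $\lambdaW\mW^{\star\top}\mW^\star=n\lambdaH\ol\mH^\star\ol\mH^{\star\top}$, uniqueness of PSD square roots, and the trace argument forcing $\mE=\vzero$; all three steps are sound. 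Your route buys modularity: it needs only the \emph{statement} of \Cref{lem:global-fact-nuclear} plus the balance relation (a two-line consequence of first-order optimality), avoids the repeated-column SVD bookkeeping and any analogue of \Cref{lem:spectral-Ytilde}, and makes explicit why self-duality does not follow from the product alone. The paper's route buys brevity: once the explicit factorization of minimizers is in hand, collapse and self-duality are immediate, with no need for your square-root and trace arguments. One small polish point on your write-up: in the degenerate case $\lambdaW\lambdaH\ge\frac{1}{NK}$, the product form only gives $\mW^\star\ol\mH^\star=\vzero$; to conclude $\mW^\star=\vzero$ and $\mH^\star=\vzero$ you should add the one-line observation that nonzero factors with zero product incur strictly positive regularization and are therefore beaten by $(\vzero,\vzero)$.
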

\begin{proof}[Proof of \Cref{lem:global-one-hot-nuclear}] 
For convenience, let $\vone_{K\times L}$ represents an all-ones matrix of size $K\times L$.
Since $\mY - b^\star \vone_{K\times nK}$ contains many repeated columns, we first consider $\ol \mY = \mId_K - b^\star \vone_{K\times K}$ that contains the non-repeated columns of $\mY - b^\star \vone_{K\times nK}$. Let $\ol \mY = \mU\overline\mSigma \mU^\top$ be the eigenvalue decomposition, where $\mU\in\R^{K\times K}$ is an orthonormal matrix and  $\ol\mSigma\in\R^{K\times K}$ is a diagonal matrix with eigenvalues $\ol\sigma_1 \ge \cdots \ol\sigma_K $ along the diagonals. Since $b^\star\le \frac{1}{K}$, the eigenvalues are given by $\ol\sigma_1 = \cdots = \ol\sigma_{K-1} = 1 \ge \ol\sigma_{K} = 1 - b^\star K$, and the eigenvector corresponding to $\ol \sigma_K$ is $\vu_K = \frac{1}{\sqrt{K}}\vone$, which implies that $[\mU]_{K-1}[\mU]_{K-1}^\top = \mId - \frac{1}{K} \mb 1_K \mb 1_K^\top$, where $[\mU]_{r}$ means a $K\times r$ submtraix of $\mU$ by taking the first $r$ columns.

Let $\mSigma = \sqrt{n}\ol\mSigma$ and $\mV^\top = \frac{1}{\sqrt{n}}\begin{bmatrix}\vu^1 & \cdots & \vu^1 & \vu^2 & \cdots & \vu^K \end{bmatrix} \in \R^{K\times nK}$ that repeats the rescaled version of the column of $\mU$ $n$ times so that $\mV^\top\mV = \mU^\top\mU = \mId$. By noting the relation between $\mY - b^\star \vone_{K\times nK}$ and $\ol \mY$, we know $\mU\mSigma\mV^\top$ is the SVD of $\mY - b^\star \vone_{K\times nK}$. When $\lambdaW\lambdaH \ge \frac{1}{NK}$, by applying \Cref{lem:global-fact-nuclear} and \Cref{lem:spectral-Ytilde}, we conclude that $\mW^\star = \vzero$ and $\mH^\star = \vzero$ since $\sqrt{n} - N\sqrt{\lambdaW\lambdaH}\le 0$. We now assume $\lambdaW\lambdaH < \frac{1}{NK}$ and utilize \Cref{lem:global-fact-nuclear} and \Cref{lem:spectral-Ytilde} again for the following cases:
\begin{enumerate}
\item $d < K-1$: In this case, we have
 \begin{align*}
&\mW^\star = \sqrt\frac{\lambda_{\mH}}{\lambda_{\mW}}\paren{\sqrt{n} - N\sqrt{\lambdaW\lambdaH}}^{1/2}\mU(:,1:d)\mR, \\
&\mH^\star = \sqrt\frac{\lambda_{\mW}}{\lambda_{\mH}}\paren{\sqrt{n} - N\sqrt{\lambdaW\lambdaH}}^{1/2}  \mR^\top \mV(:,1:d)^\top, \forall \mR\in\R^{d\times d}, \mR^\top \mR = \mId. 
\end{align*}
Thus, $\vh_{k,i}^\star \;=\;  \sqrt{ \frac{ \lambda_{\mb W}  }{ \lambda_{\mb H} n } } \vw^{\star k}$ and
$\mW^\star\mW^{\star\top}\sim \mU(:,1:d) \mU(:,1:d)^\top = \calP_d(\mId - \frac{1}{K} \mb 1_K \mb 1_K^\top)$.
    \item $d = K-1$: In this case, we have
    \begin{align*}
&\mW^\star = \sqrt\frac{\lambda_{\mH}}{\lambda_{\mW}}\paren{\sqrt{n} - N\sqrt{\lambdaW\lambdaH}}^{1/2}\mU(:,1:K-1)\mR, \\
&\mH^\star = \sqrt\frac{\lambda_{\mW}}{\lambda_{\mH}}\paren{\sqrt{n} - N\sqrt{\lambdaW\lambdaH}}^{1/2}  \mR^\top \mV(:,1:K-1)^\top, \forall \mR\in\R^{(K-1)\times (K-1)}, \mR^\top \mR = \mId. 
\end{align*}
Thus, $\vh_{k,i}^\star \;=\;  \sqrt{ \frac{ \lambda_{\mb W}  }{ \lambda_{\mb H} n } } \vw^{\star k}$ and $
\mW^{\star} \mW^{\star\top} \sim [\mU]_{K-1}[\mU]_{K-1}^\top = \mb I_K - \frac{1}{K} \mb 1_K \mb 1_K^\top$. 
\item $d = K$: In this case, we have
\begin{align*}
&\mW^\star = \sqrt\frac{\lambda_{\mH}}{\lambda_{\mW}}\mU\brac{ \mSigma - N\sqrt{\lambdaW\lambdaH}  }_{+}^{1/2} \mR, \\
&\mH^\star = \sqrt\frac{\lambda_{\mW}}{\lambda_{\mH}}\mR^\top \brac{ \mSigma - N\sqrt{\lambdaW\lambdaH}  }_{+}^{1/2}  \mV^\top, \ \forall \mR\in\R^{K\times K}, \mR^\top \mR = \mId  
\end{align*}
Thus, $\vh_{k,i}^\star \;=\;  \sqrt{ \frac{ \lambda_{\mb W}  }{ \lambda_{\mb H} n } } \vw^{\star k}$. Moreover, if $N\sqrt{\lambdaW\lambdaH}\ge \sqrt{n}(1-b^\star K)$, i.e., $b^\star \ge \frac{1}{K} - \sqrt{n\lambdaW\lambdaH}$, then  $
\mW^{\star} \mW^{\star\top} \sim [\mU]_{K-1}[\mU]_{K-1}^\top =  \mb I_K - \frac{1}{K} \mb 1_K \mb 1_K^\top$. On the other hand, if $b^\star < \frac{1}{K} - \sqrt{n\lambdaW\lambdaH}$, then
\begin{align*}
\mW^{\star} \mW^{\star\top} &\sim \mU\mSigma\mU^\top - K\sqrt{n\lambdaW\lambdaH}\mU\mU^\top =  \ol\sigma_{K} = \mId - b^\star\mb 1_K \mb 1_K^\top - N\sqrt{\lambdaW\lambdaH}\mId \\
& = (1 - K\sqrt{n\lambdaW\lambdaH})\mId - b^\star\mb 1_K \mb 1_K^\top
\sim \mId - \frac{b^\star}{1 - K\sqrt{n\lambdaW\lambdaH}} \vone_K \vone_K^\top.
\end{align*}

\item $d > K$: In this case, we have
\begin{align*}
&\mW^\star = \sqrt\frac{\lambda_{\mH}}{\lambda_{\mW}} \begin{bmatrix}\mU\brac{ \mSigma - N\sqrt{\lambdaW\lambdaH}  }_{+}^{1/2}  & \vzero \end{bmatrix}\mR, \\ 
&\mH^\star = \sqrt\frac{\lambda_{\mW}}{\lambda_{\mH}}\mR^\top \begin{bmatrix}\brac{ \mSigma - N\sqrt{\lambdaW\lambdaH}  }_{+}^{1/2}  \mV^\top \\
\vzero 
\end{bmatrix},  \ \forall \mR\in\R^{d\times d}, \mR^\top \mR = \mId. 
\end{align*}
One can verify that $(\mW^\star,\mH^\star)$ satisfies the same properties as in the case of $d = K$.
\end{enumerate}
\end{proof}

\section{Visualizations of Optimization Landscapes in \Cref{subsec:landscape-rescaling}}
\label{sec:appendix-visualization}

\subsection{Details of the Visualization Technique}

We provide the technical details on how the visualization in \Cref{subsec:landscape-rescaling} is obtained. 

The following result expresses the output of the classifier layer for a feature vector $\bm h$ as a function of the norm of $\bm h$ and its angle to a classifier weight vector $\bm w^k$.

% The visualization of the optimization landscape w.r.t. feature vectors, provided in \Cref{subsec:landscape-rescaling}, is obtained with the following result. 

\begin{proposition}\label{thm:visualization}

Given any $d \ge K-1 > 1$, take the classifier weights $\bm W, \bm b$ to be such that $\bm W$ is an arbitrary $K$-Simplex ETF (see Definition~\ref{def:simplex-ETF}) and $\bm b = \bm 0$. Take any $k, k' \in \{1, \ldots, K\}$, and consider a vector $\mb h$ on the two-dimensional plane $\text{span}\{\bm w^k, \bm w^{k'}\}$ parameterized in the polar coordinate system with polar axis being $\mb w^k$.
Denote $s$ and $\theta$ the radial and angular (in radians) coordinates of $\bm h$, respectively 
(positive angular direction of the polar coordinate system is taken so that $\mb w^{k'}$'s angular coordinate is in $(0, \pi)$). 
We have
\begin{itemize}[leftmargin=*]
    \item The feature $\bm h$ can be expressed as a linear combination of $\bm w^{k}$ and $\bm w^{k'}$:
    \begin{equation}
        \bm h = s \Big( \frac{\sin\theta}{\sqrt{K^2 - 2K}} + \cos\theta \Big) \bm w^{k} + s (K-1) \frac{\sin \theta}{\sqrt{K^2 - 2K}} \bm w^{k'};
    \end{equation}
    \item The output of the classifier layer $(\bm W, \bm b)$ is given by
    \begin{equation}
    \label{eq:visualization-classifier-output}
        \langle \bm w^{k''}, \bm h \rangle +  b_{k''} = 
        \begin{cases} 
             s \cos \theta, & \text{if}~k'' = k;\\
             s \frac{\sqrt{K^2 - 2K}}{K-1} \sin \theta - \frac{s}{K-1} \cos \theta, & \text{if}~k'' = k';\\
             -s \sqrt{\frac{K}{K-2}} \frac{1}{K-1} \sin \theta - \frac{s}{K-1} \cos \theta, & \text{otherwise}.
        \end{cases}
    \end{equation}
    Note that \eqref{eq:visualization-classifier-output} is invariant to the arbitrary rotation in K-Simplex ETF. 
\end{itemize}

\end{proposition}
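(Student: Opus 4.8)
The plan is to reduce the entire statement to the elementary inner-product structure of a Simplex ETF and then work in a two-dimensional orthonormal frame adapted to the plane $\text{span}\{\bm w^k,\bm w^{k'}\}$. First I would record the two facts about the ETF rows $\bm w^1,\dots,\bm w^K$ that follow directly from Definition~\ref{def:simplex-ETF}: each has unit norm, $\|\bm w^k\|_2 = 1$, and any two distinct rows satisfy $\langle \bm w^k,\bm w^{k'}\rangle = -\tfrac{1}{K-1}$. These come from computing the Gram matrix $\bm M^\top \bm M = \tfrac{K}{K-1}(\bm I_K - \tfrac1K \bm 1_K\bm 1_K^\top)$, using that $\bm I_K - \tfrac1K\bm 1_K\bm 1_K^\top$ is an idempotent projection and $\bm P^\top\bm P = \bm I_K$. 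Crucially, every quantity appearing in the proposition depends on $\bm W$ only through these inner products, which are invariant under a common orthogonal transformation of all the $\bm w^k$; this is exactly the rotational invariance asserted at the end, and it lets me fix any convenient realization of the ETF.

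Next I would build the orthonormal basis $\{\bm e_1,\bm e_2\}$ of the plane with $\bm e_1 = \bm w^k$ (the polar axis) and $\bm e_2$ the unit vector orthogonal to $\bm e_1$ in the plane, oriented so that $\bm w^{k'}$ has positive $\bm e_2$-component. Writing $\phi$ for the angle between $\bm w^k$ and $\bm w^{k'}$, the inner-product facts give $\cos\phi = -\tfrac{1}{K-1}$ and hence $\sin\phi = \tfrac{\sqrt{K^2-2K}}{K-1}$, so that $\bm w^{k'} = \cos\phi\,\bm e_1 + \sin\phi\,\bm e_2$ and therefore $\bm e_2 = \tfrac{K-1}{\sqrt{K^2-2K}}\bigl(\bm w^{k'} + \tfrac{1}{K-1}\bm w^k\bigr)$. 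Substituting the polar parameterization $\bm h = s\cos\theta\,\bm e_1 + s\sin\theta\,\bm e_2$ and collecting the coefficients of $\bm w^k$ and $\bm w^{k'}$ yields the first displayed identity; this step is pure bookkeeping once $\bm e_2$ has been expressed in terms of $\bm w^k$ and $\bm w^{k'}$.

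For the classifier output, since $\bm b = \bm 0$ I only need $\langle \bm w^{k''},\bm h\rangle = s\cos\theta\,\langle \bm w^{k''},\bm e_1\rangle + s\sin\theta\,\langle \bm w^{k''},\bm e_2\rangle$, so it suffices to compute the two projections $\langle \bm w^{k''},\bm e_1\rangle$ and $\langle \bm w^{k''},\bm e_2\rangle$ in each of the three cases $k''=k$, $k''=k'$, and $k''\notin\{k,k'\}$. The cases $k''=k$ and $k''=k'$ are immediate from $\bm e_1 = \bm w^k$ and $\bm w^{k'} = \cos\phi\,\bm e_1 + \sin\phi\,\bm e_2$. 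For the third case I would substitute $\langle \bm w^{k''},\bm e_2\rangle = \tfrac{K-1}{\sqrt{K^2-2K}}\bigl(\langle\bm w^{k''},\bm w^{k'}\rangle + \tfrac{1}{K-1}\langle \bm w^{k''},\bm w^k\rangle\bigr)$ with both inner products equal to $-\tfrac{1}{K-1}$, obtaining $\langle \bm w^{k''},\bm e_2\rangle = -\tfrac{1}{K-1}\sqrt{\tfrac{K}{K-2}}$ after the simplification $\sqrt{K^2-2K}=\sqrt{K}\sqrt{K-2}$. Plugging these projections back into the displayed inner product reproduces the three branches of \eqref{eq:visualization-classifier-output}.

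There is no conceptual obstacle here: the argument is entirely a matter of carefully carrying out the trigonometric and algebraic simplifications. The one place to be careful is the factorization $\sqrt{K^2-2K} = \sqrt{K(K-2)}$, which is what forces the hypothesis $K-1 > 1$ (equivalently $K>2$), together with the orientation convention that fixes $\bm e_2$ so that $\bm w^{k'}$'s angular coordinate lies in $(0,\pi)$. Since $s$ and $\theta$ are defined purely relative to the plane and the polar axis $\bm w^k$, the final formula depends only on $s,\theta,K$ and the rotation-invariant ETF inner products, which gives the concluding invariance remark at no additional cost.
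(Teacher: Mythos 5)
Your proof is correct, and it is exactly the ``simple algebra'' the paper alludes to: the paper omits the proof of this proposition entirely, so your computation (reduce everything to the ETF Gram matrix, build the orthonormal frame $\{\bm w^k, \bm e_2\}$ in the plane, and project each $\bm w^{k''}$ onto it) supplies the intended argument, and all three branches of the classifier-output formula check out. One small imprecision: the identity $\bm P^\top \bm P = \bm I_K$ you invoke holds only for $d \ge K$; under the paper's Definition~\ref{def:simplex-ETF} with $d = K-1$ one instead has $\bm P^\top \bm P = \bm I_K - \frac{1}{K}\bm 1_K \bm 1_K^\top$, but since that matrix is idempotent the Gram computation still yields $\frac{K}{K-1}\bigl(\bm I_K - \frac{1}{K}\bm 1_K \bm 1_K^\top\bigr)$, so the facts you actually use (unit norms and pairwise inner products $-\frac{1}{K-1}$) hold in both cases and your argument goes through unchanged.
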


We omit the proof to Proposition~\ref{thm:visualization} as it can be obtained via simple algebra.

Based on Proposition~\ref{thm:visualization}, we can obtain the (rescaled) MSE and CE losses as a function of $(s, \theta)$. 
Assuming that $\bm h$ belongs to class $k$, the rescaled MSE loss defined in \eqref{eq:obj-rescaled} w.r.t. $\bm h$ is given by
\begin{equation}
    \text{Loss}_\text{MSE}(\bm h; \alpha, M) = \frac{\alpha}{2} \Big( \langle \bm w^{k}, \bm h \rangle +  b_{k} - M \Big) ^2 + \frac{1}{2}\sum_{k''\ne k}\Big( \langle \bm w^{k''}, \bm h \rangle +  b_{k''} - 1 \Big) ^2,
\end{equation}
where $\alpha, M$ are rescaling parameters.
Plugging in the results in \eqref{eq:visualization-classifier-output}, we obtain
\begin{multline}\label{eq:landscape-rMSE}
    \text{Loss}_\text{MSE}(s, \theta; \alpha, M) = \frac{\alpha}{2} \cdot\left(s\cos\theta-M\right)^{2\ } + \frac{s^{2}}{2} \cdot\left(\frac{\sqrt{K^{2}-2K}\sin\theta - \cos\theta}{K-1}\right)^{2} 
    \\+ \frac{s^{2}}{2}\cdot\left(K-2\right)\cdot\left(\frac{\sqrt{\frac{K}{K-2}}\sin\theta + \cos \theta}{K-1}\right)^{2}.
\end{multline}
Similarly, we may obtain the CE loss as
\begin{equation}\label{eq:landscape-CE}
    \text{Loss}_\text{CE}(s, \theta) = -\log\left(\frac{e^{s\cos\theta}}{e^{s\cos\theta}+e^{s\frac{\sqrt{K^{2}-2K}\sin\theta - \cos\theta}{K-1}}+\left(K-2\right)e^{-s\frac{\sqrt{\frac{K}{K-2}}\sin\theta + \cos \theta}{K-1}}}\right).
\end{equation}
\Cref{fig:visualization} is obtained by plotting the loss functions in \eqref{eq:landscape-rMSE} and \eqref{eq:landscape-CE}. 

\subsection{Visualization of the Gradient Vector Field}

% Complementary to the graphical illustration of the optimization landscape in \Cref{subsec:landscape-rescaling}, we provide an analytical understanding on how the choice of loss affects the optimization.

We consider the regime of $K \to \infty$ in which the rescaled MSE loss \eqref{eq:landscape-rMSE} becomes
\begin{equation}
    \lim_{K \to \infty} \text{Loss}_\text{MSE}(s, \theta; \alpha, M) = \frac{\alpha}{2}(s \cos \theta - M)^2 + \frac{1}{2}s^2 \sin^2 \theta.
\end{equation}
Taking the derivative w.r.t. $s$ and $\theta$, we obtain
\begin{equation}\label{eq:gradient-MSE}
    \begin{split}
        \frac{\partial}{\partial s}\lim_{K \to \infty} \text{Loss}_\text{MSE}(s, \theta; \alpha, M) &= s + (\alpha - 1) s \cos^2\theta - \alpha M \cos \theta,\\
        \frac{\partial}{\partial \theta}\lim_{K \to \infty} \text{Loss}_\text{MSE}(s, \theta; \alpha, M) &= \alpha M s \sin\theta - (\alpha-1) s^2 \sin\theta \cos\theta. 
    \end{split}
\end{equation}
Similarly, we may obtain the gradient for CE as
\begin{equation}\label{eq:gradient-CE}
    \begin{split}
        \frac{\partial}{\partial s}\lim_{K \to \infty} \text{Loss}_\text{CE}(s, \theta; \alpha, M) &= \frac{e^{\sin\theta}(\sin\theta - \cos\theta)}{e^{\sin\theta} + e^{\cos\theta}},\\
        \frac{\partial}{\partial \theta}\lim_{K \to \infty} \text{Loss}_\text{CE}(s, \theta; \alpha, M) &= \frac{s e^{\sin\theta}(\sin\theta + \cos\theta)}{e^{\sin\theta} + e^{\cos\theta}}. 
    \end{split}
\end{equation}
In \Cref{fig:gradient-visualization}, we visualize the gradient of MSE (in \eqref{eq:gradient-MSE}) and CE (in \eqref{eq:gradient-CE}) losses by plotting their gradient vector fields. 
It shows that rescaling of the MSE loss by either increasing $M$ or increasing $\alpha$ helps to align the gradient along the direction of minimizing $\theta$.
Recall that $\theta$ determines the classifier's prediction of the class membership for $\bm h$ while $s$ is irrelevant.

When restricting our attention to a feature $\bm h$ with $\theta = \frac{\pi}{2}$, the gradient w.r.t. $s$ and $\theta$ becomes $s$ and $\alpha M s$, respectively. 
Here, increasing the rescaling parameters $\alpha$ or $M$ in the range of $(1, \infty)$ has the effect of increasing the component of the gradient along the $\theta$ direction while keeping the component along the $s$ direction fixed.

 \begin{figure}[t]
     \centering
     \subfloat[Vanilla MSE ($\alpha=1, M = 1$)\label{fig:gradient-mse}]{
         \begin{tikzpicture}
             \begin{axis}[
                 xmin = 0, xmax = 5,
                 ymin = 0, ymax = 3.14,
                 zmin = 0, zmax = 1,
                 axis equal image,
                 view = {0}{90},
                 xlabel={$s$},
                 ylabel={$\theta$},
                 scale=0.8,
             ]
             \def\M{1}
             \def\a{1}
             \addplot3[
                 quiver = {
                     u = {-x - (\a-1) * x * cos(deg(y)) * cos(deg(y)) + \M * cos(deg(y))},
                     v = {-x * \a * \M * sin(deg(y)) + (\a-1) * x^2 * sin(deg(y)) * cos(deg(y))},
                     scale arrows = 0.04,
                 },
                 -stealth,
                 domain = 0:5,
                 domain y = 0:3.14,
             ] {0};
             \end{axis}
         \end{tikzpicture}   
     } 
     ~
     \subfloat[Cross Entropy \label{fig:gradient-ce}]{
         \begin{tikzpicture}
             \begin{axis}[
                 xmin = 0, xmax = 5,
                 ymin = 0, ymax = 3.14,
                 zmin = 0, zmax = 1,
                 axis equal image,
                 view = {0}{90},
                 xlabel={$s$},
                 ylabel={$\theta$},
                 scale=0.8,
             ]
             \def\M{1}
             \def\a{5}
             \addplot3[
                 quiver = {
                     v = {-1 * x * 2.7^sin(deg(y)) * (sin(deg(y)) + cos(deg(y))) / (2.7^cos(deg(y)) + 2.7^sin(deg(y))) },
                     u = {-1 * 2.7^sin(deg(y)) * (sin(deg(y)) - cos(deg(y))) / (2.7^cos(deg(y)) + 2.7^sin(deg(y)))},
                     scale arrows = 0.08,
                 },
                 -stealth,
                 domain = 0:5,
                 domain y = 0:3.14,
             ] {0};
             \end{axis}
         \end{tikzpicture}
     } 
     \\\vspace{1em}
     \subfloat[Rescaled MSE ($\alpha=5, M = 1$) \label{fig:gradient-rmse-alpha}]{
         \begin{tikzpicture}
             \begin{axis}[
                 xmin = 0, xmax = 5,
                 ymin = 0, ymax = 3.14,
                 zmin = 0, zmax = 1,
                 axis equal image,
                 view = {0}{90},
                 xlabel={$s$},
                 ylabel={$\theta$},
                 scale=0.8,
             ]
             \def\M{1}
             \def\a{5}
             \addplot3[
                 quiver = {
                     u = {-x - (\a-1) * x * cos(deg(y)) * cos(deg(y)) + \M * cos(deg(y))},
                     v = {-x * \a * \M * sin(deg(y)) + (\a-1) * x^2 * sin(deg(y)) * cos(deg(y))},
                     scale arrows = 0.01,
                 },
                 -stealth,
                 domain = 0:5,
                 domain y = 0:3.14,
             ] {0};
             \end{axis}
         \end{tikzpicture}
     } 
     ~
     \subfloat[Rescaled MSE ($\alpha=1, M = 5$)  \label{fig:gradient-rmse-M}]{
         \begin{tikzpicture}
             \begin{axis}[
                 xmin = 0, xmax = 5,
                 ymin = 0, ymax = 3.14,
                 zmin = 0, zmax = 1,
                 axis equal image,
                 view = {0}{90},
                 xlabel={$s$},
                 ylabel={$\theta$},
                 scale=0.8,
             ]
             \def\M{5}
             \def\a{1}
             \addplot3[
                 quiver = {
                     u = {-x - (\a-1) * x * cos(deg(y)) * cos(deg(y)) + \M * cos(deg(y))},
                     v = {-x * \a * \M * sin(deg(y)) + (\a-1) * x^2 * sin(deg(y)) * cos(deg(y))},
                     scale arrows = 0.01,
                 },
                 -stealth,
                 domain = 0:5,
                 domain y = 0:3.14,
            ] {0};
             \end{axis}
         \end{tikzpicture}
     } 
     \caption{\textbf{Visualization of the gradient vector fields with different losses.} We fix $\mb W$ as a simplex ETF and illustrate the landscape only w.r.t. a feature $\mb h_{k, i}$. For each plot, the $s$-axis denotes $\norm{\mb h_{k, i}}{2}$, and the $\theta$-axis denotes the angle $\arccos\paren{ \innerprod{\mb h_{k, i}}{\mb w^k} }$. The arrows point to gradient descent directions with length proportional to the gradient norm. 
     }
     \label{fig:gradient-visualization}
\end{figure}
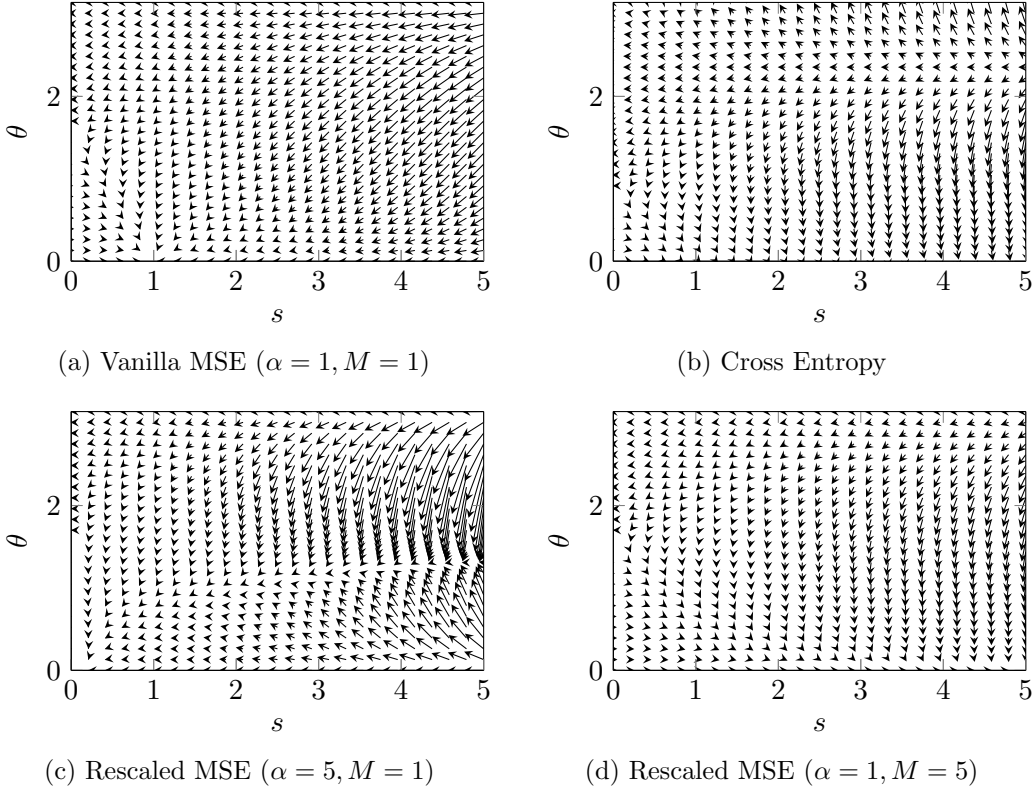

%%%%%%%%%%%%%%%%%%%%%%%%%%%%%%%%%%%%%%%%%%%%%%%%%%%%%%%%%%%%%%%%%%%%%%%%%%%%%%%
%%%%%%%%%%%%%%%%%%%%%%%%%%%%%%%%%%%%%%%%%%%%%%%%%%%%%%%%%%%%%%%%%%%%%%%%%%%%%%%

\end{document}